\documentclass[11pt,reqno]{amsart}
\usepackage[a4paper, margin=1in]{geometry} 
\usepackage{amsmath,amsfonts,amssymb,mathtools,tensor}
\usepackage[round,authoryear]{natbib}
\usepackage{graphicx}
\usepackage{algorithm}
\usepackage{algpseudocode}
\usepackage{xcolor}
\usepackage{url}
\usepackage[pdfusetitle,hidelinks]{hyperref}
\mathtoolsset{showonlyrefs}

\newcommand{\epscritic}{\varepsilon_{\mathrm{crit}}}
\newcommand{\target}{\mathrm{target}}
\DeclareMathOperator*{\argmin}{arg\,min}
\DeclareMathOperator{\KL}{KL}
\DeclareMathOperator{\op}{op}

\theoremstyle{plain}
\newtheorem{lemma}{Lemma}
\newtheorem{theorem}{Theorem}

\newtheorem{corollary}{Corollary}
\theoremstyle{remark}
\newtheorem{remark}{Remark}
\theoremstyle{definition}

\newtheorem{assumption}{Assumption}
\newtheorem{definition}{Definition}

\usepackage{afterpage}
\makeatletter
\def\@setaffilfootnotes{\begingroup
  \def\author##1{}%
  \def\\{\unskip, \ignorespaces}%
  \def\curraddr##1##2{}\def\urladdr##1##2{}\def\email##1##2{}%
  \def\address##1##2{\@footnotetext{%
    \@ifnotempty{##1}{\textsuperscript{##1}\space}\textsc{##2}}}%
  \addresses
  \endgroup}
\def\emailsname{{\itshape E-mail addresses}}
\newif\if@firstemail
\def\@setemailfootnote{\begingroup
  \@firstemailtrue
  \def\author##1{}%
  \def\\{\unskip, \ignorespaces}%
  \def\address##1##2{}\def\curraddr##1##2{}\def\urladdr##1##2{}%
  \def\email##1##2{\@ifnotempty{##2}{%
    \if@firstemail \@firstemailfalse \emailsname\/:\space \else, \fi
    \texttt{##2}}}%
  \addresses
  \if@firstemail\else\@addpunct.\fi
  \endgroup}
\def\@adminfootnotes{%
  \let\@makefnmark\relax  \let\@thefnmark\relax
  \ifx\@empty\addresses\else
    \@setaffilfootnotes
    \@footnotetext{\@setemailfootnote}%
  \fi}
\AtBeginDocument{%
  \expandafter\gdef\expandafter\@savedkeywords\expandafter{\@keywords}}
\def\@setsavedkeywords{{\itshape \keywordsname.}\enspace
  \@savedkeywords\@addpunct.}
\def\@keywordsfootnote{\afterpage{\begingroup
  \let\@makefnmark\relax \let\@thefnmark\relax
  \@footnotetext{\@setsavedkeywords}\endgroup}}
\AtBeginDocument{}
\def\enddoc@text{\ifx\@empty\@translators \else\@settranslators \fi}
\makeatother

\title{An analysis of Mirror-Descent Soft Actor-Critic}

\author{Denis Zorba\textsuperscript{1}}
\address[1]{School of Mathematics, University of Edinburgh, United Kingdom}
\email{e.zorba@sms.ed.ac.uk}

\author{Michal Valko\textsuperscript{2}}
\address[2]{INRIA, Paris, France}
\email{michal.valko@inria.fr}

\keywords{Soft actor-critic, policy mirror descent, entropy regularisation,
  actor-critic, reinforcement learning}

\date{}

\begin{document}

\begin{abstract}
Soft Actor-Critic (SAC) is widely used for entropy-regularised reinforcement learning with continuous action spaces, and practical implementations perform only a few actor steps towards an evolving target. In this work, we prove convergence guarantees when the target policy arises from policy mirror descent and compare it with the classical Gibbs target. We derive sufficient conditions for the strong convexity and smoothness of the actor objective, characterised by the curvature of the $Q$-function estimate through the Legendre differential operator, and establish an $\mathcal{O}\!\left(N^{-\frac{1}{5}}\right)$ best-iterate finite-time convergence rate up to actor and critic approximation errors. Moreover, the mirror-descent step size $\lambda$ directly controls the target drift and hence actor tracking error, whereas the analogous Gibbs bound contains a non-vanishing tracking term.
\end{abstract}

\maketitle
\makeatletter\@keywordsfootnote\makeatother

\section{Introduction}
In reinforcement learning (RL), an agent seeks to learn an optimal policy that minimises its expected cumulative cost through interactions with an environment. This framework has led to remarkable successes across a range of challenging domains, including robotics, autonomous control and games \citep{levine2016end,lillicrap2015continuous,mnih2015human}. Broadly speaking, methods to tackle RL problems may be divided into two families: model-based approaches, which explicitly learn or exploit a model of the environment dynamics \citep{sutton_book,levine2013guided}, and model-free approaches, which seek to learn a policy or value function directly from observed transitions \citep{sutton_book,lillicrap2015continuous,pmlr-v80-haarnoja18b}.

In continuous action spaces, tackling model-free RL problems typically involves two learning problems. Firstly, equipped with function approximations, a critic (an estimate of the advantage or Q-function) is learned from observed transitions, commonly through some variant of temporal-difference learning \citep{sutton1988learning,watkins}, which is then in turn used to update the actor (the policy). Such actor updates may be performed directly through policy-gradient methods \citep{williams1992simple,sutton1999policy}, or through proximal methods which improve stability by controlling the change between successive policies \citep{schulman2015trust,schulman2017proximal}. A closely related idea arises from policy mirror descent \citep{lan2023policy,zhan2023policy,lan}, where a KL-divergence penalty is introduced directly into the policy-improvement problem, with the resulting update admitting a closed-form policy. However, in continuous action spaces, this target generally involves an intractable normalisation constant \citep{haarnoja2017reinforcement,pmlr-v80-haarnoja18b}.

Entropy regularisation leads to a number of theoretical and practical benefits~\citep{geist2019theory,cayci,david_leahy}. 
For example, entropy regularisation induces a unique statewise optimal policy under standard assumptions such as bounded costs \citep{geist2019theory,david_fisher} and can accelerate the convergence of policy gradient methods \citep{mei2020global,cen2022fast,lan2023policy}. It also encourages stochastic policies and persistent exploration \citep{haarnoja2017reinforcement,pmlr-v80-haarnoja18b}.

The Soft Actor-Critic (SAC) algorithm~\citep{haarnoja2018softapplications} was introduced as an efficient approach to tackling large-scale continuous model-free RL tasks with entropy regularisation, initially derived as a practical implementation of soft policy iteration \citep{pmlr-v80-haarnoja18b}. During SAC training, one performs gradient descent steps on a reverse Kullback–Leibler (KL) divergence with respect to some target policy, typically the exponential of the Q-function which in turn is motivated by the Bellman principle. Crucially, the normalisation constant is independent of the actor parameters and therefore disappears when taking gradients of the actor objective. This gives a practical policy-improvement procedure without requiring the intractable normalisation constant.

Despite the practical success of SAC, its theoretical analysis remains sparse and challenging. In practice, the actor is commonly parametrised by a squashed Gaussian distribution with a parametrised mean, that is a Gaussian followed by a $\tanh$ transformation to enforce the bounds of the action space (after scaling). Moreover, the reverse-KL projection onto the target policy is not solved exactly, but rather a single or a few steps of gradient descent are performed \citep{pmlr-v80-haarnoja18b}. Such an objective is in general non-convex, and the $\tanh$ transformation produces policy log-densities which become unbounded near the boundary of the action space. Therefore regularity hypotheses used in policy-gradient analyses can fail \citep{bedi2022hidden}, while policy-space convergence results \citep{lan2023policy,david_fisher} do not directly cover such a framework.

In this work, we provide the first theoretical guarantees for a population-level formalisation of the SAC framework, with a fixed-covariance squashed Gaussian actor, linear mean and a critic oracle, where one treats a policy arising from mirror descent as the target, as well as the classical Gibbs target. We consider the setting where only a single projected gradient step is performed towards an evolving target policy. Moreover we identify conditions on the curvature of the critic under which the reverse-KL actor objective becomes strongly convex and smooth, allowing us to control the error incurred by the actor as it tracks the changing target. 
\section{Related works}
There is a tremendous amount of research literature on RL theory, which underscores its importance. Our work relates to policy optimisation in entropy-regularised MDPs and to the analysis of actor-critic algorithms, which we summarise here.

\textbf{Entropy regularised RL.} 
Encouraging explorative policies during the training of RL policies have been demonstrated to be an important foundation of practical deep RL methods \citep{pmlr-v80-haarnoja18b, lan2023policy} which in turn establishes the importance of understanding entropy regularisation in MDPs. A particular line of work where the presence of entropy regularisation has been shown to have a meaningful contribution are policy mirror descent methods \citep{lan2023policy, lan, david_fisher}, where it was proven that the entropy regularisation can induce an exponential convergence to the optimal policy when the critic is known. More recently, it has been shown that one can maintain this exponential convergence when the critic is solved to sufficient accuracy in both the tabular case \citep{labbi2026refined} and the general case \citep{zorba2026mirrordescentactorcriticmethods}.

Having said that, such theoretical analyses typically rely on the updates existing in the policy space and using measure theoretic techniques. Moving from policy-space updates to a practical parametrisation, a line of work \citep{sherman, alfano2023novel, yuan2023linear, JMLR:v22:19-736} approximates mirror-descent updates by solving the minimisation sub-optimally under various assumptions.

\textbf{Actor-critic analysis.} Early convergence analyses use multi-timescale stochastic approximation and ODE based approaches \citep{borkar1997actor,konda}. Since then, the study of different regimes of actor critic algorithms can fall into a few categories. Firstly, a line of work studies the sample complexity of certain actor critic regimes in tabular and function-approximation settings under various assumptions in the unregularised case \citep{olshevsky2023small, kumar2024convergence, guar} and regularised case \citep{cayci_neural, labbi2026refined}. Another line of work studies the underlying convergence and stability behaviour of actor critic methods in general state and action spaces by focusing on a population level analysis \citep{zorba2026mirrordescentactorcriticmethods, stochastic_approx_application_AC, david_fisher, david_leahy}.

\section{Contributions}
\begin{itemize}

    \item We formalise the SAC policy class as the push-forward under $\tanh$ of a Gaussian distribution with parametrised mean, as standard in modern implementations of Soft Actor Critic \citep{pmlr-v80-haarnoja18b,haarnoja2018softapplications}. We study the reverse-KL objective for the actor parameters with target policies including the classical Gibbs measure \citep{pmlr-v80-haarnoja18b} and a policy arising from performing a step of policy mirror descent with step size/ penalty $\lambda$ \citep{tomar2021mirrordescentpolicyoptimization, lan2023policy}. Note that in general, such a loss function is non-convex. In this work, we identify sufficient conditions under which the strong convexity of such an objective is guaranteed on any Euclidean ball of radius $\mathrm{R} > 0$. We demonstrate that the curvature of the critic, quantified through the Legendre differential operator, plays a particular role alongside the entropy regularisation parameter $\tau > 0$.

    \item We prove global finite-time best-iterate convergence guarantees (up to approximation errors) under a $\varepsilon_{\mathrm{crit}}$-optimal critic oracle assumption. We derive bounds on the actor tracking error for both the classical Gibbs target and the mirror-descent target. We show that the mirror descent step size $\lambda $ can directly control the tracking error and provides an explicit trade off between balancing policy improvement against the actor's ability to track the target. In contrast, the corresponding bound for the classical Gibbs target contains a non-vanishing contribution determined by the regularisation parameter $\tau$. 

\end{itemize}

\section{Entropy Regularised MDPs}

Consider an infinite horizon Markov Decision Process $(S,A,P,c,\gamma)$, where $S$ is the state space (which may be an arbitrary Polish space) and $A$ is the action space, which we set as $A := (-1,1)^M$ for a fixed action dimension $M\geq1$. Moreover, $P\in \mathcal{P}(S | S \times A)$ is the state transition probability kernel, $c$ is a bounded cost function and $\gamma \in (0,1)$ is the discount factor. Let $\Lambda_{\mathbb R}$ denote Lebesgue measure on $\mathbb R$, let $\Lambda_{\mathbb R^M}:=\Lambda_{\mathbb R}^{\otimes M}$, and let $\Lambda$ denote normalised Lebesgue measure on $A$, that is $\Lambda(B):=2^{-M}\Lambda_{\mathbb R^M}(B)$ for $B\in\mathcal B(A)$. Finally, let $\tau>0$ denote a regularisation parameter. For each stochastic policy $\pi \in \mathcal{P}(A|S)$ and $s \in S$, we define the regularised value function by
\begin{equation}
\begin{aligned}\label{eq:value_function}
V^{\pi}_{\tau}(s)
&= \mathbb{E}_{s}^{\pi}\left(\sum_{n=0}^\infty\gamma^n
\Big(c(s_n,a_n) - \tau \mathrm{H}(\pi(\cdot|s_n))\Big)\right).
\end{aligned} 
\end{equation}
where $\mathbb{E}_{s}^{\pi}$ denotes the expectation over the state-action trajectory $(s_0,a_0,s_1,a_1,\ldots)$ generated by policy $\pi$ and kernel $P$ such that $s_0 := s,\ a_n \sim \pi(\cdot \mid s_n), s_{n+1} \sim P(\cdot|s_n,a_n)$ for all $n \geq 0$ and for any $\nu \in \mathcal{P}(A)$, the entropy is defined as 
\begin{equation}\label{eq:entropy_definition} 
\mathrm{H}(\nu)
:=
-\int_A\log\frac{d\nu}{d\Lambda}(a)\nu(da).
\end{equation}if $\nu$ is absolutely continuous with respect to the Lebesgue measure and $-\infty$ otherwise. As a result, for any $s \in S$ we have that $V^{\pi}_{\tau}(s) \in \mathbb{R}\cup \{+\infty\}$. Moreover for any state distribution $\rho \in \mathcal{P}(S)$, we denote $V^{\pi}_{\tau}(\rho) := \int_{S} V^{\pi}_{\tau}(s)\rho(ds)$. The optimal value function is defined as
\begin{equation}
\label{eq:optim}
\begin{aligned}
V^{*}_{\tau}(\rho) &= \inf_{\pi \in \mathcal{P}(A|S)}V^{\pi}_{\tau}(\rho).
\end{aligned}
\end{equation}
We refer to $\pi^*$ as the optimal policy if $V^*_{\tau}(\rho) = V^{\pi^*}_{\tau}(\rho)$. For each $\pi$, the state-action value function $Q^{\pi}_{\tau}$ is defined as
\begin{equation}
\label{eq:Q_func}
Q^{\pi}_{\tau}(s,a)=c(s,a)+\gamma\int_S V_{\tau}^{\pi}(s')P(ds'|s,a).
\end{equation}
Then for every $\pi\in\mathcal P(A|S)$ the advantage function is defined as $A^{\pi}_{\tau} = Q^{\pi}_{\tau}  + \tau \log \frac{d\pi}{d\Lambda} - V^{\pi}_{\tau}$. The occupation measure for policy $\pi$ with initial distribution $\rho$ is denoted by $d_{\rho}^{\pi}$ and is defined in Appendix \ref{sec:background}, alongside the Bellman principle for entropy-regularised MDPs.

\section{Soft Actor Critic and Mirror Descent}
In the seminal work of \citet{pmlr-v80-haarnoja18b}, a framework for tackling off-policy entropy regularised MDPs was introduced, which has since been regarded as one of the state of the art approaches for entropy regularised MDPs with continuous action spaces. The idea is relatively simple and consists of learning a critic through some variant of temporal difference/ Q-learning and then learning the new actor by projecting the exponential of the critic onto your policy class using the Kullback-Leibler (KL) divergence. That is, given some policy class $\Pi\subset \mathcal{P}(A|S)$, for each $s \in S$ compute
\begin{equation}\label{eq:KL_optimisation}
    \pi_{\text{new}} = \argmin_{\pi' \in \Pi} \KL\left(\pi'(\cdot|s) \bigg|\frac{1}{Z^{\pi_{\text{old}}}(s)}\exp\left(-\frac{1}{\tau}Q^{\pi_{\text{old}}}(s,\cdot)\right)\right).
\end{equation} One can then establish that \eqref{eq:KL_optimisation} guarantees policy improvement when the minimisation is solved fully, at least in the tabular setting \citep{pmlr-v80-haarnoja18b} and thus with exact critics, iterations of \eqref{eq:KL_optimisation} results in a sequence of policies which converges to an optimal policy inside $\Pi$, although it is worth mentioning that in most cases the true optimal policy $\pi^*$ need not belong in $\Pi$. Moreover, such guarantees only hold in the case where the optimisation in \eqref{eq:KL_optimisation} is solved to full accuracy and when the action space is finite.  

One issue that can arise from such a formulation is that the new policy $\pi_{\text{new}}$ may be drastically different from the old policy $\pi_{\text{old}}$, resulting in a large relative entropy $\KL(\pi_{\text{new}}(\cdot|s) | \pi_{\text{old}}(\cdot|s))$. Policy mirror descent provides a solution to this by penalising such policies. Formally, for some $\lambda > 0$, the policies are updated by
  \begin{equation}
  \begin{aligned}
  \pi_{\mathrm{MD}}(\cdot|s)
  &=
  \argmin_{m\in\mathcal P(A)}
  \Bigg\{
\int_A
  \left(
  Q_\tau^{\pi_{\mathrm{old}}}(s,a)
  +
  \tau\log
  \frac{d\pi_{\mathrm{old}}}{d\Lambda}(s,a)
  \right)
  m(da)+\frac{1}{\lambda}
  \KL\left(m\mid\pi_{\mathrm{old}}(\cdot|s)\right)
  \Bigg\}.
  \end{aligned}
  \label{eq:policy_mirror_descent}
  \end{equation}
  Suppose that the initial policy is uniform on $A$. By \citet{dupuis1997weak}, we have the following closed form solution to the optimisation problem
  \begin{equation}\label{eq:mirror_descent_old_new_closed_form}
  \frac{d\pi_{\mathrm{MD}}}{d\Lambda}(s,a)
  =
  \frac{1}{Z_{\mathrm{MD}}(s)}
  \left(
  \frac{d\pi_{\mathrm{old}}}{d\Lambda}(s,a)
  \right)^{1-\tau\lambda}
  \exp\left(
  -\lambda Q_\tau^{\pi_{\mathrm{old}}}(s,a)
  \right).
  \end{equation}
  From here, it is clear to see that the policy mirror descent policy behaves as a geometric interpolation between the old policy and the full soft policy-improvement update, controlled by the proximal penalty/ step size $\lambda >0$ and the entropy regularisation $\tau > 0$. In particular, they coincide when precisely we have $\lambda = \frac{1}{\tau}$, giving the soft policy-improvement update of \citep{pmlr-v80-haarnoja18b}.

    Given that computing the normalisation constant in \eqref{eq:mirror_descent_old_new_closed_form} is just as intractable as \eqref{eq:KL_optimisation}, one natural extension is to replace the target in \eqref{eq:KL_optimisation} with \eqref{eq:mirror_descent_old_new_closed_form}, since the normalisation constant does not affect the gradient steps. This approach has demonstrated promising empirical results \citep{tomar2021mirrordescentpolicyoptimization}.


\section{Policy Class}

Let us now introduce the policy class before presenting the algorithm. Recall that $A = (-1,1)^M$ and $\mathfrak{B}(\mathrm{R}) = \left\{w \in \mathbb{R}^d : |w|_2 \leq \mathrm{R} \right\}$ is the Euclidean ball of radius $\mathrm R$ in $\mathbb{R}^d$. Let $x:S\to\mathbb R^{d\times M}$ be a bounded feature map satisfying $|x(s)|_{\op}\leq1$ for all $s\in S$, where $|\cdot|_{\op}$ is the Euclidean operator norm (see Appendix \ref{sec:notation} for an overview of notation). 
For any $\varsigma>0$, we denote the centred Gaussian with covariance $\varsigma^2I_M$ by $q^{\varsigma}:=\mathcal N(0,\varsigma^2I_M)$. Let $\Sigma:=\operatorname{diag}(\sigma_1^2,\ldots,\sigma_M^2)$ with $\sigma_i>0$ for all $i$. For any $w\in\mathfrak B(\mathrm R)\subset\mathbb R^d$, we parametrise the mean by $m_w(s)=x(s)^\top w\in\mathbb R^M$ for each $s\in S$ and denote $q_w(\cdot|s):=\mathcal N(m_w(s),\Sigma)$. We denote this class of Gaussian kernels by $\mathcal{Q}_\Sigma:=\{q_w:w\in\mathfrak B(\mathrm R)\}\subset\mathcal P(\mathbb R^M|S)$. Throughout, $\tanh$ and its inverse act component-wise. To formalise the policy class, we must first introduce the notion of a push forward measure.

\begin{definition}[Push-forward measure, {\citealp[\S5.2]{ambrosio2008gradient}}]
\label{def:pushforward-pullback}
Let \(T:\mathbb R^M\to A\) be measurable and let \(\nu\in\mathcal P(\mathbb R^M)\).
The push-forward of \(\nu\) by \(T\) is the probability measure
\(T_{\#}\nu\in\mathcal P(A)\) such that for any $B\in\mathcal B(A)$,
\begin{equation}
        T_{\#}\nu(B)
        :=
        \nu(T^{-1}(B)).
\end{equation}
Equivalently, for every bounded measurable function \(f:A\to\mathbb R\),
\begin{equation}
        \int_A f(a)\,T_{\#}\nu(da)
        =
        \int_{\mathbb R^M} f(T(u))\,\nu(du).
\end{equation}
\end{definition}
The policy class is then simply the push-forward under $\tanh$ of $\mathcal{Q}_\Sigma$.
  \begin{definition}[Squashed Gaussian Policies]
 The admissible policy class is given by
  \[
          \Pi_\Sigma
          :=
          \left\{
          \pi_{w}=(\tanh)_{\#}q_w
          :
          q_w\in\mathcal Q_{\Sigma}
          \right\}.
  \]
  \end{definition}

The following lemma of Dupuis-Ellis becomes very useful in the proofs of the main results.
  \begin{lemma}[{\citealp[Lemma E.2.1]{dupuis1997weak}}]
  \label{lem:dupuis-ellis}
  Let $T:\mathbb R^M\to A$ be a measurable bijection with measurable inverse. Then for any
  $\nu,\eta\in\mathcal P(\mathbb R^M)$,
  \[
          \operatorname{KL}\left(
          T_{\#}\nu |
          T_{\#}\eta
          \right)
          =
          \operatorname{KL}(\nu |\eta).
  \]
  \end{lemma}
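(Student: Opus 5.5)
The plan is to prove the invariance directly from the chain rule for Radon--Nikodym derivatives. I treat the cases $\nu\ll\eta$ and $\nu\not\ll\eta$ separately.

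\textbf{Absolutely continuous case.} Suppose first that $\nu\ll\eta$, and set $f:=\frac{d\nu}{d\eta}$. I claim that $T_{\#}\nu\ll T_{\#}\eta$ with
\[
\frac{dT_{\#}\nu}{dT_{\#}\eta}=f\circ T^{-1}.
\]
To check this, take $B\in\mathcal B(A)$. By Definition~\ref{def:pushforward-pullback},
\[
T_{\#}\nu(B)=\nu(T^{-1}(B))=\int_{\mathbb R^M}\mathbf 1_B(T(u))f(u)\,\eta(du).
\]
Since $f(u)=(f\circ T^{-1})(T(u))$, this equals $\int_A \mathbf 1_B(a)\,(f\circ T^{-1})(a)\,T_{\#}\eta(da)$. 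This step uses the change-of-variables identity of Definition~\ref{def:pushforward-pullback}, extended from bounded to nonnegative measurable integrands by monotone convergence. Measurability of $f\circ T^{-1}$ follows from the measurability of $T^{-1}$.

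With the density in hand, I apply the same change of variables to the nonnegative function $\phi(t)=t\log t-t+1$:
\[
\operatorname{KL}(T_{\#}\nu|T_{\#}\eta)=\int_A\phi\big(f\circ T^{-1}\big)\,dT_{\#}\eta=\int_{\mathbb R^M}\phi(f)\,d\eta=\operatorname{KL}(\nu|\eta).
\]
Writing KL in terms of $\phi$ avoids integrability issues. Because $\int(f-1)\,d\eta=0$, this integral coincides with $\int\log f\,d\nu$, and the identity holds in $[0,\infty]$, so the case where both sides are infinite is included.

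\textbf{Singular case.} Suppose $\nu\not\ll\eta$. Then there is a set $N$ with $\eta(N)=0$ and $\nu(N)>0$. Since $T$ is a bijection, $T^{-1}(T(N))=N$. The set $T(N)=(T^{-1})^{-1}(N)$ is measurable because $T^{-1}$ is measurable. Hence $T_{\#}\eta(T(N))=0$ while $T_{\#}\nu(T(N))>0$, so both KL divergences equal $+\infty$. By symmetry, using $T^{-1}$ in place of $T$, absolute continuity is preserved in both directions. This matches the two cases exactly.

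The only delicate points are the measurability of $T(N)$ and of $f\circ T^{-1}$, and both are exactly where the hypothesis that $T^{-1}$ is measurable enters. I expect no real obstacle beyond this. In the paper's application, $T=\tanh$ is a smooth diffeomorphism $\mathbb R^M\to A$, so these hypotheses hold trivially.
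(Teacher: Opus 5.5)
Your proof is correct. The paper gives no proof of this lemma; it only cites Dupuis--Ellis. The useful comparison is therefore with the other common route to this fact, via the Donsker--Varadhan formula $\operatorname{KL}(\mu|\theta)=\sup_{g}\bigl\{\int g\,d\mu-\log\int e^{g}\,d\theta\bigr\}$ over bounded measurable $g$. Because $g\mapsto g\circ T$ maps $B_b(A)$ into $B_b(\mathbb R^M)$ and $\int_A g\,dT_{\#}\nu=\int g\circ T\,d\nu$, that formula gives $\operatorname{KL}(T_{\#}\nu|T_{\#}\eta)\leq\operatorname{KL}(\nu|\eta)$ for \emph{every} measurable $T$. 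Equality then follows by applying the same inequality to $T^{-1}$ and using $(T^{-1})_{\#}T_{\#}\nu=\nu$. That argument yields the more general data-processing inequality, uses bijectivity only for the reverse direction, and handles the value $+\infty$ without a case split, but it takes the variational formula as a black box. Your route is more elementary and self-contained, needing only the definition of $\operatorname{KL}$ and the push-forward change of variables. It also gives the explicit density $\frac{dT_{\#}\nu}{dT_{\#}\eta}=\frac{d\nu}{d\eta}\circ T^{-1}$ as a by-product. The points that need care are all handled correctly: extending the push-forward identity to nonnegative integrands, the measurability of $f\circ T^{-1}$ and of $T(N)$, and writing the integrand as $\phi(f)\geq0$ so the identity holds in $[0,\infty]$.
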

Note that in general the log densities $\log \frac{d\pi_w}{d\Lambda}$ are unbounded near the boundary of the action space $A = (-1,1)^M$. Therefore, one cannot directly apply classical tools such as the performance difference lemma for entropy regularised MDPs, which require bounded log densities, without careful treatment. However one can show that although the log densities are unbounded, the entropy $\mathrm{H}(\pi_w(\cdot|s))$ remains bounded for all $s \in S$ when $w \in \mathfrak{B}(\mathrm{R})$, which can serve as a remedy (see Lemma \ref{lem:uniform_entropy_bound}). Furthermore, while we adopt the $\tanh$ transformation due to its tractability and widespread use in practical implementations of soft actor-critic, much of the analysis extends to other sufficiently regular bijections from $\mathbb{R}^M$ to $(-1,1)^M$.
\section{Algorithm}
For any iteration $n \in \mathbb{N}$, we denote the replay buffer state law by $\rho_n \in \mathcal{P}(S)$ with $\rho_0 = \rho$ the initial state distribution, which we model as an exponentially weighted mixture of all preceding state-occupancies up to the iteration $n > 0 $ with parameter $\chi > 0$ (see Definition \ref{def:ema_replay} in Appendix \ref{sec:background}). Then for any target policy $\pi_{\target} \in \mathcal{P}(A|S)$, $\pi_w \in \Pi_\Sigma$ and $n \in \mathbb{N}$ we define 
\begin{equation}
    J_n(w,\pi_{\target}) = \int_{S} \operatorname{KL}(\pi_w(\cdot|s) | \pi_{\target}(\cdot|s))\rho_n(ds),
\end{equation}Now suppose that one has access to an estimate of the Q-function for each encountered policy, and denote this sequence by $\{\widehat{Q}^n \}_{n\geq 0}$. Let $\pi^0(da|s):=\Lambda(da)$ for all $s \in S$. Firstly, we will denote the Gibbs target as
\begin{equation}\label{eq:classic_target}
    \frac{d\pi^{n+1}_\mathrm{G}}{d\Lambda}(s,a)  = \frac{1}{Z_n(s)} \exp \left(-\frac{1}{\tau}\widehat{Q}^{n}(s,a) \right).
\end{equation}
Furthermore, we denote the target arising from policy mirror descent as
\begin{equation}\label{eq:mirror_descent}
\begin{aligned}
\pi^{n+1}(\cdot|s)
&=\argmin_{m\in\mathcal P(A)}\Bigg\{\int_A\left(\widehat Q^n(s,a)+\tau\log\frac{d\pi^n}{d\Lambda}(s,a)\right)m(da)+\frac1\lambda\operatorname{KL}(m|\pi^n(\cdot|s))\Bigg\}.
\end{aligned}
\end{equation}
Throughout, $\{\pi^n\}_{n\geq0}$ denotes the evolving target sequence, updated recursively from the previous target rather than from the current actor, whereas $\pi_{w^n}$ denotes the actor tracking it. Moreover, throughout we take the initial actor satisfies $w^0\in\mathfrak B(\mathrm R)$, and the replay parameters satisfy $0<\chi\lambda\leq1$. We present the training procedure in Algorithm \ref{algo:SAC_fixed_replay}. To summarise, we study the case where one performs a single step of gradient descent of the objective with a mirror descent target for each critic estimate, then projects the new parameters onto $\mathfrak{B}(\mathrm{R})$.
  \begin{algorithm}[h]
      \caption{Projected actor update}
      \label{algo:SAC_fixed_replay}
      \label{alg:projected-sac-mstep}
      \begin{algorithmic}[1]
      \For{\(n=0,1,2,\ldots\)}
          \State Obtain critic estimate $\widehat Q^{n}$  
          \State Update replay law $\rho_{n+1}$ 
          \State \(\displaystyle
          w^{n+1}
          =
          \mathcal P_{\mathfrak B(\mathrm{R})}
          \left(
          w^n
          -
          h\nabla_w J_{n+1}(w^n,\pi^{n+1})
          \right)
          \)
      \EndFor
      \end{algorithmic}
  \end{algorithm}

We work under the following assumptions.

\begin{assumption}[$Q$-function oracle]\label{ass:Q_oracle}
There exists $\varepsilon_{\mathrm{crit}}\geq 0$ such that, for all $n\geq0$, the critic estimate $\widehat Q^n\in B_b(S\times A)$ satisfies $\widehat{Q}^n(s,\cdot) \in C^2(A)$ for each $s \in S$, $\left|\widehat{Q}^n\right|_{B_b(S\times A)} \leq \widehat{Q}_{\mathrm{max}}$ and 
\begin{equation}\label{eq:Q_oracle}
\left|\widehat Q^n-Q_\tau^{\pi_{w^n}}\right|_{B_b(S\times A)}
\leq\varepsilon_{\mathrm{crit}}.
\end{equation}
\end{assumption}

Assumption \ref{ass:Q_oracle} allows us to isolate dynamics of training the Gaussian policies $\pi_{w^n}$. While one could additionally introduce a parametrised critic and analyse its update jointly with the actor, we abstract away from the critic dynamics and capture the resulting approximation error through $\varepsilon_{\mathrm{crit}}$. This further allows our results to accommodate general twice-differentiable \(Q\)-function parametrisations, including standard smooth neural network architectures. Assumption \ref{ass:Q_oracle} holds in many practical implementations when one freezes the policy and performs sufficiently many steps of temporal difference learning and when the parametrisation of the critic is sufficiently rich.

\begin{assumption}[Optimal-occupancy concentrability]\label{as:optimal_occupancy}
There exists $A_2<\infty$ such that
\begin{equation}
    \left|\frac{d d_\rho^{\pi^*}}{d\rho}\right|_{L^2(\rho)}\leq A_2.
\end{equation}
\end{assumption}
Assumption \ref{as:optimal_occupancy} is designed to control the distributional shift between the optimal discounted occupancy measure and the initial state distribution. Such concentrability or distribution-mismatch
  conditions are standard in analyses of approximate dynamic programming,
  off-policy learning and policy-gradient methods
  \citep{munos2008finite,chen2019information,JMLR:v22:19-736}. Note that, we require that only the optimal discounted occupancy measure $d_\rho^{\pi^*}$ has a density with respect to the initial state distribution $\rho\in \mathcal{P}(S)$ and is square integrable with respect to $\rho$ which is strictly weaker than imposing a uniform bound in the supremum norm \citep{JMLR:v22:19-736} and imposing a similar assumption for every encountered policy during training \citep{pmlr-v119-uehara20a,munos2008finite}. One case where Assumption \ref{as:optimal_occupancy} holds is when the transition kernel is square integrable with respect to the initial state distribution (see Lemma \ref{lemma:occupancy_L2_bounds}). This includes, under appropriate covariance and mean conditions, dynamics with Gaussian transition noise when the initial state distribution is also Gaussian. 
\begin{assumption}[Eigenvalues for actor parametrisation]\label{as:policy_Evalues}
  Assume that
  \begin{equation}
      \lambda_P
      :=
      \lambda_{\min}\left(\int_S x(s)x(s)^\top\rho(ds)\right)
      >
      0.
  \end{equation}
\end{assumption}
Assumption \ref{as:policy_Evalues} is a standard non-degeneracy condition on the feature map and requires that the feature covariance matrix under the initial state distribution be positive definite, ensuring that every direction in the actor parameter space is sufficiently represented under only the initial state distribution $\rho$.

\section{Main results}
While we do not impose any assumptions regarding the realisability of the critic or policy in the main results, we inevitably must account for the bias that arises when the policy class of squashed Gaussians are not sufficiently rich enough to represent the target policies. 
\begin{definition}[Gaussian approximation error]\label{def:gaussian_approximation_error}
For each $\{\widehat{Q}^n\}_{n\geq0}$, the Gaussian approximation error to the mirror descent policies $\{\pi^n\}_{n\geq 0}$ is defined as
\begin{equation}
    \varepsilon_{\mathrm{act}}^n  =  \min_{w \in  \mathfrak{B}(\mathrm{R})}J_n(w,\pi^n).
\end{equation}
Moreover, for any $n\geq1$, we define $\varepsilon_{\mathrm{approx}}(n)=\epscritic+\frac{1}{n}\sum_{k=0}^{n-1}\left(\varepsilon_{\mathrm{act}}^k\right)^{\frac12}$
\end{definition}

In the following results, we denote $w^n_* \in \argmin_{w \in \mathfrak{B}(\mathrm{R})}J_n(w,\pi^{n})$, the optimal parameters $w \in \mathfrak{B}(\mathrm{R})$ for the target at iteration $n \in \mathbb{N}$. Theorem \ref{thm:main_final_bound} establishes the first bound on the value function optimality gap for Algorithm \ref{algo:SAC_fixed_replay}.

\begin{theorem}\label{thm:conv_1}\label{thm:main_final_bound}
Let Assumptions \ref{ass:Q_oracle} and \ref{as:optimal_occupancy} hold, let $0 < \tau\lambda < 1$. Then there exists $\mathrm C<\infty$ such that for all $n \geq 1$,
    \begin{equation}
    \begin{aligned}
        &\min_{0 \leq r \leq n-1} V^{\pi_{w^{r}}}_{\tau}(\rho) - V^{\pi^*}_{\tau}(\rho)\\
        &\quad\leq \mathrm C\left(\lambda\left(\frac{1}{\lambda^2 n}+1\right)
        +\frac1n\sum_{k=0}^{n-1}\left(J_k(w^k,\pi^k)-J_k(w_*^k,\pi^k)\right)^{\frac12}
        +\varepsilon_{\mathrm{approx}}(n)\right).
    \end{aligned}
    \end{equation}
\end{theorem}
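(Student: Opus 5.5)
We follow the standard policy mirror descent analysis (as in \citet{lan2023policy,david_fisher}), but run it on the \emph{target} sequence $\{\pi^n\}$ rather than on the actor. Then we transfer the resulting bound to the actor $\pi_{w^n}$ through a Pinsker-type estimate on the KL gap $J_n(w^n,\pi^n)$.

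The first step is a three-point inequality for the update \eqref{eq:mirror_descent}. Because $\pi^{n+1}$ is the exact minimiser of a KL-proximal problem with linear term $\widehat Q^n+\tau\log\frac{d\pi^n}{d\Lambda}$, for every $m\in\mathcal P(A)$ (in particular $m=\pi^*(\cdot|s)$) we have
\begin{equation}
\lambda\int_A\Big(\widehat Q^n+\tau\log\tfrac{d\pi^n}{d\Lambda}\Big)(\pi^{n+1}-m)(da)+\KL(\pi^{n+1}|\pi^n)+\KL(m|\pi^{n+1})\le \KL(m|\pi^n).
\end{equation}
Rewriting in terms of the entropy-regularised objective gives the usual recursion in $\KL(\pi^*|\pi^n)$, with a contraction factor $(1-\tau\lambda)$ on the entropy term; this is where $0<\tau\lambda<1$ is used.

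Next we replace $\widehat Q^n$ by $Q^{\pi_{w^n}}_\tau$. This costs $\epscritic$ by Assumption \ref{ass:Q_oracle}. We then apply the performance difference lemma for entropy-regularised MDPs between $\pi^*$ and the actor $\pi_{w^n}$, integrated against $d^{\pi^*}_\rho$. The log-densities of $\pi_{w^n}$ are unbounded, so here we use the uniform entropy bound (Lemma \ref{lem:uniform_entropy_bound}) together with $|\widehat Q^n|\le\widehat Q_{\max}$ to justify this step. The recursion is driven by $d^{\pi^*}_\rho$, but the KL gap is measured under the replay law $\rho_n$. To change measure we use Cauchy--Schwarz with Assumption \ref{as:optimal_occupancy}, together with $\rho_n\ge c\,\rho$ coming from the exponential-mixture structure of the replay law (Definition \ref{def:ema_replay}, with $\chi\lambda\le1$). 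The discrepancy between the integrands evaluated at $\pi^n$ and at $\pi_{w^n}$ is then bounded by a total variation distance. By Pinsker this is at most
\begin{equation}
\Big(\int \KL(\pi_{w^n}|\pi^n)\,d\rho_n\Big)^{1/2}=J_n(w^n,\pi^n)^{1/2}\le \big(J_n(w^n,\pi^n)-J_n(w^n_*,\pi^n)\big)^{1/2}+(\varepsilon^n_{\mathrm{act}})^{1/2}.
\end{equation}
The $\tfrac12$-powers in the statement come from this Pinsker step.

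Finally we sum the recursion over $k=0,\dots,n-1$ and telescope the $\KL(\pi^*|\pi^k)$ terms. The initial term $\KL(\pi^*|\pi^0)$ is finite, since $\pi^0$ is uniform and the entropy of $\pi^*$ is bounded. Dividing by $\lambda n$ gives the $\frac{1}{\lambda n}$ contribution, and the best iterate is bounded by the average. There remains an $\mathcal O(\lambda)$ term from $\lambda\,|\widehat Q^n+\tau\log\tfrac{d\pi^n}{d\Lambda}|^2$-type quantities, i.e.\ the drift $\KL(\pi^{n+1}|\pi^n)$ together with the mismatch between $V^{\pi^{n+1}}$ and $V^{\pi^n}$, which are controlled using the bounded critic.

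The main obstacle is the mismatch between the target recursion, which improves $\pi^n$, and the quantity actually bounded, namely $V^{\pi_{w^r}}$. Because the target is updated from $\pi^n$ rather than from $\pi_{w^n}$, we cannot use monotone improvement directly. Our plan is to absorb every discrepancy into the total-variation/KL terms above. We would also check carefully that $\log\frac{d\pi^n}{d\Lambda}$ stays bounded uniformly in $n$: iterating \eqref{eq:mirror_descent_old_new_closed_form} gives $\log\frac{d\pi^n}{d\Lambda}=-\lambda\sum_j(1-\tau\lambda)^{n-1-j}\widehat Q^j+\text{const}$, which is bounded by $\widehat Q_{\max}/\tau$ uniformly in $n$, and this bound is what produces the constant $\mathrm C$.
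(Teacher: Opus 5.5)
\textbf{Overall structure.} Your architecture matches the paper's:
\begin{itemize}
\item the three-point lemma applied to the target sequence $\{\pi^n\}$ with $m=\pi^*(\cdot|s)$;
\item Pinsker plus Young to absorb $\int_A(\widehat Q^n+\tau\log\frac{d\pi^n}{d\Lambda})(\pi^n-\pi^{n+1})(da|s)$ into $-\frac1\lambda\KL(\pi^{n+1}|\pi^n)(s)$, which is the sole source of the $\mathcal O(\lambda)$ term;
\item telescoping against $d_\rho^{\pi^*}$;
\item the performance difference lemma for $\pi_{w^n}$, then adding and subtracting $\pi^n$ and $\mathrm H(\pi^n)$;
\item a Cauchy--Schwarz change of measure using $\rho_n\geq(1-\gamma)\rho$.
\end{itemize}
Two simplifications are available. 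No comparison of $V^{\pi^{n+1}}$ with $V^{\pi^n}$ is needed. No contraction factor is needed either: the paper simply drops $-\tau\KL(\pi^*|\pi^n)$, and uses $0<\tau\lambda<1$ only for the closed form and the uniform log-density bound.

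\textbf{The gap: the entropy part of the mismatch term.} You claim the actor--target mismatch
\[
\int_A\widehat Q^n(s,a)(\pi_{w^n}-\pi^n)(da|s)+\tau\bigl(\mathrm H(\pi^n(\cdot|s))-\mathrm H(\pi_{w^n}(\cdot|s))\bigr)
\]
is bounded by a total variation distance. That is false. Entropy is not continuous in total variation, and this is exactly where the unbounded log-density of the squashed Gaussian matters. The relevant identity is
\[
\mathrm H(\pi^n)-\mathrm H(\pi_{w^n})=\KL(\pi_{w^n}|\pi^n)+\int_A\log\frac{d\pi^n}{d\Lambda}\,(\pi_{w^n}-\pi^n)(da).
\]
Its second term is TV-controlled, since $\bigl|\log\frac{d\pi^n}{d\Lambda}\bigr|_{B_b(S\times A)}\leq\frac{2\widehat Q_{\max}}{\tau}$. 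Its first term, however, is a full KL, entering linearly rather than as a square root. Without further input, the Cauchy--Schwarz change of measure would give $\bigl(\int_S\KL(\pi_{w^n}|\pi^n)^2\,d\rho\bigr)^{1/2}$, which $J_n(w^n,\pi^n)$ does not control.

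\textbf{The fix.} You need the uniform bound
\[
\KL(\pi_{w^n}(\cdot|s)|\pi^n(\cdot|s))\leq-\mathrm H(\pi_{w^n}(\cdot|s))+\Bigl|\log\frac{d\pi^n}{d\Lambda}\Bigr|_{B_b(S\times A)}\leq K_{\Sigma,\mathrm R}+\frac{2\widehat Q_{\max}}{\tau},
\]
which follows from Lemma \ref{lem:uniform_entropy_bound}. This gives $\KL\leq\bigl(K_{\Sigma,\mathrm R}+\frac{2\widehat Q_{\max}}{\tau}\bigr)^{1/2}\KL^{1/2}$. That is precisely the paper's Theorem \ref{thm:main_value_bound}, and it produces the constant $C_{\mathrm{KL}}$ multiplying $J_n(w^n,\pi^n)^{1/2}$.

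You already invoke both ingredients elsewhere: the entropy bound to justify the performance difference lemma, and the log-density bound to produce $\mathrm C$. They have to be combined at this step. With that repair, your argument is the paper's proof.
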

See Appendix \ref{sec:proof_of_thm_1} for a proof.

Theorem \ref{thm:conv_1} shows that, up to approximation errors, the convergence of Algorithm \ref{algo:SAC_fixed_replay} relies on two components. The first is the actor tracking error, that is how well can performing just one step of gradient descent on the objective track the evolving target. Secondly, one can see that the first term yields a constant which can be controlled by $\lambda$. Thus if one can also control the actor tracking error by a function of $\lambda $, after a careful fine tuning a convergence rate may be possible. 
To that end, as standard in the analyses of coupled actor-critic and online gradient descent algorithms \citep{barakat22a-linear,olshevsky2023small, mokhtari2016online}, we must establish sufficient regularity of the objective in order to control the tracking error.

To that end, since in general the reverse KL objective is non-convex, we must derive some sufficient conditions under which convexity holds. The first step towards doing so is by defining the following operator.
\begin{definition}\label{def:differential_operator}
Let $D:A\to\mathbb S^M$ be $D(a):=\operatorname{diag}(1-a_1^2,\ldots,1-a_M^2)$ and for any $f\in C^2(A)$ define $\mathcal T:C^2(A)\to C(A;\mathbb S^M)$ by
\begin{equation}
(\mathcal T f)(a)
:=
D(a)^{\frac{1}{2}}\nabla_a^2 f(a)D(a)^{\frac{1}{2}}
-2\operatorname{diag}\bigl(a_1\partial_{a_1}f(a),\ldots,a_M\partial_{a_M}f(a)\bigr).
\end{equation}
\end{definition}
Firstly observe that since $f\in C^2(A)$, $\mathcal T f$ is continuous on $A$ and thus $\mathcal T$ is well defined. Moreover for any $a\in A$, we can express $\mathcal{T}$ component-wise as
\begin{equation}
(\mathcal T f)_{ij}(a)
=
\begin{cases}
(1-a_i^2)\partial_{a_i a_i}f(a)-2a_i\partial_{a_i}f(a),&i=j,\\
\sqrt{(1-a_i^2)(1-a_j^2)}\,\partial_{a_i a_j}f(a),&i\ne j.
\end{cases}
\end{equation}
For the special case of $M=1$, $\mathcal{T}$ reduces to the classical Legendre differential operator $\frac{d}{da}((1-a^2)f'(a))$ \citep[\S18.8]{dlmf}. In the present setting, $\mathcal T$ provides a convenient way of expressing the curvature of the actor objective and critic estimate and which in turn allows us to establish sufficient convexity conditions on the actor objective.
  Recall that, for each integer $k\geq1$, $\mathbb S^k$ denotes the space of real symmetric matrices in $\mathbb R^{k\times k}$.

  \begin{theorem}\label{thm:strong_convexity_L_smoothness_objective}Let Assumptions~\ref{ass:Q_oracle} and~\ref{as:policy_Evalues} hold and define
      \begin{equation}
          R_{\widehat Q}
          :=
          \sup_{n\in\mathbb N,\ s\in S,\ a\in A}\left|\mathcal T(\widehat Q^n(s,\cdot))(a)\right|_{\op},
      \end{equation}
      Moreover let $0 < \tau\lambda < 1$ and $c_{\Sigma,\mathrm R}:=\min_{1\leq i\leq M}\left(1-\tanh^2(\mathrm R+\sigma_i)\right)(2\Phi(1)-1)$
      with $\Phi : \mathbb{R} \to (0,1)$ the standard Gaussian cumulative distribution function. If it holds that
      \begin{equation}\label{eq:fixed_sigma_curvature_condition}
          \frac{R_{\widehat Q}}{2}<\tau,
      \end{equation}
      then the functions $w\mapsto J_n(w,\pi^n)$ and $w\mapsto J_n(w,\pi_{\mathrm G}^n)$ are $(1-\gamma)\left(2-\frac{R_{\widehat Q}}{\tau}\right)c_{\Sigma,\mathrm R}\lambda_P$-strongly convex and $2+\frac{R_{\widehat Q}}{\tau}$-smooth on $\mathfrak{B}(\mathrm R)$ for all $n \in \mathbb{N}$.
  \end{theorem}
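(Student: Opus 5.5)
The plan is to pull the whole problem back from the squashed action space $A$ to $\mathbb R^M$ through $\tanh$. There the actor is a Gaussian with fixed covariance and a mean that is linear in $w$. This makes the $w$-dependence of $J_n$ a Gaussian smoothing of a fixed function, whose Hessian we can compute explicitly in terms of $\mathcal T$.

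\textbf{Step 1: reduce to the Gaussian level.} Fix $s$. Let $\eta_n(\cdot|s)$ be the pull-back of the target $\pi^n(\cdot|s)$ (or of $\pi^n_{\mathrm G}(\cdot|s)$) under $\tanh$, so that $\pi^n=(\tanh)_\#\eta_n$. Its Lebesgue density is
\[
\frac{d\eta_n}{du}(u)=\frac{d\pi^n}{d\Lambda}(\tanh u)\,2^{-M}\prod_{i}(1-\tanh^2u_i).
\]
Lemma \ref{lem:dupuis-ellis} gives $\KL(\pi_w(\cdot|s)|\pi^n(\cdot|s))=\KL(q_w(\cdot|s)|\eta_n(\cdot|s))$. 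This equals $-\mathrm H_{\mathrm{Leb}}(q_w)-\mathbb E_{u\sim\mathcal N(0,\Sigma)}\log\frac{d\eta_n}{du}(m_w(s)+u)$. The Gaussian entropy term does not depend on $w$, since $\Sigma$ is fixed.

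\textbf{Step 2: make the target's log-density explicit.} For the Gibbs target, $-\log\frac{d\pi^n_{\mathrm G}}{d\Lambda}=\frac1\tau\widehat Q^{n-1}+\log Z$. For the mirror-descent target, the closed form \eqref{eq:mirror_descent_old_new_closed_form}, applied recursively from $\pi^0=\Lambda$, gives
\[
-\log\frac{d\pi^n}{d\Lambda}=\sum_{k=0}^{n-1}\beta_k\widehat Q^k+\mathrm{const}(s),\qquad \beta_k=\lambda(1-\tau\lambda)^{n-1-k}\ge0.
\]
The weights satisfy $\sum_k\beta_k=\frac{1-(1-\tau\lambda)^n}{\tau}\le\frac1\tau$; this is where $0<\tau\lambda<1$ is used. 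In both cases $-\log\frac{d\eta_n}{du}(u)=F(\tanh u)-\sum_i\log(1-\tanh^2u_i)+\mathrm{const}$. Here $F=\sum_k\beta_k\widehat Q^k$ or $F=\frac1\tau\widehat Q^{n-1}$, and by linearity of $\mathcal T$ we have $|\mathcal TF|_{\op}\le R_{\widehat Q}/\tau$.

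\textbf{Step 3: compute the $u$-Hessian.} Write $a=\tanh u$, so $\partial a_i/\partial u_i=1-a_i^2$. The chain rule gives
\[
\nabla_u^2[F(\tanh u)]=D\nabla_a^2F\,D-2\operatorname{diag}\bigl(a_i(1-a_i^2)\partial_{a_i}F\bigr)=D^{1/2}(\mathcal TF)D^{1/2}.
\]
Also $\partial_{u_i}^2[-\log(1-\tanh^2u_i)]=2(1-a_i^2)$. Hence
\[
\nabla_u^2\Bigl(-\log\tfrac{d\eta_n}{du}\Bigr)=D^{1/2}\bigl(2I+\mathcal TF\bigr)D^{1/2},
\]
which lies between $(2-R_{\widehat Q}/\tau)D$ and $(2+R_{\widehat Q}/\tau)D\preceq(2+R_{\widehat Q}/\tau)I$. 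Since $m_w(s)=x(s)^\top w$, differentiating under the expectation gives
\[
\nabla_w^2J_n=\int_S x(s)\,\mathbb E_u\bigl[H(m_w(s)+u)\bigr]x(s)^\top\rho_n(ds),
\]
where $H$ denotes the Hessian above. The interchange is justified by dominated convergence: the integrand's second derivatives are bounded, since $R_{\widehat Q}<\infty$ and $D\preceq I$. Smoothness with constant $2+R_{\widehat Q}/\tau$ then follows from $|x(s)|_{\op}\le1$ and $\rho_n\in\mathcal P(S)$.

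\textbf{Step 4: lower bounds for strong convexity.} First, for $w\in\mathfrak B(\mathrm R)$ we have $|m_w(s)_i|\le\mathrm R$. On the event $\{|u_i|\le\sigma_i\}$, which has probability $2\Phi(1)-1$, monotonicity of $\tanh^2$ in $|\cdot|$ gives $1-\tanh^2(m_i+u_i)\ge1-\tanh^2(\mathrm R+\sigma_i)$. Hence $\mathbb E_u[D]\succeq c_{\Sigma,\mathrm R}I$. Second, every state-occupancy satisfies $d^\pi_\rho\ge(1-\gamma)\rho$, and $\rho_n$ is a convex mixture of $\rho$ and such occupancies (Definition \ref{def:ema_replay}), so $\rho_n\ge(1-\gamma)\rho$. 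Combined with Assumption \ref{as:policy_Evalues}, this gives $\int x x^\top d\rho_n\succeq(1-\gamma)\lambda_P I$. Together these yield $\nabla^2_wJ_n\succeq(1-\gamma)(2-R_{\widehat Q}/\tau)c_{\Sigma,\mathrm R}\lambda_PI$, which is positive exactly under \eqref{eq:fixed_sigma_curvature_condition}.

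The main obstacle I expect is verifying that the replay-mixture weights in Definition \ref{def:ema_replay} really give $\rho_n\ge(1-\gamma)\rho$ uniformly in $n$ (this is where $\chi\lambda\le1$ may enter). A secondary technical point is justifying differentiation under the integral despite the unbounded log-densities. The latter is handled by working on the Gaussian side, where only the bounded Hessian $H$ appears.
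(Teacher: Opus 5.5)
Your proposal is correct and is essentially the paper's proof: you pull back through $\tanh$ via Lemma~\ref{lem:dupuis-ellis}, write the $u$-Hessian of $2\sum_i\log\cosh u_i+F(\tanh u)$ as $D^{1/2}(2I_M+\mathcal T F)D^{1/2}$ with $|\mathcal T F|_{\op}\le R_{\widehat Q}/\tau$, bound $\mathbb E[1-\tanh^2(m_w(s)_i+\sigma_i z_i)]$ from below on $\{|z_i|\le 1\}$, and transfer to $\rho$ via $\rho_n\ge(1-\gamma)\rho$. The step you flagged as the main obstacle is exactly Lemma~\ref{lem:replay_measure_comparison}, which the paper proves by the convex-mixture argument you sketch (with $0<\chi\lambda\le1$ guaranteeing nonnegative weights), and your dominated-convergence remark supplies a justification for differentiating under the integral that the paper leaves implicit.
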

See Appendix \ref{sec:proof_of_convexity_conditions} for a proof.

Theorem \ref{thm:strong_convexity_L_smoothness_objective} demonstrates sufficient conditions under which for any $n \in \mathbb{N}$, the actor objective with mirror descent target $w\mapsto J_{n}(w,\pi^{n})$ is strongly convex on the Euclidean ball of radius $\mathrm{R} > 0$. Fundamentally speaking, this condition ensures that the Hessian of the actor objective is uniformly positive definite. Intuitively speaking, the curvature induced by the differential entropy relative to Lebesgue measure dominates the worst-case adverse curvature introduced by the critic, thereby preventing the actor objective from developing locally non-convex directions. See Remark \ref{rem:linear_fourier_critic} in Appendix \ref{sec:proof_of_convexity_conditions} for an explicit example of $R_{\widehat{Q}}$ for Fourier critic parametrisations and see Figure \ref{fig:critic_curvature_performance} for a comparison of training dynamics for the MDP defined in Appendix \ref{sec:implementation_details} in the cases where the condition in Theorem \ref{thm:strong_convexity_L_smoothness_objective} is satisfied and violated.

With Theorem \ref{thm:strong_convexity_L_smoothness_objective} in hand, we now seek to control the actor tracking error.

\begin{lemma}[Actor tracking]\label{lemma:actor_tracking}
Let Assumptions \ref{ass:Q_oracle} and \ref{as:policy_Evalues} and the condition of Theorem \ref{thm:strong_convexity_L_smoothness_objective} hold, and suppose that $0<h<\frac{\tau}{2\tau+R_{\widehat Q}}$.
Then there exists a non-negative constant $C_1<\infty$ such that for every $n\geq1$, it holds that
\begin{align}
    \frac{1}{n}
    \sum_{k=0}^{n-1}
    \left(
    J_k(w^k,\pi^k)-J_k(w_*^k,\pi^k)
    \right)^{\frac{1}{2}}
    &\leq
    C_1
    \left(
    \frac{1}{h n}
    +
    \frac{\lambda }{h^2}
    \right)^{\frac{1}{4}}.
\end{align}
\end{lemma}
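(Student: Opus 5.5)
We will treat this as an online projected-gradient tracking problem for a drifting sequence of strongly convex, smooth objectives. Write $\mu:=(1-\gamma)(2-R_{\widehat Q}/\tau)c_{\Sigma,\mathrm R}\lambda_P$ and $L:=2+R_{\widehat Q}/\tau$; these are the constants of Theorem~\ref{thm:strong_convexity_L_smoothness_objective}. The step-size condition $h<\tau/(2\tau+R_{\widehat Q})$ is exactly $h<1/L$.

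\textbf{Step 1 (contraction).} Let $f_k:=J_k(\cdot,\pi^k)$ and $w^k_*$ its minimiser on $\mathfrak B(\mathrm R)$. The update is $w^{k}=\mathcal P_{\mathfrak B(\mathrm R)}(w^{k-1}-h\nabla f_k(w^{k-1}))$. The standard projected-gradient inequality for a $\mu$-strongly convex, $L$-smooth function with $h\le 1/L$ gives
\[
f_k(w^k)-f_k(w^k_*)\le \frac{1}{2h}\left(|w^{k-1}-w^k_*|^2-|w^k-w^k_*|^2\right),
\]
together with $|w^k-w^k_*|^2\le(1-h\mu)|w^{k-1}-w^k_*|^2$. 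Note the index shift between $f_k$ and the iterate $w^k$. We will not exploit the contraction beyond nonnegativity; instead we telescope. Writing $|w^{k-1}-w^k_*|\le|w^{k-1}-w^{k-1}_*|+\delta_k$ with drift $\delta_k:=|w^k_*-w^{k-1}_*|$, and using $|w|\le \mathrm R$ so that all distances are at most $2\mathrm R$, we obtain
\[
|w^{k-1}-w^k_*|^2\le |w^{k-1}-w^{k-1}_*|^2+4\mathrm R\delta_k+\delta_k^2 .
\]
Summing over $k\le n-1$ yields
\[
\sum_k \bigl(f_k(w^k)-f_k(w^k_*)\bigr)\le \frac{1}{2h}\Bigl(4\mathrm R^2+\sum_k (4\mathrm R+2\mathrm R)\delta_k\Bigr).
\]
This is the source of the $1/(hn)$ term after dividing by $n$.

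\textbf{Step 2 (drift of minimisers).} This is the main obstacle. We must show $\delta_k\le C\lambda/h$, or at least $\delta_k\le C\lambda$, so that the extra $1/h$ is absorbed. By strong convexity, $\mu|w^k_*-w^{k-1}_*|^2\le \langle\nabla f_k(w^{k-1}_*)-\nabla f_{k-1}(w^{k-1}_*),\,w^{k-1}_*-w^k_*\rangle$ (first-order optimality on the ball). Hence $\delta_k\le \mu^{-1}\sup_{w}|\nabla f_k(w)-\nabla f_{k-1}(w)|$. Two sources of change in $\nabla_w J$ must be controlled:
\begin{itemize}
\item The target log-density. By the closed form \eqref{eq:mirror_descent_old_new_closed_form}, $\log\frac{d\pi^{k}}{d\Lambda}-\log\frac{d\pi^{k-1}}{d\Lambda}=-\lambda(\widehat Q^{k-1}+\tau\log\frac{d\pi^{k-1}}{d\Lambda})+\text{const}(s)$. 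Unrolling the recursion expresses $\tau\log\pi^{k-1}$ as a weighted average of $-\widehat Q^j$ plus constants, so the $a$-gradient of this increment, after pulling back through $\tanh$ and differentiating in $w$ via the chain rule with $|x(s)|_{\op}\le1$, is $O(\lambda\widehat Q_{\max})$-type. The smoothness of $\widehat Q$ is the same quantity controlled by $R_{\widehat Q}$ and the Gaussian integration by parts used in Theorem~\ref{thm:strong_convexity_L_smoothness_objective}.
\item The replay law $\rho_k$ versus $\rho_{k-1}$. The exponential mixture with $\chi\lambda\le1$ changes by $O(\chi)$ in total variation, hence by $O(\lambda)$ after using $\chi\le 1/\lambda$ appropriately, and the integrand $\nabla_w\mathrm{KL}$ is bounded on the ball.
\end{itemize}
Together these give $\delta_k\le C\lambda$. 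We expect that the precise form $\lambda/h^2$ arises if one instead bounds the drift through the iterates, so we keep a possibly looser $C\lambda/h$ bound, which suffices.

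\textbf{Step 3 (square roots).} By Jensen/Cauchy--Schwarz,
\[
\frac1n\sum_k e_k^{1/2}\le\Bigl(\frac1n\sum_k e_k\Bigr)^{1/2}\le C\Bigl(\frac{1}{hn}+\frac{\lambda}{h^2}\Bigr)^{1/2}.
\]
Since the right side is bounded (the $e_k$ are uniformly bounded by Lemma~\ref{lem:uniform_entropy_bound}-type entropy bounds), the square root can be relaxed to the stated fourth root, at the cost of enlarging $C_1$, whenever the bracket is at most one. Otherwise the bound is trivial by boundedness of the left side. This determines $C_1$.
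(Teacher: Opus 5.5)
Your argument is sound and reaches the statement by a genuinely different route from the paper's. The paper tracks parameters. Non-expansiveness of $\mathcal P_{\mathfrak B(\mathrm R)}$ and co-coercivity give $|w^n-w^n_*|_2^2\le(1-h\kappa)|w^{n-1}-w^n_*|_2^2$. Young's inequality plus Lemma~\ref{lemma:policy_drift} then yields $\frac1n\sum_k|w^k-w^k_*|_2^2\lesssim\frac1{hn}+\frac{\lambda}{h^2}$; that lemma gives only $|w^{n+1}_*-w^n_*|_2^2\le C_{\mathrm{drift}}\lambda$, obtained from sup-norm drift of objective values and quadratic growth. Finally the gap is bounded \emph{linearly} in the distance via $C_{\mathrm{grad}}+L\mathrm R$, which is where the fourth root comes from.

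You instead telescope the projected-gradient regret inequality in function values, a dynamic-regret bound with path length $\sum_k\delta_k$. You also control the minimiser drift through gradient perturbations and strong monotonicity, which gives $\delta_k=O(\lambda)$ rather than $O(\sqrt\lambda)$. The payoff is a sharper intermediate bound $\frac1n\sum_k e_k\lesssim\frac1{hn}+\frac{\lambda}{h}$, i.e.\ a square root instead of a fourth root. Inserted into Theorem~\ref{thm:main_final_bound}, balancing $\frac1{\lambda N}$ against $\lambda^{1/2}$ would give $N^{-1/3}$ instead of $N^{-1/5}$.

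Your telescoping is also robust to the paper's weaker drift estimate. With $\delta_k\le\sqrt{C_{\mathrm{drift}}\lambda}$ you get $\bigl(\frac1{hn}+\frac{\sqrt\lambda}{h}\bigr)^{1/2}$. Since $\bigl(\frac1{hn}+\frac{\sqrt\lambda}{h}\bigr)^2\le2\bigl(\frac1{hn}+\frac{\lambda}{h^2}\bigr)$ whenever $hn\ge1$, this is exactly the stated form; the case $hn<1$ is trivial by boundedness. What the paper's route buys in exchange is a per-iterate bound $|w^k-w^k_*|_2^2\le(1-\frac{h\kappa}{2})^k|w^0-w^0_*|_2^2+O(\lambda/h^2)$ with geometric forgetting of the initialisation. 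Its drift lemma also needs only zeroth-order (objective-value) information.

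Two justifications in Step~2 need repairing, although the conclusions hold.

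For the target term, avoid the chain rule through $\nabla_a\widehat Q$. Assumption~\ref{ass:Q_oracle} gives no first-derivative bound, and $R_{\widehat Q}$ controls $\mathcal T\widehat Q$, not $\nabla_a\widehat Q$. Use instead the score-function identity from the proof of Lemma~\ref{lem:uniform_actor_gradient_bound}: $\nabla_w\int_A\psi(s,a)\pi_w(da|s)=\int_{\mathbb R^M}\psi(s,\tanh u)\,x(s)\Sigma^{-1}(u-m_w(s))\,q_w(du|s)$, with $\psi=\log\frac{d\pi^k}{d\Lambda}-\log\frac{d\pi^{k-1}}{d\Lambda}$. The $s$-dependent normaliser drops out. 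By the mirror-descent closed form and \eqref{eq:uniform_target_log_density_bound}, the remainder is bounded by $3\lambda\widehat Q_{\max}$, so this contribution is at most $3\lambda\widehat Q_{\max}\sqrt{2/\pi}\sum_i\sigma_i^{-1}$.

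For the replay term, your sentence is backwards. Since $\rho_k-\rho_{k-1}=\chi\lambda\bigl(d_\rho^{\pi_{w^{k-1}}}-\rho_{k-1}\bigr)$, its total variation is at most $2\chi\lambda$, which is already $O(\lambda)$ for fixed $\chi$. Invoking $\chi\le1/\lambda$ would only give $O(1)$. This factor multiplies the per-state gradient bound $C_{\mathrm{grad}}$.

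Two smaller points in Step~3. The uniform bound $e_k\le K_{\Sigma,\mathrm R}+2\widehat Q_{\max}/\tau$ needs the log-density bound on $\pi^k$ as well as Lemma~\ref{lem:uniform_entropy_bound}. Replacing $\lambda/h$ by $\lambda/h^2$ uses $h<1$.
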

See Appendix \ref{sec:proof_of_actor_tracking} for a proof.

Lemma \ref{lemma:actor_tracking} separates the actor optimisation error from the drift of the target and shows that, when the target policy arises from the mirror-descent update \eqref{eq:mirror_descent}, the drift of the target can be directly controlled through the mirror-descent step size $\lambda>0$. In particular, small $\lambda$ makes consecutive target policies closer and therefore allows the actor, despite performing only a single gradient step per iteration, to track the evolving target more accurately.

However, of course choosing $\lambda$ arbitrarily small does not come for free. Indeed, Theorem \ref{thm:main_final_bound} contains the term $\frac{1}{\lambda n}$, reflecting the progress made by the underlying mirror-descent updates that we are approximating. This demonstrates an intrinsic trade off for Algorithm \ref{algo:SAC_fixed_replay} between smaller values of $\lambda$, which produce a slowly moving target that is easier for the actor to track, and larger values of $\lambda$ which allow for more aggressive policy improvement but increase the actor tracking error. Therefore balancing these effects is required to establish a result which demonstrates convergence up to approximation errors.

It is now worth addressing the behaviour of the actor tracking error when one has the classical Gibbs target \eqref{eq:classic_target} rather than the mirror descent target. For the following corollary and its proof, $\{w^n\}_{n\geq0}$ denotes the iterates of Algorithm~\ref{algo:SAC_fixed_replay} with $\pi^{n+1}$ replaced by $\pi_{\mathrm G}^{n+1}$ in the actor update in Algorithm \ref{algo:SAC_fixed_replay}. To that end, for $k\geq1$ let us denote $w_{\mathrm G,*}^k=\argmin_{w\in\mathfrak B(\mathrm R)}J_k(w,\pi_{\mathrm G}^k)$, which is unique under the conditions of Theorem \ref{thm:strong_convexity_L_smoothness_objective}.

\begin{corollary}
\label{cor:gibbs_actor_tracking}
Let Assumptions \ref{ass:Q_oracle} and \ref{as:policy_Evalues} and the condition of Theorem \ref{thm:strong_convexity_L_smoothness_objective} hold and let $0<h<\frac{\tau}{2\tau+R_{\widehat Q}}$. Then there exists $C_{\mathrm G}<\infty$ such that, for every $n\geq1$,
\begin{align}
&\frac1n\sum_{k=1}^{n-1}
\left(J_k(w^k,\pi_{\mathrm G}^{k})-J_k(w_{\mathrm G,*}^{k},\pi_{\mathrm G}^{k})\right)^{\frac{1}{2}}\leq C_{\mathrm G}
\left(\frac{1}{hn}+\frac{1}{h^2}\left(\frac{1}{\tau}+\chi\lambda\right)\right)^{\frac{1}{4}}.
\end{align}
\end{corollary}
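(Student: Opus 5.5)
The plan is to repeat the argument behind Lemma~\ref{lemma:actor_tracking}, replacing the mirror-descent target $\pi^k$ by the Gibbs target $\pi_{\mathrm G}^k$, and to redo only the target-drift estimate. Write $\mu:=(1-\gamma)(2-R_{\widehat Q}/\tau)c_{\Sigma,\mathrm R}\lambda_P$ and $L:=2+R_{\widehat Q}/\tau$. By Theorem~\ref{thm:strong_convexity_L_smoothness_objective}, each $J_k(\cdot,\pi_{\mathrm G}^k)$ is $\mu$-strongly convex and $L$-smooth on $\mathfrak B(\mathrm R)$. Since $h<1/L$, the standard projected-gradient contraction gives
\begin{equation}
|w^{k+1}-w_{\mathrm G,*}^{k+1}|_2\leq(1-h\mu)^{1/2}|w^k-w_{\mathrm G,*}^{k+1}|_2\leq(1-h\mu)^{1/2}\left(|w^k-w_{\mathrm G,*}^{k}|_2+\delta_k\right),
\end{equation}
where $\delta_k:=|w_{\mathrm G,*}^{k+1}-w_{\mathrm G,*}^{k}|_2$ is the drift of the minimisers.

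Smoothness and optimality of $w_{\mathrm G,*}^k$ over the ball then give $J_k(w^k,\pi_{\mathrm G}^k)-J_k(w_{\mathrm G,*}^k,\pi_{\mathrm G}^k)\leq \tfrac{L}{2}|w^k-w^k_{\mathrm G,*}|_2^2+|\nabla J_k(w^k_{\mathrm G,*})|_2\,|w^k-w^k_{\mathrm G,*}|_2$. Because the ball is bounded, the simplest route is to bound this gap by $C\,|w^k-w_{\mathrm G,*}^k|_2$, using that gradients are uniformly bounded on $\mathfrak B(\mathrm R)$. Jensen's inequality then gives $\frac1n\sum_k(\cdot)^{1/2}\leq C\big(\frac1n\sum_k|w^k-w^k_{\mathrm G,*}|_2^2\big)^{1/4}$, which explains the exponent $\tfrac14$. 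Squaring the recursion with Young's inequality yields $e_{k+1}^2\leq(1-h\mu/2)e_k^2+\frac{C}{h\mu}\delta_k^2$ with $e_k:=|w^k-w^k_{\mathrm G,*}|_2$. Summing over $k$ gives
\begin{equation}
\frac1n\sum_k e_k^2\lesssim \frac{e_0^2}{h n}+\frac{1}{h^2}\sup_k\delta_k^2\leq\frac{4\mathrm R^2}{hn}+\frac{1}{h^2}\sup_k\delta_k^2 .
\end{equation}

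The main obstacle is bounding $\delta_k$ for the Gibbs target, and this is where the non-vanishing $\tfrac1\tau$ term enters. By strong convexity and first-order optimality on the ball, $\mu\delta_k\leq \sup_{w\in\mathfrak B(\mathrm R)}|\nabla_w J_{k+1}(w,\pi_{\mathrm G}^{k+1})-\nabla_w J_k(w,\pi_{\mathrm G}^k)|_2$. The normalisation constants cancel, so $\nabla_wJ_k(w,\pi_{\mathrm G}^k)=\int\nabla_w\big(\KL(\pi_w|\Lambda)+\tfrac1\tau\int\widehat Q^{k-1}d\pi_w\big)\rho_k(ds)$. The difference therefore splits into two pieces:
\begin{itemize}
\item the change of the replay law, $\rho_{k+1}-\rho_k$, which by the exponential-mixture Definition~\ref{def:ema_replay} has total variation of order $\chi\lambda$ and multiplies a uniformly bounded integrand (via the reparametrisation $a=\tanh(m_w+\Sigma^{1/2}\xi)$ and bounded $\widehat Q$, together with Lemma~\ref{lem:uniform_entropy_bound} type bounds);
\item the change of the critic, $\tfrac1\tau\nabla_w\int(\widehat Q^{k}-\widehat Q^{k-1})d\pi_w$.
\end{itemize}
Unlike the mirror-descent case, no factor $\lambda$ damps the second piece. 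I would bound it crudely: integrate by parts through the Gaussian reparametrisation so that $\nabla_w\int f\,d\pi_w=\int x(s)\Sigma^{-1}(u-m_w)f(\tanh u)q_w(du)$, which is controlled by $C|f|_\infty\leq 2C\widehat Q_{\max}$. This gives a contribution of order $\tfrac1\tau$ that does not vanish, whereas the first piece is of order $\chi\lambda$.

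Combining the two pieces yields $\delta_k^2\lesssim\tfrac1\tau+\chi\lambda$, up to constants depending on $\widehat Q_{\max}$, $\tau$ and $\mu$; I would absorb the squares using boundedness of $\delta_k\leq2\mathrm R$. Inserting this into the averaged recursion and taking the fourth root gives the claimed bound with a suitable $C_{\mathrm G}$.
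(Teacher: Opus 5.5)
Your proposal is correct and follows the paper's overall skeleton. Both use strong convexity and smoothness from Theorem~\ref{thm:strong_convexity_L_smoothness_objective} (your $\mu$ is the paper's $\kappa$). Both use the projected-gradient contraction plus Young's inequality, giving $e_{k+1}^2\le(1-\tfrac{h\mu}{2})e_k^2+\tfrac{2}{h\mu}\delta_k^2$. Both combine the gradient bound of Lemma~\ref{lem:uniform_actor_gradient_bound} with smoothness to get $J_k(w^k,\pi_{\mathrm G}^k)-J_k(w^k_{\mathrm G,*},\pi_{\mathrm G}^k)\le C e_k$, and then a power-mean step produces the exponent $\tfrac14$.

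Where you genuinely differ is the drift estimate for the minimisers. The paper argues at the level of function values. It bounds the oscillation over $\mathfrak B(\mathrm R)$ of $D_k(w)=J_{k+1}(w,\pi_{\mathrm G}^{k+1})-J_k(w,\pi_{\mathrm G}^k)$ by $\tfrac{4\widehat Q_{\max}}{\tau}+2\chi\lambda M_{\mathrm G}$, using only sup-norm and total-variation estimates: the KL bound $M_{\mathrm G}$ and $|\pi_w-\pi_{w'}|_{\mathcal M(A)}\le2$. It combines this with quadratic growth, $\tfrac{\mu}{2}\delta_k^2\le D_k(w^k_{\mathrm G,*})-D_k(w^{k+1}_{\mathrm G,*})$, so $\delta_k^2\lesssim\tfrac1\tau+\chi\lambda$ comes out directly.

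You instead use the variational-inequality bound $\mu\delta_k\le|\nabla_wJ_{k+1}(w^k_{\mathrm G,*},\pi_{\mathrm G}^{k+1})-\nabla_wJ_k(w^k_{\mathrm G,*},\pi_{\mathrm G}^k)|_2$. This is valid, since it only needs strong monotonicity of the gradient on the convex ball. You then control the gradient perturbation through the score-function identity. This needs gradient-level rather than value-level estimates, with constants involving $\sum_i\sigma_i^{-1}$. In return it gives the sharper linear bound $\delta_k\lesssim\tfrac1\tau+\chi\lambda$, which you deliberately weaken via $\delta_k^2\le2\mathrm R\,\delta_k$ to match the stated form.

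The extra sharpness is wasted here because of the non-vanishing $\tfrac1\tau$ term. However, the same argument applied to the mirror-descent target would give $|w_*^{n+1}-w_*^n|_2^2=O(\lambda^2)$ instead of the $O(\lambda)$ of Lemma~\ref{lemma:policy_drift}, and hence a smaller tracking term in Lemma~\ref{lemma:actor_tracking}.

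One small indexing point: $w^0_{\mathrm G,*}$ is not defined, because the Gibbs targets start at $\pi_{\mathrm G}^1$. Start the recursion at $k=1$ with $e_1\le2\mathrm R$, matching the range of the sum in the statement.
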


The classical Gibbs target incurs the non-vanishing term $\frac{1}{\tau}$. Consequently, even when the replay distribution changes slowly, the bound in Corollary \ref{cor:gibbs_actor_tracking} does not guarantee that the actor can track the Gibbs target arbitrarily closely. In contrast, the mirror-descent formulation provides an explicit mechanism for slowing the evolution of the target to match the optimisation timescale of the actor. See Figure \ref{fig:actor_tracking_lambda}.

\begin{theorem}\label{thm:tuned_convergence_rate}
Let Assumptions \ref{ass:Q_oracle}, \ref{as:optimal_occupancy}, and \ref{as:policy_Evalues} hold and suppose that the conditions of Theorem~\ref{thm:strong_convexity_L_smoothness_objective} hold. For any integer $N\geq1$, let $\vartheta:=\min\left\{\frac{1}{\tau},\frac{1}{\chi}\right\}$ and let $\left\{\pi_{w^r}\right\}_{r=0}^{N-1}$ be the iterates of Algorithm \ref{algo:SAC_fixed_replay} with $\lambda = \frac{\vartheta}{2N^{\frac{4}{5}}}$ and $0 < h <\frac{\tau}{2\tau+R_{\widehat Q}}$. Then there exists $C<\infty$ such that
\begin{align}
    \min_{0\leq r\leq N-1}
    V_\tau^{\pi_{w^r}}(\rho)-V_\tau^{\pi^*}(\rho)
    \leq
    C\left(N^{-\frac15}+\varepsilon_{\mathrm{approx}}(N)\right).
\end{align}
\end{theorem}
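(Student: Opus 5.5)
The plan is to combine Theorem \ref{thm:main_final_bound} with Lemma \ref{lemma:actor_tracking} and then choose $\lambda$ to balance the terms. First I check the hypotheses. With $\lambda=\frac{\vartheta}{2N^{4/5}}$ and $\vartheta\leq\frac1\tau$, we get $\tau\lambda\leq\frac{1}{2N^{4/5}}<1$, so $0<\tau\lambda<1$. Likewise $\vartheta\leq\frac1\chi$ gives $\chi\lambda\leq\frac12\leq1$, so the standing replay condition $0<\chi\lambda\leq1$ also holds. Assumptions \ref{ass:Q_oracle}, \ref{as:optimal_occupancy}, \ref{as:policy_Evalues}, the curvature condition \eqref{eq:fixed_sigma_curvature_condition} and the step-size restriction on $h$ are all in force. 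Hence both Theorem \ref{thm:main_final_bound} and Lemma \ref{lemma:actor_tracking} apply with $n=N$.

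Substituting the tracking bound into Theorem \ref{thm:main_final_bound} gives
\begin{equation}
\min_{0\leq r\leq N-1}V_\tau^{\pi_{w^r}}(\rho)-V_\tau^{\pi^*}(\rho)\leq \mathrm C\left(\frac{1}{\lambda N}+\lambda+C_1\left(\frac{1}{hN}+\frac{\lambda}{h^2}\right)^{\frac14}+\varepsilon_{\mathrm{approx}}(N)\right).
\end{equation}
The first step is to make sure the constants $\mathrm C$ and $C_1$ do not depend on $\lambda$ or $N$. They depend only on $\tau,\gamma,\widehat Q_{\max},A_2,\mathrm R,\Sigma,\lambda_P,R_{\widehat Q}$, and possibly on $\lambda$ through bounded quantities. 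I would check the proofs of Theorem \ref{thm:main_final_bound} and Lemma \ref{lemma:actor_tracking} for this. Where $\lambda$ enters, it enters through $\tau\lambda\in(0,1)$ or $\chi\lambda\in(0,1]$, and these can be bounded uniformly, for instance via uniform entropy bounds such as Lemma \ref{lem:uniform_entropy_bound}. This uniformity is the main obstacle. If the constants secretly degrade as $\lambda\to0$, for example through a factor $(1-\tau\lambda)^{-1}$, the bound would still hold because $\tau\lambda\leq\frac12$. So the plan is to confirm that every such dependence is of this bounded form.

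Next I evaluate each term with $\lambda=\frac{\vartheta}{2}N^{-4/5}$ and $h$ fixed:
\begin{itemize}
\item $\frac{1}{\lambda N}=\frac{2}{\vartheta}N^{-1/5}$.
\item $\lambda=\frac{\vartheta}{2}N^{-4/5}\leq \frac{\vartheta}{2}N^{-1/5}$.
\item $\left(\frac{1}{hN}+\frac{\vartheta}{2h^2}N^{-4/5}\right)^{1/4}\leq\left(\frac1h+\frac{\vartheta}{2h^2}\right)^{1/4}N^{-1/5}$, using $N^{-1}\leq N^{-4/5}$ and $(N^{-4/5})^{1/4}=N^{-1/5}$.
\end{itemize}
The exponent $\frac45$ is exactly the choice that balances $\frac{1}{\lambda N}$ against $\lambda^{1/4}$. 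Collecting all $N$-independent factors into a single $C$ yields $C\left(N^{-1/5}+\varepsilon_{\mathrm{approx}}(N)\right)$, which is the claim.
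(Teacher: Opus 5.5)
Your proposal is correct and follows the paper's proof essentially line for line. You check that $\tau\lambda\le\frac12$ and $\chi\lambda\le\frac12$, insert Lemma~\ref{lemma:actor_tracking} into Theorem~\ref{thm:main_final_bound} with $n=N$, and bound each of $\frac{1}{\lambda N}$, $\lambda$ and $\bigl(\frac{1}{hN}+\frac{\lambda}{h^2}\bigr)^{1/4}$ by an $N$-independent multiple of $N^{-\frac15}$.

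The uniformity concern you flag is harmless, and in fact there is no $\lambda$-dependence at all. The paper's constants $\mathrm C$ and $C_1$ are built only from $\gamma,\tau,\chi,\widehat Q_{\max},A_2,K_{\Sigma,\mathrm R},\kappa,\mathrm R$ and $|w^0-w_*^0|_2\le2\mathrm R$. Every $\lambda$-weighted sum is controlled by $\lambda\sum_k(1-\tau\lambda)^{n-k}\le\frac1\tau$.
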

Theorem \ref{thm:tuned_convergence_rate} demonstrates that for a carefully tuned proximal penalty/step-size $\lambda > 0$ and step size $h > 0$ and any projecting radius $\mathrm{R}> 0$ one can establish a finite time convergence rate up to approximation errors. One promising route to achieving a faster convergence rate, at the cost of a more intricate analysis, is by also incorporating the critic dynamics and controlling the accuracy to which the critic is solved to at each iteration. Another one is to restrict $\mathrm{R} > \delta$ for some $\delta > 0$ sufficiently large in order to ensure that the minimisers $w_*^n$ lie in the interior of $\mathfrak{B}(\mathrm{R})$. We do not pursue this here in order to retain generality for every $\mathrm{R}>0$.

\begin{figure}[!ht]
\centering
\includegraphics[width=\linewidth]{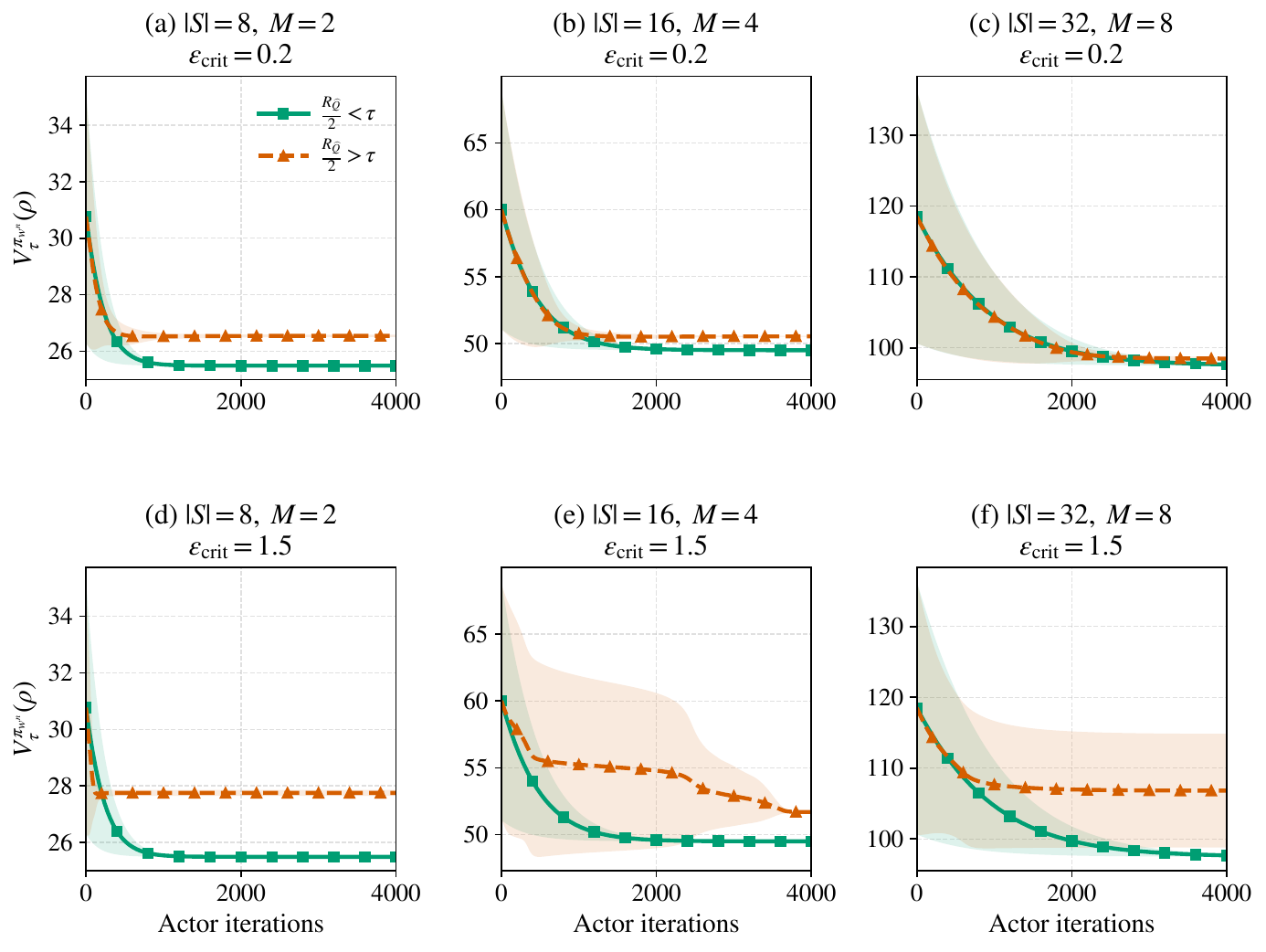}
\caption{Training curves of Algorithm~\ref{algo:SAC_fixed_replay} for the MDP in Appendix~\ref{sec:implementation_details}, with varying state-space size $|S|$ and action dimension $M$, $A=(-1,1)^M$, and temperature $\tau=1$. Within each panel, the two critics have equal error $\varepsilon_{\mathrm{crit}}$ but different $R_{\widehat Q}$. The curvature condition of Theorem~\ref{thm:strong_convexity_L_smoothness_objective} is satisfied in green and violated in orange. Curves show means and shaded regions show one standard deviation across nine paired initialisations.}
\label{fig:critic_curvature_performance}
\end{figure}

\begin{figure}[!ht]
\centering
\includegraphics[width=\linewidth]{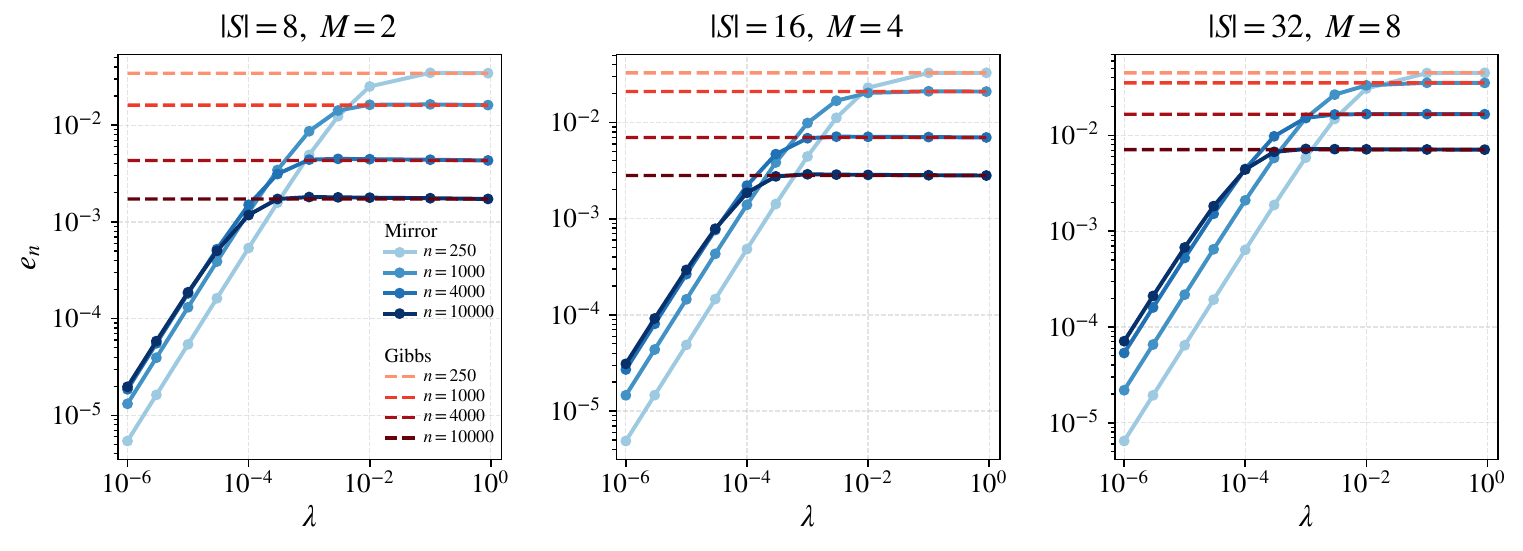}
\caption[Actor tracking error against the mirror-descent step size]{Averaged actor tracking error $ e_n:=\frac1n\sum_{k<n}\left(J_k(w^k,\pi^k)-J_k(w_*^k,\pi^k)\right)^{\frac12}$ against the mirror-descent step size $\lambda$ for the MDP defined in Appendix \ref{sec:implementation_details} with exact critics, evaluated at four actor budgets $n$ (light to dark). Solid lines use the mirror-descent target \eqref{eq:mirror_descent} and dashed lines the Gibbs target \eqref{eq:classic_target} at the same budget. For the Gibbs target $\lambda$ does not enter \eqref{eq:classic_target} itself, only the replay dynamics of Definition \ref{def:ema_replay}. See Appendix \ref{sec:implementation_details} for more details.}
\label{fig:actor_tracking_lambda}
\end{figure}

\section{Conclusion and limitations}
In this work, we prove convergence guarantees for a mathematical formalisation of Soft Actor Critic and demonstrate theoretical advantages of having a target policy arising from mirror descent rather than the classical Gibbs measure. 

While the results in this work can be directly demonstrated in simplified settings (see Figures \ref{fig:critic_curvature_performance} and \ref{fig:actor_tracking_lambda}), the analysis is only performed in the population setting and thus the effect of sampling remains open. Moreover, we do not incorporate explicit critic dynamics in this work which remains a challenging direction for future work.

\bibliographystyle{abbrvnat}
\bibliography{refs}

\appendix
\section{Notation}\label{sec:notation}
Let $(E,d_E)$ denote a Polish space, that is, a complete separable metric space. We always equip a Polish space with its Borel $\sigma$-field $\mathcal B(E)$. We denote by $B_b(E)$ the space of bounded measurable functions $f:E\to\mathbb R$, endowed with the supremum norm
\begin{equation}
|f|_{B_b(E)}:=\sup_{x\in E}|f(x)|.
\end{equation}
We denote by $\mathcal M(E)$ the space of finite signed measures on $E$, endowed with the total variation norm
\begin{equation}
|\mu|_{\mathcal M(E)}
:=
\sup_{|f|_{B_b(E)}\leq1}
\left|\int_E f(x)\mu(dx)\right|,
\end{equation}
and by $\mathcal P(E)\subset\mathcal M(E)$ the set of probability measures on $E$. For Polish spaces $E$ and $F$, $\mathcal P(E|F)$ denotes the set of probability kernels from $F$ to $E$. For $\mu,\nu\in\mathcal P(E)$, their relative entropy is
\begin{equation}
\KL(\mu|\nu)
:=
\int_E\log\frac{d\mu}{d\nu}(x)\mu(dx)
\end{equation}
when $\mu$ is absolutely continuous with respect to $\nu$, and $\KL(\mu|\nu):=+\infty$ otherwise.
For $\mu\in\mathcal P(E)$ and a fixed $\sigma$-finite reference measure $\nu$ on $E$, the entropy is $\mathrm{H}(\mu):=-\int_E\log\frac{d\mu}{d\nu}(x)\mu(dx)$ when $\mu$ is absolutely continuous with respect to $\nu$ and the integral is well defined, and $-\infty$ when it is not absolutely continuous; the reference measure is $\Lambda$ on $A$ and $\Lambda_{\mathbb R^M}$ for the Gaussian laws on $\mathbb R^M$.
Given $p\geq1$, a measure $\mu\in\mathcal P(E)$, and a measurable function $f:E\to\mathbb R$, we write
\begin{equation}
|f|_{L^p(\mu)}
:=
\left(\int_E|f(x)|^p\mu(dx)\right)^{\frac{1}{p}}.
\end{equation}

We let $\mathbb N:=\{0,1,2,\ldots\}$. For any $k\in\mathbb N$ with $k\geq1$, we denote the Euclidean inner product on $\mathbb R^k$ by $\langle\cdot,\cdot\rangle$ with norm denoted by $|\cdot|_2$, and we write $|z|_1:=\sum_{i=1}^k|z_i|$ for the $\ell^1$ norm. We denote the Euclidean ball of radius $R > 0$ in $\mathbb{R}^d$ by $\mathfrak B(\mathrm R):=\{w\in\mathbb R^d:|w|_2\leq\mathrm R\}$.
We denote by $\mathbb S^k$ the space of real symmetric $k\times k$ matrices and by $I_k$ the $k\times k$ identity matrix. For a real matrix $H$, its Euclidean operator norm is
\begin{equation}
|H|_{\op}:=\sup_{|v|_2=1}|Hv|_2.
\end{equation}
For $H\in\mathbb S^k$, $\lambda_{\min}(H)$ denotes its minimum eigenvalue. Given $H,G\in\mathbb S^k$, we write $H\preceq G$ if $G-H$ is positive semidefinite and $H\prec G$ if $G-H$ is positive definite. For $z=(z_1,\ldots,z_k)\in\mathbb R^k$, $\operatorname{diag}(z_1,\ldots,z_k)$ denotes the $k\times k$ diagonal matrix with diagonal entries $z_1,\ldots,z_k$.

For an open set $U\subset\mathbb R^k$, $C^2(U)$ denotes the space of twice continuously differentiable real-valued functions on $U$. For a Polish space $E$, $C(E;\mathbb S^k)$ denotes the space of continuous functions from $E$ to $\mathbb S^k$. 

To ease notation, for each $\pi \in \mathcal{P}(A|S)$, $s \in S$ and $a \in A$ we define $P_{\pi}(ds'|s) := \int_{A} P(ds'|s,a)\pi(da|s)$ and $P^{\pi}(ds',da'|s,a) := P(ds'|s,a)\pi(da'|s').$ 

\section{Background}\label{sec:background}

The state-occupancy kernel $d^{\pi} \in \mathcal{P}(S|S)$ is defined by
\begin{equation}
\label{eq:occupancy_s}
d^{\pi}(ds'|s)=(1-\gamma)\sum_{n=0}^{\infty}\gamma^nP^n_{\pi}(ds'|s)\,,
\end{equation}
where $P^n_{\pi}$ is the $n$-times product of the kernel $P_{\pi}$ with $P^0_{\pi}(ds'|s)\coloneqq \delta_s(ds')$. Given any state distribution $\rho \in \mathcal{P}(S)$, we define the state-occupancy measure as
\begin{equation}\label{eq:occupancy_defn}
d^{\pi}_{\rho}(ds) = \int_{S} d^{\pi}(ds|s')\rho(ds').
\end{equation}
\begin{definition}
\label{def:ema_replay}
Let $\rho\in\mathcal P(S)$ be an initial-state distribution, let $\chi>0$, and suppose that $0<\chi\lambda\leq 1$. Given a sequence of policies $\{\pi_{w^n}\}_{n\geq0}$, define the sequence of replay distributions $\{\rho_n\}_{n\geq0}\subset\mathcal P(S)$ recursively by
\begin{equation}
\label{eq:ema_replay}
\rho_0:=\rho,
\qquad
\rho_{n+1}
:=
(1-\chi\lambda)\rho_n
+
\chi\lambda\,d_\rho^{\pi_{w^n}},
\qquad n\geq0.
\end{equation}
Equivalently, for every $n\geq1$,
\begin{equation}
\label{eq:ema_replay_explicit}
\rho_n
=
(1-\chi\lambda)^n\rho
+
\chi\lambda
\sum_{j=0}^{n-1}
(1-\chi\lambda)^{n-1-j}
d_\rho^{\pi_{w^j}}.
\end{equation}
\end{definition}

\begin{lemma}\label{lem:replay_measure_comparison}
For every $n\geq0$, the replay measure satisfies
\begin{equation}
    \rho_n\geq(1-\gamma)\rho,
\end{equation}
\begin{equation}
    \lambda_{\min}\left(\int_Sx(s)x(s)^\top\rho_n(ds)\right)
    \geq(1-\gamma)\lambda_P.
\end{equation}
\end{lemma}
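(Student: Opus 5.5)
The plan is to prove the measure inequality $\rho_n\geq(1-\gamma)\rho$ first, directly from the explicit representation \eqref{eq:ema_replay_explicit}, and then obtain the eigenvalue bound by integrating the positive semidefinite matrix $x(s)x(s)^\top$ against this inequality.

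\textbf{Step 1: every occupancy measure dominates $(1-\gamma)\rho$.} For any policy $\pi$, the $n=0$ term in \eqref{eq:occupancy_s} is $(1-\gamma)\delta_s$, and all other terms are nonnegative measures. Hence $d^{\pi}(\cdot|s)\geq(1-\gamma)\delta_s$. Integrating against $\rho$ as in \eqref{eq:occupancy_defn} gives
\[
d^{\pi}_\rho\geq(1-\gamma)\rho
\]
setwise, on every Borel set. In particular this holds for each $\pi_{w^j}$.

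\textbf{Step 2: pass the bound through the convex combination.} Take $n\geq1$. In \eqref{eq:ema_replay_explicit}, $\rho_n$ is a combination of $\rho$ and the measures $d_\rho^{\pi_{w^j}}$. The weights $(1-\chi\lambda)^n$ and $\chi\lambda(1-\chi\lambda)^{n-1-j}$ are nonnegative because $0<\chi\lambda\leq1$. They sum to one by the geometric sum:
\[
(1-\chi\lambda)^n+\chi\lambda\sum_{j=0}^{n-1}(1-\chi\lambda)^{n-1-j}=(1-\chi\lambda)^n+1-(1-\chi\lambda)^n=1.
\]
Since $\rho\geq(1-\gamma)\rho$ trivially and each $d_\rho^{\pi_{w^j}}\geq(1-\gamma)\rho$ by Step 1, the combination satisfies $\rho_n\geq(1-\gamma)\rho$. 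The case $n=0$ is immediate because $\rho_0=\rho$.

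\textbf{Step 3: the eigenvalue bound.} Fix a unit vector $v\in\mathbb R^d$. The function $s\mapsto|x(s)^\top v|_2^2$ is nonnegative and bounded, since $|x(s)|_{\op}\leq1$. The measure inequality from Step 2 therefore gives
\[
v^\top\Big(\int_S x x^\top d\rho_n\Big)v=\int_S|x(s)^\top v|_2^2\,\rho_n(ds)\geq(1-\gamma)\int_S|x(s)^\top v|_2^2\,\rho(ds)\geq(1-\gamma)\lambda_P.
\]
Taking the infimum over unit vectors $v$ yields the stated lower bound on $\lambda_{\min}$.

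The argument is elementary. The only point needing care is that the inequalities are setwise inequalities between measures, so they can be integrated against nonnegative functions. This holds because each difference, such as $d^\pi_\rho-(1-\gamma)\rho$, is a nonnegative measure.
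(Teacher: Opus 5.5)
Your proposal is correct and follows essentially the same route as the paper. Both unroll the replay recursion into a convex combination of $\rho$ and the occupancy measures $d_\rho^{\pi_{w^j}}$, apply $d_\rho^{\pi}\geq(1-\gamma)\rho$ to each term, and integrate the quadratic form $|x(s)^\top v|_2^2$ against the resulting measure inequality. The one difference is that you justify $d_\rho^{\pi}\geq(1-\gamma)\rho$ from the $n=0$ term of the occupancy series, whereas the paper simply asserts it.
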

\begin{proof}[Proof of Lemma~\ref{lem:replay_measure_comparison}]
Recall that $    \rho_{n+1}:=(1-\chi\lambda)\rho_n+\chi\lambda d_\rho^{\pi_{w^n}}$. Let $\beta:=1-\chi\lambda$. Unrolling this gives
\begin{equation}\label{eq:ema_replay_unrolled}
    \rho_n=\beta^n\rho+(1-\beta)\sum_{j=0}^{n-1}\beta^{n-1-j}d_\rho^{\pi_{w^j}}.
\end{equation}
Since $d_\rho^\pi\geq(1-\gamma)\rho$ for every policy $\pi$,
\begin{equation}
    \rho_n\geq\bigl(\beta^n+(1-\gamma)(1-\beta^n)\bigr)\rho\geq(1-\gamma)\rho.
\end{equation}
Finally, for any $v\in\mathbb R^d$,
\begin{equation}
    v^\top\left(\int_Sx(s)x(s)^\top\rho_n(ds)\right)v
    \geq(1-\gamma)v^\top\left(\int_Sx(s)x(s)^\top\rho(ds)\right)v,
\end{equation}
which concludes the proof.
\end{proof}

We recall the dynamic programming principle from \citep[Appendix~B]{david_fisher}.
\begin{theorem}[Dynamic Programming Principle]
\label{thm:dynamics_programming}
Let $\tau > 0$. The optimal value function $V^{*}_{\tau}$ is the unique bounded solution of the following Bellman equation:
\[
V^{\ast}_{\tau}(s)=-\tau\log\int_{A}\exp\left(-
\frac{1}{\tau}Q^{\ast}_{\tau}(s,a)\right)\Lambda(da),
\]
where $Q^*_{\tau}\in B_b(S\times A)$ is defined by
\[
Q^{*}_{\tau}(s,a)=c(s,a)+\gamma\int_S V_{\tau}^{*}(s')P(ds'|s,a)\,,
\quad \forall (s,a)\in S\times A\,.
\]Moreover, there is an optimal policy $\pi^* \in \mathcal{P}(A|S)$  given by
\begin{equation}
\label{eq:optimal_policy}
\pi^*(da|s) = \exp\left(-\frac{1}{\tau }(Q^{\ast}_{\tau}(s,a)-V^{\ast}_{\tau}(s))\right)\Lambda(da)\,,
\quad \forall s\in S.
\end{equation}
 Finally, for every $\pi\in\mathcal P(A|S)$ with
  $\mathrm H(\pi(\cdot|\cdot))\in B_b(S)$,
  the value function $V^\pi_\tau$ is the unique bounded
  solution of the following Bellman equation for all $s\in S$
\[
V^{\pi}_{\tau}(s)=\int_{A}\left(Q_\tau^\pi(s,a)+\tau \log \frac{d \pi}{d\Lambda}(s,a)\right)\pi(da|s)\,.
\]
\end{theorem}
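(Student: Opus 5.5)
The plan is to work with the Bellman operators on $B_b(S)$ and use Banach's fixed point theorem together with the Gibbs variational principle. Let $\mathcal T^{*}:B_b(S)\to B_b(S)$ be
\[
(\mathcal T^{*}V)(s):=-\tau\log\int_A\exp\Bigl(-\tfrac1\tau\bigl(c(s,a)+\gamma\textstyle\int_S V(s')P(ds'|s,a)\bigr)\Bigr)\Lambda(da).
\]
For $\pi$ with $\mathrm H(\pi(\cdot|\cdot))\in B_b(S)$, let
\[
(\mathcal T^\pi V)(s):=\int_A\bigl(c(s,a)+\gamma\textstyle\int_S V\,dP(\cdot|s,a)+\tau\log\frac{d\pi}{d\Lambda}(s,a)\bigr)\pi(da|s).
\]
Both operators are monotone and commute with constant shifts: $\mathcal T(V+k)=\mathcal TV+\gamma k$. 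For $\mathcal T^{*}$ this uses that $-\tau\log\int e^{-f/\tau}d\Lambda$ is monotone and translation equivariant. Hence both are $\gamma$-contractions in the supremum norm, and each maps $B_b(S)$ into itself. For $\mathcal T^{*}$ this holds because $c$ is bounded and $\Lambda$ is a probability measure; for $\mathcal T^\pi$ it holds because the entropy term is bounded. Banach's theorem then gives unique bounded fixed points $V^{*}$ of $\mathcal T^{*}$ and $V_\pi$ of $\mathcal T^\pi$.

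Next I identify $V_\pi$ with $V^\pi_\tau$. Since $\mathrm H\le 0$ relative to the normalised $\Lambda$ and the entropy is bounded, the series defining $V^\pi_\tau$ converges absolutely and uniformly, so $V^\pi_\tau\in B_b(S)$. Conditioning on $(s_0,a_0)$ and using the Markov property and Fubini then shows $V^\pi_\tau=\mathcal T^\pi V^\pi_\tau$. By uniqueness of the fixed point, $V^\pi_\tau=V_\pi$, which is the last claim of the theorem.

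The link between the two operators is the Donsker--Varadhan/Dupuis--Ellis variational formula. For bounded measurable $g$ on $A$,
\[
\inf_{m\in\mathcal P(A)}\Bigl\{\int g\,dm+\tau\KL(m|\Lambda)\Bigr\}=-\tau\log\int e^{-g/\tau}d\Lambda,
\]
and the infimum is attained exactly at $m\propto e^{-g/\tau}\Lambda$. Because $\Lambda$ is normalised, $-\tau\mathrm H(m)=\tau\KL(m|\Lambda)$ and $\int\tau\log\frac{dm}{d\Lambda}\,dm=\tau\KL(m|\Lambda)$. Applying the formula statewise with $g=Q^{*}_\tau(s,\cdot)$, where $Q^*_\tau:=c+\gamma PV^{*}$, gives $\mathcal T^\pi V^{*}\ge\mathcal T^{*}V^{*}=V^{*}$ for every admissible $\pi$. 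Equality holds for the Gibbs kernel $\pi^{*}$ in \eqref{eq:optimal_policy}; its normalising constant is exactly $e^{-V^{*}/\tau}$ by the fixed point equation. Since $Q^*_\tau$ is bounded, $\log\frac{d\pi^{*}}{d\Lambda}=-(Q^*_\tau-V^{*})/\tau$ is bounded, so $\pi^{*}$ has bounded entropy. Thus $V^{*}$ is a fixed point of $\mathcal T^{\pi^{*}}$, and the previous step gives $V^{\pi^{*}}_\tau=V^{*}$.

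It remains to show $V^\pi_\tau\ge V^{*}$ for every $\pi\in\mathcal P(A|S)$; I expect this to be the main obstacle, because $V^\pi_\tau$ may be $+\infty$ and the policy Bellman equation is unavailable. If $\pi$ has bounded entropy, then $V^\pi_\tau=\mathcal T^\pi V^\pi_\tau\ge\mathcal T^{*}V^\pi_\tau$, and monotonicity gives $V^\pi_\tau\ge(\mathcal T^{*})^kV^\pi_\tau\to V^{*}$. For general $\pi$, I would work directly with the trajectory sum. Every term $c-\tau\mathrm H$ is bounded below by $-|c|_{B_b}$, so the expectation is well defined in $\mathbb R\cup\{+\infty\}$ by monotone convergence. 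I would then compare the truncated sums over $n<K$, plus a tail $\gamma^KV^{*}(s_K)$, against $V^{*}$ by backward induction. At each stage the variational inequality $\int Q^*_\tau\,d\pi+\tau\KL(\pi|\Lambda)\ge V^{*}$ holds, where the left side may be $+\infty$ if $\pi\not\ll\Lambda$. Letting $K\to\infty$ and using boundedness of $V^{*}$ to kill the tail then yields $V^\pi_\tau(s)\ge V^{*}(s)$. The delicate points are the measurability of $s\mapsto\KL(\pi(\cdot|s)|\Lambda)$ and justifying the interchange of expectations with possibly infinite terms; both follow from lower semicontinuity and the lower bound. Combining the three steps, $V^{*}=\inf_\pi V^\pi_\tau=V^{*}_\tau$ is the unique bounded solution, and $\pi^{*}$ is optimal.
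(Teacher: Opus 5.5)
The paper gives no argument of its own for this result. It simply recalls it from \citep[Appendix~B]{david_fisher}, so there is no in-paper proof to compare with step by step. Your proposal is a correct, self-contained proof along the standard lines: a Blackwell-type $\gamma$-contraction of $\mathcal T^{*}$ and $\mathcal T^{\pi}$ on $B_b(S)$, the Gibbs/Donsker--Varadhan variational principle linking the two operators, and a verification step giving $V^{\pi^*}_\tau=V^*$ and $V^\pi_\tau\geq V^*$.

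What your version buys is that it proves exactly the variant stated here. The last claim is formulated for policies with merely bounded entropy, not bounded log-densities. This is what the paper needs for squashed Gaussians, whose log-densities blow up at the boundary of $A$ (see the proof of Lemma~\ref{lem:performance_diff}). Your identification of $V^\pi_\tau$ with the fixed point of $\mathcal T^\pi$ uses only boundedness of $\mathrm H(\pi(\cdot|\cdot))$ together with $\mathrm H\leq 0$ relative to the normalised $\Lambda$. Likewise, your truncation argument with tail $\gamma^K V^*(s_K)$ handles arbitrary $\pi\in\mathcal P(A|S)$, including kernels with $V^\pi_\tau=+\infty$ or $\pi(\cdot|s)\not\ll\Lambda$. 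So the infimum defining $V^*_\tau$ really is over all kernels, and it holds both pointwise in $s$ and after integrating against any $\rho$.

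Two small remarks on polishing:
\begin{itemize}
\item The separate bounded-entropy step via $(\mathcal T^*)^kV^\pi_\tau\to V^*$ is subsumed by your general truncation argument, so you can drop it.
\item When you split $\int_A(c+\gamma PV+\tau\log\frac{d\pi}{d\Lambda})\,d\pi$ into $\int_A(c+\gamma PV)\,d\pi-\tau\mathrm H(\pi(\cdot|s))$, note two things. First, the negative part of $\log\frac{d\pi}{d\Lambda}$ is $\pi$-integrable, since $x\log x\geq -e^{-1}$ and $\Lambda(A)=1$. Second, you need a jointly measurable version of the density in $(s,a)$, so that $\mathcal T^\pi V$ is measurable in $s$.
\end{itemize}
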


We now prove a performance difference lemma for policies with possibly unbounded log densities but with bounded entropy.
\begin{lemma}[Performance difference]
\label{lem:performance_diff}
For every $\rho\in\mathcal P(S)$ and every $\pi\in\mathcal{P}(A|S)$ such that $V_\tau^\pi\in B_b(S)$ and $\mathrm{H}(\pi(\cdot|\cdot))\in B_b(S)$, it holds that
\begin{align}
&V^{\pi}_\tau(\rho)-V^{\pi^*}_\tau(\rho)
\\
&\quad=
\frac{1}{1-\gamma}\int_S
\Bigg(
\int_A
Q^{\pi}_\tau(s,a)(\pi-\pi^*)(da|s)
+\tau\left(\mathrm{H}(\pi^*(\cdot|s))-\mathrm{H}(\pi(\cdot|s))\right)
\Bigg)d_\rho^{\pi^*}(ds).
\label{eq:performance_difference}
\end{align}
\end{lemma}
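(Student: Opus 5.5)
We prove Lemma~\ref{lem:performance_diff} with the standard telescoping argument for the regularised performance difference, carried out under the trajectory measure of $\pi^*$. The only delicate point is that $\log\frac{d\pi}{d\Lambda}$ may be unbounded. We therefore never integrate the log density on its own. It only ever appears inside the entropy $\mathrm H(\pi(\cdot|s))$, which is bounded by assumption.

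\textbf{Step 1: a bounded per-state identity.} Since $\mathrm H(\pi(\cdot|\cdot))\in B_b(S)$, Theorem~\ref{thm:dynamics_programming} gives
\[
V^\pi_\tau(s)=\int_A Q^\pi_\tau(s,a)\pi(da|s)-\tau\mathrm H(\pi(\cdot|s)).
\]
Here $Q^\pi_\tau=c+\gamma PV^\pi_\tau$ is bounded because $c$ and $V^\pi_\tau$ are. For $\pi^*$, its log density $-\frac1\tau(Q^*_\tau-V^*_\tau)$ is bounded, so $\mathrm H(\pi^*(\cdot|\cdot))\in B_b(S)$ and $V^{\pi^*}_\tau$ satisfies the same representation with $Q^{\pi^*}_\tau$.

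\textbf{Step 2: one-step expansion along $\pi^*$.} Define, for each $s$,
\[
g(s):=\int_A Q^\pi_\tau(s,a)\pi^*(da|s)-\tau\mathrm H(\pi^*(\cdot|s))-V^\pi_\tau(s).
\]
Substituting Step 1 for $V^\pi_\tau(s)$ gives
\[
g(s)=\int_A Q^\pi_\tau(s,a)(\pi^*-\pi)(da|s)+\tau\bigl(\mathrm H(\pi(\cdot|s))-\mathrm H(\pi^*(\cdot|s))\bigr),
\]
which is bounded. On the other hand, $Q^\pi_\tau=c+\gamma PV^\pi_\tau$ and
\[
V^{\pi^*}_\tau(s)=\int_A c(s,a)\pi^*(da|s)-\tau\mathrm H(\pi^*(\cdot|s))+\gamma P_{\pi^*}V^{\pi^*}_\tau(s).
\]
Subtracting shows that $\Delta:=V^\pi_\tau-V^{\pi^*}_\tau\in B_b(S)$ satisfies
\[
\Delta=-g+\gamma P_{\pi^*}\Delta.
\]

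\textbf{Step 3: unrolling.} Iterating the relation $N$ times gives
\[
\Delta(s)=-\sum_{k<N}\gamma^kP^k_{\pi^*}g(s)+\gamma^NP^N_{\pi^*}\Delta(s).
\]
The remainder vanishes as $N\to\infty$ because $\Delta$ is bounded, and the series converges uniformly because $g$ is bounded. By \eqref{eq:occupancy_s} the series equals $\frac{1}{1-\gamma}\int g\,d^{\pi^*}(\cdot|s)$. Integrating against $\rho$ and using \eqref{eq:occupancy_defn}, with Fubini justified by boundedness, yields
\[
\Delta(\rho)=-\frac1{1-\gamma}\int_S g\,d^{\pi^*}_\rho.
\]
Inserting the expression for $g$ from Step 2 produces exactly \eqref{eq:performance_difference}.

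\textbf{Main obstacle.} The key point is Step 1. There one must avoid writing $\int_A\tau\log\frac{d\pi}{d\Lambda}\,\pi^*(da|s)$, which need not be finite because it integrates the log density of $\pi$ against $\pi^*$. Working only with $\int Q^\pi_\tau\,d\pi^*$ and the two entropies, rather than with the advantage $A^\pi_\tau$, ensures every quantity is bounded and measurable in $s$. We also need the boundedness of $V^{\pi^*}_\tau=V^*_\tau$ and of $\mathrm H(\pi^*)$, which Theorem~\ref{thm:dynamics_programming} supplies.
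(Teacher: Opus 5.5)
Your proof is correct and follows essentially the same route as the paper. Both derive the one-step identity $\Delta=-g+\gamma P_{\pi^*}\Delta$ from the Bellman representations of $V^\pi_\tau$ and $V^{\pi^*}_\tau$, with $g$ in your sign convention, and then unroll it against $d^{\pi^*}$. The only difference is that you carry out the Neumann-series unrolling explicitly, where the paper cites Lemma~3.2 of \citet{david_fisher}.
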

\begin{proof}
Fix $\rho\in\mathcal P(S)$ and $\pi \in \mathcal{P}(A|S)$. By Theorem \ref{thm:dynamics_programming}, for all $s\in S$ it holds that
\begin{align}
&\int_AQ^\pi_\tau(s,a)(\pi-\pi^*)(da|s)+\tau\left(\mathrm H(\pi^*(\cdot|s))-\mathrm H(\pi(\cdot|s))\right)\\
&\quad=\left(\int_AQ^\pi_\tau(s,a)\pi(da|s)-\tau\mathrm H(\pi(\cdot|s))\right)
-\left(\int_AQ^\pi_\tau(s,a)\pi^*(da|s)-\tau\mathrm H(\pi^*(\cdot|s))\right)\\
&\quad=V^\pi_\tau(s)-V^{\pi^*}_\tau(s)-\int_A\left(Q^\pi_\tau-Q^{\pi^*}_\tau\right)(s,a)\pi^*(da|s)\\
&\quad=V^\pi_\tau(s)-V^{\pi^*}_\tau(s)-\gamma\int_S\left(V^\pi_\tau(s')-V^{\pi^*}_\tau(s')\right)P_{\pi^*}(ds'|s),
\end{align}
where in the second equality we used that $V^\pi_\tau(s)=\int_AQ^\pi_\tau(s,a)\pi(da|s)-\tau\mathrm H(\pi(\cdot|s))$. In the third equality, we used the definition of the state-action value function in \eqref{eq:Q_func}. Now set $f:=V^\pi_\tau-V^{\pi^*}_\tau$ and let $g$ denote the left hand side. By assumption we have $V^{\pi}_{\tau} \in B_{b}(S)$ and by the Bellman principle, since $c \in B_b(S \times A)$, we have $V^{\pi^*}_{\tau} \in B_b(S)$ and thus $f \in B_b(S)$. 

Similarly, since $V^{\pi}_{\tau} \in B_b(S)$ it holds that $Q^{\pi}_{\tau} \in B_b(S\times A)$ and $\mathrm{H}(\pi(\cdot|\cdot)) \in B_b(S)$. By the Bellman principle, it also holds that $\mathrm{H}(\pi^*(\cdot|\cdot)) \in B_b(S)$. Therefore, by \citet[Lemma~3.2]{david_fisher}, for all $s \in S$ it holds that
\begin{equation}
V^\pi_\tau(s)-V^{\pi^*}_\tau(s)=\frac{1}{1-\gamma}\int_Sg(s')d^{\pi^*}(ds'|s).
\end{equation}
Integrating both sides with respect to $\rho$ and using the definition of the state occupancy measure in \eqref{eq:occupancy_defn} concludes the proof.
\end{proof}


Define 
\[
    M_{\Lambda} = \left\{ m \in \mathcal{P}(A) : \log\frac{dm}{d\Lambda} \in B_b(A)\right\},
\]
and notice that this is a convex subset of $\mathcal{P}(A)$. A proof of the following classical three-point lemma can be found in \citep{Korba22}.
\setcounter{lemma}{5}
\begin{lemma}[Three point lemma/Bregman proximal inequality]\label{lemma:three_point}
Let $G : M_{\Lambda} \to \mathbb{R}$ be convex. For all $m' \in M_{\Lambda}$ let
\begin{equation}
m^* = \argmin_{m\in M_{\Lambda}}\left\{G(m)+\operatorname{KL}(m|m') \right\}.
\end{equation}
Then for all $m \in M_{\Lambda}$ we have
\begin{equation}
G(m) + \operatorname{KL}(m|m') \geq G(m^*) + \operatorname{KL}(m|m^*) + \operatorname{KL}(m^*|m')
\end{equation}
\end{lemma}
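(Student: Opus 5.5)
The plan is to prove the three-point inequality by a first-order optimality argument for the minimisation over $M_\Lambda$, combined with an exact algebraic identity for KL divergences between measures in $M_\Lambda$.

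Since every $m\in M_\Lambda$ has a bounded log-density with respect to $\Lambda$, all three quantities $\operatorname{KL}(m|m')$, $\operatorname{KL}(m|m^*)$ and $\operatorname{KL}(m^*|m')$ are finite. We begin from the identity
\begin{equation}
\operatorname{KL}(m|m')-\operatorname{KL}(m|m^*)-\operatorname{KL}(m^*|m')=\int_A\log\frac{dm^*}{dm'}(a)\,(m-m^*)(da).
\end{equation}
To verify it, write $\log\frac{dm}{dm'}=\log\frac{dm}{dm^*}+\log\frac{dm^*}{dm'}$, integrate against $m$, and subtract $\operatorname{KL}(m^*|m')=\int\log\frac{dm^*}{dm'}\,dm^*$. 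Every integrand here is bounded, so no integrability issues arise. Given this identity, the claimed inequality is equivalent to
\begin{equation}
G(m)-G(m^*)+\int_A\log\frac{dm^*}{dm'}(a)\,(m-m^*)(da)\geq0,
\end{equation}
which is exactly the variational inequality characterising a minimiser of $G+\operatorname{KL}(\cdot|m')$.

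To establish the variational inequality, fix $m\in M_\Lambda$ and set $m_\varepsilon:=m^*+\varepsilon(m-m^*)$ for $\varepsilon\in(0,1]$. This curve stays in $M_\Lambda$ by convexity: its density is a convex combination of two densities bounded above and away from zero. Optimality of $m^*$ gives
\begin{equation}
0\leq\frac{1}{\varepsilon}\Bigl(G(m_\varepsilon)-G(m^*)\Bigr)+\frac{1}{\varepsilon}\Bigl(\operatorname{KL}(m_\varepsilon|m')-\operatorname{KL}(m^*|m')\Bigr).
\end{equation}
Convexity of $G$ bounds the first term by $G(m)-G(m^*)$.

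For the second term, write the densities relative to $m'$ as $f_\varepsilon=f^*+\varepsilon(f-f^*)$, where $f,f^*$ are bounded above and below away from zero. The map $\varepsilon\mapsto\int f_\varepsilon\log f_\varepsilon\,dm'$ is then differentiable on $[0,1]$, with derivative at $0^+$ equal to
\begin{equation}
\int(f-f^*)(\log f^*+1)\,dm'=\int\log\frac{dm^*}{dm'}\,d(m-m^*).
\end{equation}
The constant $1$ contributes nothing because $m-m^*$ has zero total mass. Letting $\varepsilon\downarrow0$ in the optimality inequality, using dominated convergence (the difference quotients are uniformly bounded, since $x\log x$ is $C^1$ on a compact interval bounded away from $0$), gives the variational inequality. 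Combining it with the identity above completes the argument.

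The step that needs the most care is this one-sided derivative of the KL term. It is exactly where membership in $M_\Lambda$ is used: the bounded log-densities keep $f_\varepsilon$ in a compact subinterval of $(0,\infty)$, which both justifies differentiating under the integral and guarantees that $\log\frac{dm^*}{dm'}$ is integrable against $m$. We use convexity of $G$ only along segments and never need it to be differentiable. If $m^*$ were not attained in $M_\Lambda$ the statement would be vacuous, so we take existence of the minimiser as given.
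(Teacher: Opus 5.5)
Your proof is correct, and it is essentially the standard argument behind the result that the paper cites from \citet{Korba22} rather than proving: the KL three-point identity $\operatorname{KL}(m|m')-\operatorname{KL}(m|m^*)-\operatorname{KL}(m^*|m')=\int_A\log\frac{dm^*}{dm'}\,d(m-m^*)$ reduces the claim to the first-order optimality condition at $m^*$, which you derive along the segment $m^*+\varepsilon(m-m^*)$. Two features make this instance rigorous: bounding the $G$ term by the chord inequality (so no directional derivative of $G$ is needed), and using the bounded log-densities in $M_\Lambda$ to justify differentiating the KL term under the integral.
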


\section{Proof of Theorem \ref{thm:conv_1}}
\label{sec:proof_of_thm_1}
\begin{proof}
        For some $\lambda > 0$ with $1-\tau\lambda \in (0,1)$ and for all $n\geq 0$ and $s \in S$, recall that
    \begin{equation}
    \begin{aligned}
        \pi^{n+1}(\cdot|s)
        &=
        \argmin_{m \in \mathcal{P}(A)}
        \Bigg\{\int_{A}
        \left(
        \widehat{Q}^n(s,a)
        +
        \tau \log \frac{d\pi^n}{d\Lambda}(s,a)
        \right)
        m(da)+\frac{1}{\lambda}
        \operatorname{KL}(m|\pi^n(\cdot|s))
        \Bigg\}.
    \end{aligned}
    \end{equation}
By \eqref{eq:optimal_policy} and \eqref{eq:mirror_descent_closed_form}, $\pi^*(\cdot|s),\pi^n(\cdot|s),\pi^{n+1}(\cdot|s)\in M_{\Lambda}$, and the minimiser in the preceding display belongs to $M_{\Lambda}$. The Three Point Lemma (Lemma \ref{lemma:three_point} with $m = \pi^*(\cdot|s)$) yields
\begin{align}\label{eq:apply_three_point_lemma}
&\int_A
\left(\widehat Q^n(s,a)+\tau\log\frac{d\pi^n}{d\Lambda}(s,a)\right)
(\pi^n-\pi^*)(da|s)  \\
&\leq
\frac1\lambda\left(
\KL(\pi^*|\pi^n)(s)-\KL(\pi^*|\pi^{n+1})(s)
-\KL(\pi^{n+1}|\pi^n)(s)
\right)\\
&\quad+
\int_A
\left(\widehat Q^n(s,a)+\tau\log\frac{d\pi^n}{d\Lambda}(s,a)\right)
(\pi^n-\pi^{n+1})(da|s).
\label{eq:three_point_rearranged}
\end{align}
We will now lower bound the left hand side. Firstly recall the standard identity
\begin{equation}
\int_A\log\frac{d\pi^n}{d\Lambda}(s,a)\pi^*(da|s)
=-\mathrm{H}(\pi^*(\cdot|s))-\KL(\pi^*(\cdot|s)|\pi^n(\cdot|s)).
\end{equation}
Consequently,
\begin{align} 
    &\int_{A} \widehat{Q}^n(s,a)(\pi^n - \pi^*)(da|s) + \tau \left(\mathrm{H}(\pi^*(\cdot|s))-\mathrm{H}(\pi^n(\cdot|s)) \right) \\
    &= \int_{A}\left(\widehat{Q}^n(s,a) + \tau\log \frac{d\pi^n}{d\Lambda}(s,a) \right)(\pi^n - \pi^*)(da|s)-\tau\KL(\pi^*|\pi^n)(s)\\
    &\leq \int_{A}\left(\widehat{Q}^n(s,a) + \tau\log \frac{d\pi^n}{d\Lambda}(s,a) \right)(\pi^n - \pi^*)(da|s),
\end{align}
Therefore \eqref{eq:apply_three_point_lemma} becomes
\begin{align}\label{eq:to_simplify_threepoint}
    &\int_{A} \widehat{Q}^n(s,a)(\pi^n - \pi^*)(da|s) + \tau \left(\mathrm{H}(\pi^*(\cdot|s))-\mathrm{H}(\pi^n(\cdot|s)) \right) \\ 
    &\leq\frac1\lambda\left(
\KL(\pi^*|\pi^n)(s)-\KL(\pi^*|\pi^{n+1})(s)
-\KL(\pi^{n+1}|\pi^n)(s)
\right)\\
&\quad+
\int_A
\left(\widehat Q^n(s,a)+\tau\log\frac{d\pi^n}{d\Lambda}(s,a)\right)
(\pi^n-\pi^{n+1})(da|s).
\end{align} Now we seek to upper bound the final term on the right hand side of \eqref{eq:to_simplify_threepoint}. By Pinsker's inequality, it holds that
\begin{align}
    &\int_A
\left(\widehat Q^n(s,a)+\tau\log\frac{d\pi^n}{d\Lambda}(s,a)\right)
(\pi^n-\pi^{n+1})(da|s)\\
&\leq \left(\widehat{Q}_{\text{max}} + \tau \sup_{n \in \mathbb{N}}\left|\log \frac{d\pi^n}{d\Lambda} \right|_{B_b(S\times A)} \right) \left|\pi^{n+1}(\cdot|s) - \pi^n(\cdot|s)\right|_{\mathcal{M}(A)} \\
&\leq \left( \widehat{Q}_{\text{max}} + \tau C_{\text{log}}\right) \sqrt{2\KL(\pi^{n+1}(\cdot|s)| \pi^n(\cdot|s))}
\end{align}
where we used that $ \sup_{n \in \mathbb{N}}\left|\log \frac{d\pi^n}{d\Lambda} \right|_{B_b(S\times A)} \leq \frac{2\widehat Q_{\max}}{\tau} := C_{\text{log}}$ by \eqref{eq:uniform_target_log_density_bound}. Now, by Young's inequality in the form $ab \leq \frac{\lambda}{2}a^2 + \frac{1}{2\lambda}b^2$, with $a = \widehat{Q}_{\mathrm{max}} + \tau C_{\text{log}}$ and $b = \sqrt{2\KL(\pi^{n+1}(\cdot|s)| \pi^n(\cdot|s))}$, we obtain
\begin{align}
    &\left( \widehat{Q}_{\mathrm{max}} + \tau C_{\text{log}}\right)
    \sqrt{2\KL(\pi^{n+1}(\cdot|s)| \pi^n(\cdot|s))}
    - \frac{\KL(\pi^{n+1}(\cdot|s)|\pi^n(\cdot|s))}{\lambda}
    \\
    &\leq
    \frac{\lambda}{2}\left( \widehat{Q}_{\mathrm{max}} + \tau C_{\text{log}}\right)^2
    + \frac{1}{2\lambda}
    \left(\sqrt{2\KL(\pi^{n+1}(\cdot|s)| \pi^n(\cdot|s))}\right)^2
    - \frac{\KL(\pi^{n+1}(\cdot|s)|\pi^n(\cdot|s))}{\lambda}
    \\
    &=
    \frac{\lambda}{2}\left( \widehat{Q}_{\mathrm{max}} + \tau C_{\text{log}}\right)^2
    + \frac{\KL(\pi^{n+1}(\cdot|s)|\pi^n(\cdot|s))}{\lambda}
    - \frac{\KL(\pi^{n+1}(\cdot|s)|\pi^n(\cdot|s))}{\lambda}
    \\
    &=
    \frac{\lambda}{2}\left( \widehat{Q}_{\mathrm{max}} + \tau C_{\text{log}}\right)^2.
\end{align}Therefore \eqref{eq:to_simplify_threepoint} simplifies to
\begin{align}\label{eq:mirror_regret_statewise}
    &\int_{A} \widehat{Q}^n(s,a)(\pi^n - \pi^*)(da|s) + \tau \left(\mathrm{H}(\pi^*(\cdot|s))-\mathrm{H}(\pi^n(\cdot|s)) \right) \\ 
    &\leq\frac1\lambda\left(
\KL(\pi^*|\pi^n)(s)-\KL(\pi^*|\pi^{n+1})(s)
\right) + \frac{\lambda}{2}\left( \widehat{Q}_{\text{max}} + \tau C_{\text{log}}\right)^2.
\end{align}Therefore after integrating over $S$ with respect to $d_{\rho}^{\pi^*}\in\mathcal{P}(S)$ and summing over $k=0,...,n-1$ we arrive at
\begin{align}\label{eq:bound_after_sum}
    &\sum_{k=0}^{n-1} \left(\int_{S} \left(\int_{A} \widehat{Q}^k(s,a)(\pi^k - \pi^*)(da|s) + \tau \left(\mathrm{H}(\pi^*(\cdot|s))-\mathrm{H}(\pi^k(\cdot|s)) \right)\right)d_{\rho}^{\pi^*}(ds) \right) \\
    &\leq \frac{1}{\lambda}\int_{S}\KL(\pi^*|\pi^0)(s)d_{\rho}^{\pi^*}(ds) + \frac{n\lambda}{2}\left( \widehat{Q}_{\text{max}} + \tau C_{\text{log}}\right)^2,
\end{align}
Now we turn to the performance difference lemma (Lemma \ref{lem:performance_diff}) for between $\pi_{w^n}$ and $\pi^*$, which states that 
\begin{align}
&
\left(V_\tau^{\pi_{w^n}}(\rho)-V_\tau^{\pi^*}(\rho)\right)\\
&\quad= \frac{1}{1-\gamma}\int_S\Bigg(
\int_AQ_\tau^{\pi_{w^n}}(s,a)
(\pi_{w^n}-\pi^*)(da|s)
+\tau\left(
\mathrm{H}(\pi^*(\cdot|s))
-\mathrm{H}(\pi_{w^n}(\cdot|s))
\right)
\Bigg)d_\rho^{\pi^*}(ds).
\end{align} Adding and subtracting $\widehat{Q}^n$ and using Assumption \ref{ass:Q_oracle}, it holds that
\begin{align}
&
\left(V_\tau^{\pi_{w^n}}(\rho)-V_\tau^{\pi^*}(\rho)\right)\\
&\quad\leq \frac{1}{1-\gamma}\int_S\Bigg(
\int_A \widehat{Q}^n(s,a)
(\pi_{w^n}-\pi^*)(da|s)
+\tau\left(
\mathrm{H}(\pi^*(\cdot|s))
-\mathrm{H}(\pi_{w^n}(\cdot|s))
\right)
\Bigg)d_\rho^{\pi^*}(ds)\\
&\qquad+\frac{2\varepsilon_{\mathrm{crit}}}{1-\gamma}.
\end{align} Furthermore adding and subtracting $\pi^n$ and $\mathrm{H}(\pi^n(\cdot|s))$ we arrive at
\begin{align}
&
V_\tau^{\pi_{w^n}}(\rho)-V_\tau^{\pi^*}(\rho) \\
&\quad\leq
\frac{1}{1-\gamma}\int_S\Bigg(
\int_A\widehat Q^n(s,a)(\pi^n-\pi^*)(da|s)
+\tau\left(
\mathrm{H}(\pi^*(\cdot|s))
-\mathrm{H}(\pi^n(\cdot|s))
\right)
\Bigg)d_\rho^{\pi^*}(ds)\nonumber\\
&\qquad+
\frac{1}{1-\gamma}\int_S\Bigg(
\int_A\widehat Q^n(s,a)(\pi_{w^n}-\pi^n)(da|s)
+\tau\left(
\mathrm{H}(\pi^n(\cdot|s))
-\mathrm{H}(\pi_{w^n}(\cdot|s))
\right)
\Bigg)d_\rho^{\pi^*}(ds)\\
&\qquad+\frac{2\epscritic}{1-\gamma}.
\label{eq:main_proof_mirror_split}
\end{align}To bound the second term on the right hand side, observe that by Theorem \ref{thm:main_value_bound} and Pinsker's inequality, it holds that
\begin{align}
&\int_A\widehat Q^n(s,a)(\pi_{w^n}-\pi^n)(da|s)
+\tau\left(
\mathrm{H}(\pi^n(\cdot|s))
-\mathrm{H}(\pi_{w^n}(\cdot|s))
\right) \\
&\leq \widehat{Q}_{\text{max}}\left|\pi_{w^n}(\cdot|s) - \pi^n(\cdot|s) \right|_{\mathcal{M}(A)} + \tau C_{\mathrm{KL}} \KL(\pi_{w^n}(\cdot|s)|\pi^n(\cdot|s))^{\frac{1}{2}} 
\\
&\leq
(\sqrt2\widehat Q_{\max}+\tau C_{\mathrm{KL}})
\KL(\pi_{w^n}(\cdot|s)|\pi^n(\cdot|s))^{\frac{1}{2}}.
\label{eq:main_proof_projection_statewise}
\end{align}Therefore after integrating over $S$ with respect to $d_{\rho}^{\pi^*}$ it also holds that
\begin{align}
    &\int_{S}\left(\int_A\widehat Q^n(s,a)(\pi_{w^n}-\pi^n)(da|s)
+\tau\left(
\mathrm{H}(\pi^n(\cdot|s))
-\mathrm{H}(\pi_{w^n}(\cdot|s))\right)
\right)d_{\rho}^{\pi^*}(ds) \\
&\leq (\sqrt2\widehat Q_{\max}+\tau C_{\mathrm{KL}}) \int_{S} \KL(\pi_{w^n}(\cdot|s) | \pi^n(\cdot|s))^{\frac{1}{2}} d_{\rho}^{\pi^*}(ds).
\end{align}By Holder's inequality and assumption \ref{as:optimal_occupancy}, it holds that
\begin{align}\label{eq:change_of_measure}
    &\int_{S} \KL(\pi_{w^n}(\cdot|s) | \pi^n(\cdot|s))^{\frac{1}{2}} d_{\rho}^{\pi^*}(ds) \\
    &= \int_{S} \KL(\pi_{w^n}(\cdot|s) | \pi^n(\cdot|s))^{\frac{1}{2}} \frac{d d_{\rho}^{\pi^*}}{d\rho}(s)\rho(ds) \\
    &\leq \left(\int_{S} \KL(\pi_{w^n}(\cdot|s) | \pi^n(\cdot|s)) \rho(ds) \right)^{\frac{1}{2}}\left(\int_{S}\left(\frac{d d_{\rho}^{\pi^*}}{d\rho}(s)\right)^2\rho(ds)\right)^\frac{1}{2} \\
    &= \left|\frac{d d_{\rho}^{\pi^*}}{d\rho} \right|_{L^2(\rho)}\left(\int_{S} \KL(\pi_{w^n}(\cdot|s) | \pi^n(\cdot|s)) \rho(ds) \right)^{\frac{1}{2}} \\
    &\leq \frac{1}{\sqrt{1-\gamma}} \left|\frac{d d_{\rho}^{\pi^*}}{d\rho} \right|_{L^2(\rho)}\left(\int_{S} \KL(\pi_{w^n}(\cdot|s) | \pi^n(\cdot|s)) \rho_n(ds) \right)^{\frac{1}{2}} \\
    &= \frac{1}{\sqrt{1-\gamma}} \left|\frac{d d_{\rho}^{\pi^*}}{d\rho} \right|_{L^2(\rho)}J_n(w^n,\pi^n)^{\frac{1}{2}} \\
    &\leq \frac{A_2}{\sqrt{1-\gamma}}J_n(w^n,\pi^n)^{\frac{1}{2}}
\end{align}
Hence substituting \eqref{eq:change_of_measure} into \eqref{eq:main_proof_mirror_split} it holds that

\begin{align}
&V_\tau^{\pi_{w^n}}(\rho)-V_\tau^{\pi^*}(\rho) \\
&\quad\leq
\frac{1}{1-\gamma}\int_S\Bigg(
\int_A\widehat Q^n(s,a)(\pi^n-\pi^*)(da|s)
+\tau\left(
\mathrm{H}(\pi^*(\cdot|s))
-\mathrm{H}(\pi^n(\cdot|s))
\right)
\Bigg)d_\rho^{\pi^*}(ds)\\
&\qquad+
\frac{A_2(\sqrt2\widehat Q_{\max}+\tau C_{\mathrm{KL}})}{(1-\gamma)^{\frac{3}{2}}}J_n(w^n,\pi^n)^{\frac{1}{2}}+\frac{2\epscritic}{1-\gamma} 
\end{align}Now let $w^n_* \in \argmin_{w \in \mathfrak{B}(\mathrm{R})} J_n(w,\pi^n)$. Adding and subtracting $J_n(w^n_*, \pi^n)$ and using that $(a + b)^{\frac{1}{2}} \leq a^{\frac{1}{2}} + b^{\frac{1}{2}}$ for any non-negative $a,b$ it holds that
\begin{align}
&V_\tau^{\pi_{w^n}}(\rho)-V_\tau^{\pi^*}(\rho) \\
&\quad\leq
\frac{1}{1-\gamma}\int_S\Bigg(
\int_A\widehat Q^n(s,a)(\pi^n-\pi^*)(da|s)
+\tau\left(
\mathrm{H}(\pi^*(\cdot|s))
-\mathrm{H}(\pi^n(\cdot|s))
\right)
\Bigg)d_\rho^{\pi^*}(ds)\\
&\qquad+
\frac{A_2(\sqrt2\widehat Q_{\max}+\tau C_{\mathrm{KL}})}{(1-\gamma)^{\frac{3}{2}}}
\left(J_n(w^n,\pi^n) - J_n(w^n_*, \pi^n) + J_n(w^n_*, \pi^n)\right)^{\frac{1}{2}}+\frac{2\epscritic}{1-\gamma} \\
&\leq \frac{1}{1-\gamma}\int_S\Bigg(
\int_A\widehat Q^n(s,a)(\pi^n-\pi^*)(da|s)
+\tau\left(
\mathrm{H}(\pi^*(\cdot|s))
-\mathrm{H}(\pi^n(\cdot|s))
\right)
\Bigg)d_\rho^{\pi^*}(ds)\\
&\qquad+
\frac{A_2(\sqrt2\widehat Q_{\max}+\tau C_{\mathrm{KL}})}{(1-\gamma)^{\frac{3}{2}}}
\left(\left(J_n(w^n,\pi^n) - J_n(w^n_*, \pi^n)\right)^{\frac{1}{2}} + J_n(w^n_*, \pi^n)^{\frac{1}{2}}\right)+\frac{2\epscritic}{1-\gamma}.
\end{align}
Summing over $k=0,\ldots,n-1$ for any $n \in \mathbb{N}$ it holds that
\begin{align}
    &\sum_{k=0}^{n-1}\left( V_\tau^{\pi_{w^k}}(\rho)-V_\tau^{\pi^*}(\rho) \right)  \\
    &\leq \frac{1}{1-\gamma}\sum_{k=0}^{n-1}\int_S\Bigg(
\int_A\widehat Q^k(s,a)(\pi^k-\pi^*)(da|s)
+\tau\left(
\mathrm{H}(\pi^*(\cdot|s))
-\mathrm{H}(\pi^k(\cdot|s))
\right)
\Bigg)d_\rho^{\pi^*}(ds)\\
&\qquad+
\frac{A_2(\sqrt2\widehat Q_{\max}+\tau C_{\mathrm{KL}})}{(1-\gamma)^{\frac{3}{2}}}
\sum_{k=0}^{n-1}\left(\left(J_k(w^k,\pi^k) - J_k(w^k_*, \pi^k)\right)^{\frac{1}{2}} + J_k(w^k_*, \pi^k)^{\frac{1}{2}}\right)+\frac{2\epscritic n}{1-\gamma}.
\end{align}
Substituting in \eqref{eq:bound_after_sum}, we arrive at
\begin{align}
    &\sum_{k=0}^{n-1}\left( V_\tau^{\pi_{w^k}}(\rho)-V_\tau^{\pi^*}(\rho) \right)  \\
    &\leq \frac{1}{\lambda(1-\gamma)}\int_{S}\KL(\pi^*|\pi^0)(s)d_{\rho}^{\pi^*}(ds)
    +\frac{n\lambda}{2(1-\gamma)}\left( \widehat{Q}_{\text{max}} + \tau C_{\text{log}}\right)^2\\
    &\qquad+\frac{A_2(\sqrt2\widehat Q_{\max}+\tau C_{\mathrm{KL}})}{(1-\gamma)^{\frac{3}{2}}}
\sum_{k=0}^{n-1}\left(\left(J_k(w^k,\pi^k) - J_k(w^k_*, \pi^k)\right)^{\frac{1}{2}} + J_k(w^k_*, \pi^k)^{\frac{1}{2}}\right)+\frac{2\epscritic n}{1-\gamma}.
\end{align}
Dividing through by $n > 0$, using that the minimum is less than the average and $J_k(w_*^k,\pi^k)=\varepsilon_{\mathrm{act}}^k$, we arrive at
\begin{align}
    &\min_{0\leq r \leq n-1}V^{\pi_{w^r}}_{\tau}(\rho) - V^{\pi^*}_{\tau}(\rho)  \\
    &\leq \frac{1}{\lambda(1-\gamma)n}\int_{S}\KL(\pi^*|\pi^0)(s)d_{\rho}^{\pi^*}(ds)
    +\frac{\lambda}{2(1-\gamma)}\left( \widehat{Q}_{\text{max}} + \tau C_{\text{log}}\right)^2\\
    &\qquad+\frac{A_2(\sqrt2\widehat Q_{\max}+\tau C_{\mathrm{KL}})}{(1-\gamma)^{\frac{3}{2}}n}
\sum_{k=0}^{n-1}\left(\left(J_k(w^k,\pi^k) - J_k(w^k_*, \pi^k)\right)^{\frac{1}{2}} + (\varepsilon_{\mathrm{act}}^k)^{\frac{1}{2}}\right)+\frac{2\epscritic}{1-\gamma}\\
    &\leq \mathrm C\left(\frac{1}{\lambda n}+\lambda+
    \frac{1}{n}\sum_{k=0}^{n-1}\left(J_k(w^k,\pi^k) - J_k(w^k_*, \pi^k)\right)^{\frac{1}{2}}+\varepsilon_{\mathrm{approx}}(n)\right),
\end{align}
where
\begin{equation}
\begin{aligned}
\mathrm C:=\max\Bigg\{&
\frac{1}{1-\gamma}\int_{S}\KL(\pi^*|\pi^0)(s)d_{\rho}^{\pi^*}(ds),
\frac{(\widehat Q_{\max}+\tau C_{\text{log}})^2}{2(1-\gamma)},\\
&\frac{A_2(\sqrt2\widehat Q_{\max}+\tau C_{\mathrm{KL}})}{(1-\gamma)^{\frac{3}{2}}},
\frac{2}{1-\gamma}
\Bigg\}.
\end{aligned}
\end{equation}
\end{proof}

\section{Proof of Theorem \ref{thm:strong_convexity_L_smoothness_objective}}
\label{sec:proof_of_convexity_conditions}
\begin{proof}
For any $n\in\mathbb N$ and $w\in\mathfrak B(\mathrm R)$, recall that
\begin{equation}
    J_{n+1}(w,\pi^{n+1})
    =
    \int_S \operatorname{KL}(\pi_w(\cdot|s)|\pi^{n+1}(\cdot|s))\rho_{n+1}(ds).
\end{equation}
Focusing on the integrand, by Lemma \ref{lem:dupuis-ellis}, for every $s\in S$ it holds that
\begin{align}\label{eq:apply_dupuis}
    \operatorname{KL}(\pi_w(\cdot|s)|\pi^{n+1}(\cdot|s))
    &=\operatorname{KL}\left((\tanh)_{\#}q_w(\cdot|s)|\pi^{n+1}(\cdot|s)\right)\\
    &=\operatorname{KL}\left(q_w(\cdot|s)|\left(\tanh^{-1}\right)_{\#}\pi^{n+1}(\cdot|s)\right).
\end{align}
Let $\Lambda_{\mathbb R^M}$ denote Lebesgue measure on $\mathbb R^M$. Since $\tanh:\mathbb R^M\to A$ is a measurable bijection with measurable inverse, performing the change of variable $a=\tanh u$ with $u \in \mathbb{R}^M$, for every $B\in\mathcal B(\mathbb R^M)$ it holds that
\begin{align}
\left((\tanh^{-1})_\#\pi^{n+1}\right)(B|s)
&=\pi^{n+1}(\tanh(B)|s)\\
&=\int_{\tanh(B)}\frac{d\pi^{n+1}}{d\Lambda}(s,a)\Lambda(da)\\
&=\int_B\frac{d\pi^{n+1}}{d\Lambda}(s,\tanh u)
2^{-M}\det D(\tanh u)\Lambda_{\mathbb R^M}(du).
\end{align}
Thus it holds that
\begin{align}\label{eq:uniform_push_forward_density}
    \frac{d\left(\tanh^{-1}\right)_{\#}\pi^{n+1}}{d\Lambda_{\mathbb R^M}}(s,u)
    &=
    \frac{\det D(\tanh u)}{2^{M}}
    \frac{d\pi^{n+1}}{d\Lambda}(s,\tanh u)\\
    &=
    \frac{\det D(\tanh u)}{2^M\overline Z_{n+1}(s)}
    \exp\left(
    -\lambda\sum_{k=0}^n
    (1-\tau\lambda)^{n-k}\widehat Q^k(s,\tanh u)
    \right),
\end{align}
such that $\overline Z_{n+1}(s)
    :=
    \int_A
    \exp\left(
    -\lambda\sum_{k=0}^n
    (1-\tau\lambda)^{n-k}\widehat Q^k(s,a)
    \right)
    \Lambda(da)$. To ease notation, let
\begin{equation}
    F_n(s,u)
    :=
    \lambda\sum_{k=0}^n
    (1-\tau\lambda)^{n-k}\widehat Q^k(s,\tanh u).
\end{equation}
Starting from \eqref{eq:apply_dupuis} we therefore have that
\begin{align}
&\operatorname{KL}(\pi_w(\cdot|s)|\pi^{n+1}(\cdot|s))\\
&=\operatorname{KL}\left(q_w(\cdot|s)|\left(\tanh^{-1}\right)_{\#}\pi^{n+1}(\cdot|s)\right)\\
&=\int_{\mathbb R^M}
\log\frac{dq_w}{d\left(\tanh^{-1}\right)_{\#}\pi^{n+1}}(s,u)q_w(du|s)\\
&=\int_{\mathbb R^M}
\log\left(
\frac{dq_w}{d\Lambda_{\mathbb R^M}}(s,u)
\frac{d\Lambda_{\mathbb R^M}}{d\left(\tanh^{-1}\right)_{\#}\pi^{n+1}}(s,u)
\right)q_w(du|s)\\
&=\int_{\mathbb R^M}
\log\frac{dq_w}{d\Lambda_{\mathbb R^M}}(s,u)q_w(du|s)
-\int_{\mathbb R^M}
\log\frac{d\left(\tanh^{-1}\right)_{\#}\pi^{n+1}}{d\Lambda_{\mathbb R^M}}(s,u)
q_w(du|s)\\
&=-\mathrm{H}(q_w(\cdot|s))+ \int_{\mathbb R^M}
\left(
\log\left(2^M\overline Z_{n+1}(s)\right)
+F_n(s,u)
-\log\det D(\tanh u)
\right)q_w(du|s)\\
&=-\mathrm{H}(q_w(\cdot|s))
+\log\left(2^M\overline Z_{n+1}(s)\right)+\int_{\mathbb R^M}
\left(2\sum_{i=1}^M\log\cosh u_i+F_n(s,u)\right)q_w(du|s)\\
&=\int_{\mathbb R^M}
\left(2\sum_{i=1}^M\log\cosh u_i+F_n(s,u)\right)q_w(du|s)
+\log\frac{2^M\overline Z_{n+1}(s)}{(2\pi)^{\frac{M}{2}}\sqrt{\det\Sigma}}
-\frac M2,
\label{eq:uniform_objective_decomp}
\end{align}
where in the fifth equality we used \eqref{eq:uniform_push_forward_density} and the definition of the entropy $\mathrm{H}(q_w(\cdot|s))$. In the sixth equality we used $\det D(\tanh u)=\prod_{i=1}^M\cosh^{-2}u_i$ and in the final equality we used that $\mathrm{H}(q_w(\cdot|s))
=
\log((2\pi)^{\frac{M}{2}}\sqrt{\det\Sigma})+\frac M2$. We now further ease notation through
\begin{equation}
    \widetilde F_n(s,u):=2\sum_{i=1}^M\log\cosh u_i+F_n(s,u),
    \qquad
    C_n:=\int_S\left(\log\frac{2^M\overline Z_{n+1}(s)}{(2\pi)^{\frac{M}{2}}\sqrt{\det\Sigma}}-\frac M2\right)\rho_{n+1}(ds).
\end{equation}
Therefore integrating \eqref{eq:uniform_objective_decomp} over $S$ with respect to $\rho_{n+1}$ it holds that
\begin{equation}\label{eq:uniform_J_decomp}
    J_{n+1}(w,\pi^{n+1})
    =
    \int_S\int_{\mathbb R^M}\widetilde F_n(s,u)q_w(du|s)\rho_{n+1}(ds)
    +C_n.
\end{equation}

Now recall that for $f\in C^2(A)$ and $a=\tanh u$, the definition of $\mathcal T$ and the chain rule give, for every $i,j\in\{1,\ldots,M\}$,
\begin{align}
\partial_{u_i u_j}(f(\tanh u))
&=(1-a_i^2)(1-a_j^2)\partial_{a_i a_j}f(a)
-2\mathbf 1_{\{i=j\}}a_i(1-a_i^2)\partial_{a_i}f(a),\\
\nabla_u^2(f(\tanh u))
&=D(a)^{\frac{1}{2}}(\mathcal T f)(a)D(a)^{\frac{1}{2}}.
\end{align}
Moreover recall that $R_{\widehat Q}=\sup_{k\in\mathbb N,\ s\in S,\ a\in A}|\mathcal T(\widehat Q^k(s,\cdot))(a)|_{\op}$. Since $0\prec D(a)\preceq I_M$, it therefore holds that
\begin{align}
\nabla_u^2\widetilde F_n(s,u)
&=
D(a)^{\frac{1}{2}}
\left(
2I_M+
\lambda\sum_{k=0}^n(1-\tau\lambda)^{n-k}
\mathcal T(\widehat Q^k(s,\cdot))(a)
\right)D(a)^{\frac{1}{2}}\\
&\succeq
\left(
2-
R_{\widehat Q}\lambda
\sum_{k=0}^n(1-\tau\lambda)^{n-k}
\right)D(a).
\end{align}
Analogously it also holds that
\begin{align}
\nabla_u^2\widetilde F_n(s,u)
&=
D(a)^{\frac{1}{2}}
\left(
2I_M+
\lambda\sum_{k=0}^n(1-\tau\lambda)^{n-k}
\mathcal T(\widehat Q^k(s,\cdot))(a)
\right)D(a)^{\frac{1}{2}}\\
&\preceq
\left(
2+
R_{\widehat Q}\lambda
\sum_{k=0}^n(1-\tau\lambda)^{n-k}
\right)D(a).
\end{align}
and thus after using that $\lambda\sum_{k=0}^n(1-\tau\lambda)^{n-k}\leq\frac1\tau$, for all $s\in S$ and $u\in\mathbb R^M$ it holds that
\begin{equation}\label{eq:uniform_derivative_bounds}
\left(2-\frac{R_{\widehat Q}}{\tau}\right)D(a)
\preceq
\nabla_u^2\widetilde F_n(s,u)
\preceq
\left(2+\frac{R_{\widehat Q}}{\tau}\right)D(a).
\end{equation}
Since $a=\tanh u$, we may write these bounds as
\begin{equation}\label{eq:uniform_derivative_bounds_u}
\left(2-\frac{R_{\widehat Q}}{\tau}\right)D(\tanh u)
\preceq
\nabla_u^2\widetilde F_n(s,u)
\preceq
\left(2+\frac{R_{\widehat Q}}{\tau}\right)D(\tanh u).
\end{equation}

Now we have upper and lower bounds on the Hessian of $\widetilde F_n$, we translate these into upper and lower bounds on the Hessian of the objective $J_{n+1}(w,\pi^{n+1})
=\int_S\int_{\mathbb R^M}\widetilde F_n(s,u)q_w(du|s)\rho_{n+1}(ds)+C_n$. On that end, firstly observe that after performing the change of variable $u=m_w(s)+\Sigma^{\frac{1}{2}}z$ it holds that
\begin{equation}
\int_{\mathbb R^M}\widetilde F_n(s,u)q_w(du|s)
=
\int_{\mathbb R^M}\widetilde F_n(s,m_w(s)+\Sigma^{\frac{1}{2}}z)q^1(dz).
\end{equation}
Applying the chain rule twice (after recalling that $m_w(s) = x(s)^\top w$) yields
\begin{align}
\nabla_w^2
\left(\int_{\mathbb R^M}\widetilde F_n(s,u)q_w(du|s)\right)
&=
x(s)\left(
\int_{\mathbb R^M}
\left.\nabla_u^2\widetilde F_n(s,u)\right|_{u=m_w(s)+\Sigma^{\frac{1}{2}}z}
q^1(dz)\right)x(s)^\top.
\end{align}
Consequently, for every $v\in\mathbb R^d$, it holds that
\begin{align}\label{eq:consequently}
&v^\top\nabla_w^2
\left(\int_{\mathbb R^M}\widetilde F_n(s,u)q_w(du|s)\right)v\\
&\quad=
\int_{\mathbb R^M}
(x(s)^\top v)^\top
\left.\nabla_u^2\widetilde F_n(s,u)\right|_{u=m_w(s)+\Sigma^{\frac{1}{2}}z}
(x(s)^\top v)q^1(dz).
\end{align}
Now we apply the lower bound in \eqref{eq:uniform_derivative_bounds_u} at $u=m_w(s)+\Sigma^{\frac{1}{2}}z$ to arrive at
\begin{align}\label{eq:use_tanh_bound}
&v^\top\nabla_w^2
\left(\int_{\mathbb R^M}\widetilde F_n(s,u)q_w(du|s)\right)v\\
&\quad\geq
\left(2-\frac{R_{\widehat Q}}{\tau}\right)
\sum_{i=1}^M(x(s)^\top v)_i^2
\int_{\mathbb R^M}
\left(1-\tanh^2(m_w(s)_i+\sigma_i z_i)\right)q^1(dz).
\end{align}
It thus remains to lower bound each integral
$\int_{\mathbb R^M}(1-\tanh^2(m_w(s)_i+\sigma_i z_i))q^1(dz)$.
To that end, since the integrand is non-negative and $r\mapsto1-\tanh^2r$ is even and decreasing for $r\geq0$, for every $i\in\{1,\ldots,M\}$, $w\in\mathfrak B(\mathrm R)$ and $s\in S$, it holds that
\begin{align}
\int_{\mathbb R^M}\left(1-\tanh^2(m_w(s)_i+\sigma_i z_i)\right)q^1(dz)
&\geq
\int_{\{z\in\mathbb R^M:|z_i|\leq1\}}
\left(1-\tanh^2(m_w(s)_i+\sigma_i z_i)\right)q^1(dz)\\
&\geq
\left(1-\tanh^2(\mathrm R+\sigma_i)\right)
\int_{\{z\in\mathbb R^M:|z_i|\leq1\}}q^1(dz)\\
&=
\left(1-\tanh^2(\mathrm R+\sigma_i)\right)\mathbb P(|Z_i|\leq1)\\
&=
\left(1-\tanh^2(\mathrm R+\sigma_i)\right)(2\Phi(1)-1).
\end{align}
Here $Z\sim q^1$, $\Phi$ is the standard univariate Gaussian distribution function, and $|m_w(s)_i|\leq|x(s)^\top w|_2\leq\mathrm R$. Taking the minimum of these lower bounds over $i$ gives
\begin{align}
\int_{\mathbb R^M}\left(1-\tanh^2(m_w(s)_i+\sigma_i z_i)\right)q^1(dz)
&\geq c_{\Sigma,\mathrm R}\\
&=\min_{1\leq j\leq M}\left(1-\tanh^2(\mathrm R+\sigma_j)\right)(2\Phi(1)-1)>0.
\end{align}
Let $c_{\Sigma,\mathrm{R}} := \min_{1\leq j\leq M}\left(1-\tanh^2(\mathrm R+\sigma_j)\right)(2\Phi(1)-1)$. \eqref{eq:use_tanh_bound} becomes
\begin{equation}\label{eq:uniform_state_hessian_lower}
v^\top\nabla_w^2
\left(\int_{\mathbb R^M}\widetilde F_n(s,u)q_w(du|s)\right)v
\geq
\left(2-\frac{R_{\widehat Q}}{\tau}\right)c_{\Sigma,\mathrm R}|x(s)^\top v|_2^2.
\end{equation}
Integrating \eqref{eq:uniform_state_hessian_lower} over $S$ with respect to $\rho_{n+1}$ gives
\begin{align}\label{eq:final_calc_for_sc}
v^\top\nabla_w^2J_{n+1}(w,\pi^{n+1})v
&\geq
\left(2-\frac{R_{\widehat Q}}{\tau}\right)c_{\Sigma,\mathrm R}
\int_S|x(s)^\top v|_2^2\rho_{n+1}(ds)\\
&=
\left(2-\frac{R_{\widehat Q}}{\tau}\right)c_{\Sigma,\mathrm R}
v^\top\left(\int_Sx(s)x(s)^\top\rho_{n+1}(ds)\right)v\\
&\geq
(1-\gamma)\left(2-\frac{R_{\widehat Q}}{\tau}\right)c_{\Sigma,\mathrm R}
v^\top\left(\int_Sx(s)x(s)^\top\rho(ds)\right)v\\
&\geq
(1-\gamma)\left(2-\frac{R_{\widehat Q}}{\tau}\right)c_{\Sigma,\mathrm R}\lambda_P|v|_2^2,
\end{align}
where we used Lemma \ref{lem:replay_measure_comparison} in the second inequality and Assumption \ref{as:policy_Evalues} in the final inequality. Thus \eqref{eq:final_calc_for_sc} implies that the Hessian of $w\mapsto J_{n+1}(w,\pi^{n+1})$ has a minimum eigenvalue bounded from below by $(1-\gamma)\lambda_P(2-\frac{R_{\widehat Q}}{\tau})c_{\Sigma,\mathrm R}$ for all $n\in\mathbb N$, which in turn implies that $w\mapsto J_{n+1}(w,\pi^{n+1})$ is $(1-\gamma)\lambda_P(2-\frac{R_{\widehat Q}}{\tau})c_{\Sigma,\mathrm R}$-strongly convex for all $n\in\mathbb N$ and $w\in\mathfrak B(\mathrm R)$.

Now we turn to analogously demonstrate smoothness. Using the upper bound in \eqref{eq:uniform_derivative_bounds_u} in \eqref{eq:consequently}, we have
\begin{align}
&v^\top\nabla_w^2
\left(\int_{\mathbb R^M}\widetilde F_n(s,u)q_w(du|s)\right)v\\
&\quad\leq
\left(2+\frac{R_{\widehat Q}}{\tau}\right)
\sum_{i=1}^M(x(s)^\top v)_i^2
\int_{\mathbb R^M}
\left(1-\tanh^2(m_w(s)_i+\sigma_i z_i)\right)q^1(dz).
\end{align}
Since $1-\tanh^2u_i\leq1$ and $q^1$ is a probability measure, it holds that
$\int_{\mathbb R^M}(1-\tanh^2(m_w(s)_i+\sigma_i z_i))q^1(dz)\leq1$
for every $i$. Hence it holds that
\begin{equation}\label{eq:uniform_state_hessian_upper}
v^\top\nabla_w^2
\left(\int_{\mathbb R^M}\widetilde F_n(s,u)q_w(du|s)\right)v
\leq
\left(2+\frac{R_{\widehat Q}}{\tau}\right)|x(s)^\top v|_2^2.
\end{equation}
Integrating \eqref{eq:uniform_state_hessian_upper} over $S$ with respect to $\rho_{n+1}$ gives
\begin{align}
v^\top\nabla_w^2J_{n+1}(w,\pi^{n+1})v
&\leq
\left(2+\frac{R_{\widehat Q}}{\tau}\right)
\int_S|x(s)^\top v|_2^2\rho_{n+1}(ds)\\
&\leq
\left(2+\frac{R_{\widehat Q}}{\tau}\right)|v|_2^2,
\end{align}
where the final inequality follows from $|x(s)|_{\op}\leq1$ for all $s\in S$. Thus, the maximum eigenvalue of the Hessian of $w\mapsto J_{n+1}(w,\pi^{n+1})$ is bounded from above by $2+\frac{R_{\widehat Q}}{\tau}$ uniformly over $n\in\mathbb N$ and $w\in\mathfrak B(\mathrm R)$. Hence, $w\mapsto J_{n+1}(w,\pi^{n+1})$ is $2+\frac{R_{\widehat Q}}{\tau}$-smooth.
For the Gibbs target \eqref{eq:classic_target}, the same calculations hold with $F_n(s,u)=\frac{1}{\tau}\widehat Q^n(s,\tanh u)$.
Since $|\mathcal T(\widehat Q^n(s,\cdot))(a)|_{\op}\leq R_{\widehat Q}$, the bound \eqref{eq:uniform_derivative_bounds_u} still holds, and the remaining argument gives the same strong-convexity and smoothness constants for $w\mapsto J_{n+1}(w,\pi^{n+1}_{G})$.
\end{proof}

\begin{remark}[Linear Fourier critic]\label{rem:linear_fourier_critic}
Suppose that $\widehat Q^n(s,a)=(\theta^n)^\top\xi(s,a)$, where $|\theta^n|_2\leq\mathrm C$ and $\xi=(\xi_1,\ldots,\xi_K)$ with $\xi_j(s,a)=\phi_j(s)\cos(\omega_j^\top a)$ with $\omega_j\in\mathbb R^M$ and $|\phi_j(s)|\leq1$. A direct calculation gives that for each $s \in S$ and $a \in A$, $|\nabla_a\xi_j(s,a)|_2\leq|\omega_j|_2$ and $|\nabla_a^2\xi_j(s,a)|_{\op}\leq|\omega_j|_2^2$. Therefore it also holds that
\begin{align}
|\mathcal T\xi_j(s,\cdot)(a)|_{\op}
&\leq
|D(a)^{\frac{1}{2}}|_{\op}^2|\nabla_a^2\xi_j(s,a)|_{\op}
+2\max_{1\leq i\leq M}|a_i\partial_{a_i}\xi_j(s,a)|\\
&\leq|\omega_j|_2^2+2|\omega_j|_2.
\end{align}
Hence we have
  \begin{align}
  |\mathcal T\widehat Q^n(s,\cdot)(a)|_{\op}
  &\leq
  \sum_{j=1}^K
  |\theta_j^n|
  |\mathcal T\xi_j(s,\cdot)(a)|_{\op}\leq
  \mathrm C
  \left(
  \sum_{j=1}^K
  \left(
  |\omega_j|_2^2+2|\omega_j|_2
  \right)^2
  \right)^{\frac12}.
  \end{align}
Thus the conditions of Theorem \ref{thm:strong_convexity_L_smoothness_objective} is satisfied whenever
\begin{equation}
\left(\sum_{j=1}^K(|\omega_j|_2^2+2|\omega_j|_2)^2\right)^{\frac12}<\frac{2\tau}{\mathrm C}.
\end{equation}
\end{remark}

\section{Proof of Lemma \ref{lemma:policy_drift}}
\setcounter{lemma}{8}
\begin{lemma}\label{lemma:policy_drift}
  Let Assumptions \ref{ass:Q_oracle} and \ref{as:policy_Evalues} and the condition of Theorem \ref{thm:strong_convexity_L_smoothness_objective} hold and let $0<\chi\lambda\leq1$.
  Then for all $n \in \mathbb{N}$ it holds that
  \begin{align}
      |w_*^{n+1}-w_*^n|_2^2
      &\leq C_{\text{drift}}\lambda
  \end{align}
  such that $C_{\text{drift}} = \frac{1}{\kappa}\left(16\widehat Q_{\max}+4\chi \left(K_{\Sigma,\mathrm R}+\frac{2\widehat Q_{\max}}{\tau}\right)\right)$ and $\kappa:=
      (1-\gamma)\left(
      2
      -
      \frac{R_{\widehat Q}}{\tau}
      \right)c_{\Sigma,\mathrm R}\lambda_P$.
  \end{lemma}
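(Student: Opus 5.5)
The plan is the standard argument for how far the minimiser of a strongly convex objective moves under a small perturbation. By Theorem~\ref{thm:strong_convexity_L_smoothness_objective}, both $w\mapsto J_n(w,\pi^n)$ and $w\mapsto J_{n+1}(w,\pi^{n+1})$ are $\kappa$-strongly convex on the convex set $\mathfrak B(\mathrm R)$. Write $\Delta:=w_*^{n+1}-w_*^n$.

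\textbf{Step 1: reduce to a difference of objectives.} Let $w_*^n$ be the (unique) constrained minimiser of $J_n(\cdot,\pi^n)$. The first-order optimality condition $\langle\nabla J_n(w_*^n,\pi^n),w-w_*^n\rangle\geq0$ for all $w\in\mathfrak B(\mathrm R)$, combined with strong convexity, gives
\begin{equation}
J_n(w_*^{n+1},\pi^n)-J_n(w_*^n,\pi^n)\geq\tfrac{\kappa}{2}|\Delta|_2^2 .
\end{equation}
The same reasoning for $J_{n+1}(\cdot,\pi^{n+1})$ at its minimiser gives
\begin{equation}
J_{n+1}(w_*^n,\pi^{n+1})-J_{n+1}(w_*^{n+1},\pi^{n+1})\geq\tfrac{\kappa}{2}|\Delta|_2^2 .
\end{equation}
Set $G(w):=J_{n+1}(w,\pi^{n+1})-J_n(w,\pi^n)$. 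Adding the two inequalities yields
\begin{equation}
\kappa|\Delta|_2^2\leq G(w_*^n)-G(w_*^{n+1}).
\end{equation}
Any additive term in $G$ that does not depend on $w$, such as the normalising constants $\log Z$, cancels in this difference. It therefore suffices to bound the oscillation of $G$ over $\mathfrak B(\mathrm R)$ by $\lambda\bigl(16\widehat Q_{\max}+4\chi(K_{\Sigma,\mathrm R}+2\widehat Q_{\max}/\tau)\bigr)$.

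\textbf{Step 2: split $G$ into two pieces.} Write
\begin{equation}
G(w)=\int_S\KL(\pi_w|\pi^{n+1})(s)\,(\rho_{n+1}-\rho_n)(ds)+\int_S\bigl[\KL(\pi_w|\pi^{n+1})-\KL(\pi_w|\pi^n)\bigr](s)\,\rho_n(ds).
\end{equation}

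\emph{Second piece (target drift).} From the closed form \eqref{eq:mirror_descent_old_new_closed_form},
\begin{equation}
\log\frac{d\pi^n}{d\Lambda}-\log\frac{d\pi^{n+1}}{d\Lambda}=\lambda\Bigl(\widehat Q^n+\tau\log\frac{d\pi^n}{d\Lambda}\Bigr)+\log\frac{Z_{\mathrm{MD}}}{1},
\end{equation}
where the last term depends only on $s$. Hence the integrand equals $\int_A\lambda\bigl(\widehat Q^n+\tau\log\frac{d\pi^n}{d\Lambda}\bigr)\,d\pi_w$ plus a $w$-independent term. Using $|\log\frac{d\pi^n}{d\Lambda}|\leq 2\widehat Q_{\max}/\tau$ (the bound \eqref{eq:uniform_target_log_density_bound} already used in the proof of Theorem~\ref{thm:conv_1}), this is at most $3\lambda\widehat Q_{\max}$ in absolute value. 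Its oscillation is therefore at most $6\lambda\widehat Q_{\max}$, which fits within $16\lambda\widehat Q_{\max}$. The slack absorbs any factor-of-two losses, for instance if the log-normaliser is handled crudely.

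\emph{First piece (replay drift).} By Definition~\ref{def:ema_replay}, $\rho_{n+1}-\rho_n=\chi\lambda(d_\rho^{\pi_{w^n}}-\rho_n)$, whose total variation is at most $2\chi\lambda$. It then suffices to bound $\KL(\pi_w|\pi^{n+1})(s)$ uniformly in $s$ and in $w\in\mathfrak B(\mathrm R)$. Write
\begin{equation}
\KL(\pi_w|\pi^{n+1})(s)=-\mathrm H(\pi_w(\cdot|s))-\int_A\log\frac{d\pi^{n+1}}{d\Lambda}\,d\pi_w .
\end{equation}
The uniform entropy bound (Lemma~\ref{lem:uniform_entropy_bound}) gives $-\mathrm H(\pi_w(\cdot|s))\leq K_{\Sigma,\mathrm R}$, and the log-density bound gives the second term at most $2\widehat Q_{\max}/\tau$. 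Since the KL is non-negative, its oscillation is at most its supremum. This yields a contribution of at most $2\cdot2\chi\lambda(K_{\Sigma,\mathrm R}+2\widehat Q_{\max}/\tau)$. Dividing the sum of both pieces by $\kappa$ gives $C_{\text{drift}}\lambda$.

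\textbf{Main obstacle.} The delicate point is the uniform control of $\KL(\pi_w|\pi^{n+1})$: the log-density of $\pi_w$ is unbounded near $\partial A$. I would handle it exclusively through the entropy of $\pi_w$, via the push-forward decomposition \eqref{eq:uniform_objective_decomp} if Lemma~\ref{lem:uniform_entropy_bound} is not directly in this form, never through the pointwise density. A second point to check is the constrained first-order inequality, since the minimisers may lie on $\partial\mathfrak B(\mathrm R)$. This is legitimate because each $J_n(\cdot,\pi^n)$ is $C^1$ on a neighbourhood of the ball, which follows from the Gaussian smoothing in the proof of Theorem~\ref{thm:strong_convexity_L_smoothness_objective}.
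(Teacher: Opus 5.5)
Your proof is correct and follows essentially the same route as the paper. Both use quadratic growth of the strongly convex objective at the constrained minimisers, reduce to the oscillation of $J_{n+1}(w,\pi^{n+1})-J_n(w,\pi^n)$ over $\mathfrak B(\mathrm R)$, and split it into two pieces. The first is an $O(\lambda)$ target-drift term, which you get from the one-step log-density identity and the paper gets from the recursion for the centred log-densities $l_n$. The second is an $O(\chi\lambda)$ replay-drift term, controlled by the uniform bound $K_{\Sigma,\mathrm R}+2\widehat Q_{\max}/\tau$ on the KL.

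The only substantive difference is that you use strong convexity at both minimisers, whereas the paper uses it only at $w_*^{n+1}$ and uses mere optimality of $w_*^n$. This gains a factor of two and accounts for the slack you observed in $C_{\text{drift}}$.
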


  \begin{proof}
  Firstly define $D_n(w):=J_{n+1}(w,\pi^{n+1})-J_n(w,\pi^n)$ for any $w \in \mathfrak{B}(\mathrm{R})$ and observe that by Theorem \ref{thm:strong_convexity_L_smoothness_objective}, $w \mapsto J_n(w,\pi^n)$ is $\kappa$-strongly convex for all $w \in \mathfrak{B}(R)$ and $n \in \mathbb{N}$, which implies that
    \begin{equation}\label{eq:quadratic_growth_actor_minimiser}
      \frac{\kappa}{2}|w_*^{n+1}-w_*^n|_2^2
      \leq
      J_{n+1}(w_*^n,\pi^{n+1})
      -
      J_{n+1}(w_*^{n+1},\pi^{n+1}).
  \end{equation}
  Moreover since $J_{n+1}(w_*^n,\pi^{n+1})
      -J_{n+1}(w_*^{n+1},\pi^{n+1})
      \leq D_n(w_*^n)-D_n(w_*^{n+1})$ for any $ n \in \mathbb{N}$, it suffices to bound $D_n(w_*^n)-D_n(w_*^{n+1})$. To that end define
  \begin{align}
      l_n(s,a)
      &:=
      \log\frac{d\pi^n}{d\Lambda}(s,a)
      -
      \int_A
      \log\frac{d\pi^n}{d\Lambda}(s,a')\Lambda(da')
  \end{align}
  Observe that by \eqref{eq:mirror_descent_closed_form}, a direct calculation yields
  \begin{align}
      l_{n+1}(s,a)
      &=
      (1-\tau\lambda)l_n(s,a)
      -
      \lambda
      \left(
      \widehat Q^n(s,a)
      -
      \int_A \widehat Q^n(s,a')\pi^0(da'|s)
      \right).
  \end{align}
  Subtracting $l_n$ from both sides for each $s \in S$ and $a \in A$ gives
  \begin{align}
      l_{n+1}(s,a)-l_n(s,a)
      &=
      -\tau\lambda l_n(s,a)
      -
      \lambda
      \left(
      \widehat Q^n(s,a)
      -
      \int_A \widehat Q^n(s,a')\pi^0(da'|s)
      \right).
  \end{align}
  Taking absolute values and using the triangle inequality, we obtain
  \begin{align}\label{eq:l_n_difference_abs}
      |l_{n+1}(s,a)-l_n(s,a)|
      &\leq
      \tau\lambda|l_n(s,a)|+
      \lambda
      \left|
      \widehat Q^n(s,a)
      -
      \int_A \widehat Q^n(s,a')\pi^0(da'|s)
      \right|.
  \end{align}
  Moreover observe that
  \begin{align}
      \left|
      \widehat Q^n(s,a)
      -
      \int_A \widehat Q^n(s,a')\pi^0(da'|s)
      \right|
      &\leq
      |\widehat Q^n(s,a)|
      +
      \int_A |\widehat Q^n(s,a')|\pi^0(da'|s)\\
      &\leq
      2|\widehat Q^n|_{B_b(S\times A)}\\
      &\leq
      2\widehat Q_{\max},
  \end{align}
  Thus, for all $n \in \mathbb{N}$, \eqref{eq:l_n_difference_abs} becomes
  \begin{align}\label{eq:l_n_difference}
      \left|l_{n+1}-l_n\right|_{B_b(S\times A)}
      &\leq
      \lambda
      \left(
      \tau|l_n|_{B_b(S\times A)}
      +
      2\widehat Q_{\max}
      \right).
  \end{align}
  By \eqref{eq:mirror_descent_closed_form}, $\Lambda(A)=1$ and the triangle inequality, it holds that
  \begin{align}
      |l_n|_{B_b(S\times A)}
      &\leq2\widehat Q_{\max}\lambda\sum_{k=1}^{n}(1-\tau\lambda)^{n-k}
      \leq \frac{2\widehat Q_{\max}}{\tau}.
  \end{align}
  Substituting this into \eqref{eq:l_n_difference}, we obtain
  \begin{align}\label{eq:ln_diff_bound}
      \left|l_{n+1}-l_n\right|_{B_b(S\times A)}
      &\leq
      \lambda
      \left(
      \tau
      \left(
          \frac{2\widehat Q_{\max}}{\tau}
      \right)
      +
      2\widehat Q_{\max}
      \right)\\
      &=
      4\lambda
      \widehat Q_{\max}.
  \end{align}

  Thus for every $w,w'\in\mathfrak{B}(\mathrm R)$ and $n \in \mathbb{N}$, by \eqref{eq:ln_diff_bound} it holds that
  \begin{align}\label{eq:target_objective_drift_uniform}
      &\left|
      \left(J_{n+1}(w,\pi^{n+1})-J_{n+1}(w,\pi^{n})\right)
      -
      \left(J_{n+1}(w',\pi^{n+1})-J_{n+1}(w',\pi^{n})\right)
      \right|\\
      &=
      \left|
      \int_S\int_A
      \left(l_n(s,a)-l_{n+1}(s,a)\right)
      (\pi_w-\pi_{w'})(da|s)\rho_{n+1}(ds)
      \right|\\
      &\leq
      2\left|l_{n+1}-l_n\right|_{B_b(S\times A)} \\
      &\leq 8\lambda\widehat Q_{\max}.
  \end{align}
Now recall that by the proof of Theorem \ref{thm:main_value_bound}, for all $n \in \mathbb{N}$ it holds that $\operatorname{KL}(\pi_w(\cdot|s)|\pi^n(\cdot|s))
  \leq
  K_{\Sigma,\mathrm R}
  +\frac{2\widehat Q_{\max}}{\tau}
  =:M_J$ for all $s \in S$. Moreover, by definition of the replay distribution we have $\rho_{n+1}-\rho_n=\chi\lambda(d_\rho^{\pi_{w^n}}-\rho_n)$. Thus it follows that
    \begin{align}\label{eq:replay_objective_drift_uniform}
  &\left|
  \left(J_{n+1}(w,\pi^n)-J_n(w,\pi^n)\right)
  -
  \left(J_{n+1}(w',\pi^n)-J_n(w',\pi^n)\right)
  \right|
  \\
  &=
  \left|
  \int_S
  \operatorname{KL}(\pi_w(\cdot|s)|\pi^n(\cdot|s))
  (\rho_{n+1}-\rho_n)(ds)
  -
  \int_S
  \operatorname{KL}(\pi_{w'}(\cdot|s)|\pi^n(\cdot|s))
  (\rho_{n+1}-\rho_n)(ds)
  \right|
  \\
  &=
  \chi\lambda
  \left|
  \int_S
  \left(
  \operatorname{KL}(\pi_w(\cdot|s)|\pi^n(\cdot|s))
  -
  \operatorname{KL}(\pi_{w'}(\cdot|s)|\pi^n(\cdot|s))
  \right)
  d_\rho^{\pi_{w^n}}(ds)
  \right.
  \nonumber\\
  &\qquad\left.
  -
  \int_S
  \left(
  \operatorname{KL}(\pi_w(\cdot|s)|\pi^n(\cdot|s))
  -
  \operatorname{KL}(\pi_{w'}(\cdot|s)|\pi^n(\cdot|s))
  \right)
  \rho_n(ds)
  \right| \\
  &\leq
  2\chi\lambda M_J.
  \end{align}
  Now define $D_n(w):=J_{n+1}(w,\pi^{n+1})-J_n(w,\pi^n)$. Combining
  \eqref{eq:target_objective_drift_uniform} and
  \eqref{eq:replay_objective_drift_uniform} yields
  \begin{equation}\label{eq:objective_drift_uniform}
      |D_n(w)-D_n(w')|
      \leq
      \lambda\left(8\widehat Q_{\max}+2\chi M_J\right).
  \end{equation}

  Now observe that, since $w\mapsto J_{n+1}(w,\pi^{n+1})$ is
  $\kappa$-strongly convex on $\mathfrak{B}(\mathrm R)$ and $w_*^{n+1}$ minimises
  $w\mapsto J_{n+1}(w,\pi^{n+1})$ over $\mathfrak{B}(\mathrm R)$, we have
  \begin{equation}
      \frac{\kappa}{2}|w_*^{n+1}-w_*^n|_2^2
      \leq
      J_{n+1}(w_*^n,\pi^{n+1})
      -
      J_{n+1}(w_*^{n+1},\pi^{n+1}).
  \end{equation}
  Adding and subtracting $J_n(w_*^n,\pi^n)$ and $J_n(w_*^{n+1},\pi^n)$ and using \eqref{eq:objective_drift_uniform} yields
  \begin{align}\label{eq:actor_minimiser_drift_before_uniform_bound}
      &J_{n+1}(w_*^n,\pi^{n+1})
      -J_{n+1}(w_*^{n+1},\pi^{n+1})
      \leq D_n(w_*^n)-D_n(w_*^{n+1})
      \leq \lambda\left(8\widehat Q_{\max}+2\chi M_J\right).
  \end{align}
  Combining \eqref{eq:quadratic_growth_actor_minimiser} and
  \eqref{eq:actor_minimiser_drift_before_uniform_bound}, and rearranging,
  concludes the proof. 
  \end{proof}

\section{Proof of Lemma \ref{lemma:actor_tracking}}
\label{sec:proof_of_actor_tracking}
\begin{proof}
Let $\kappa:=(1-\gamma)\left(
      2
      -
      \frac{R_{\widehat Q}}{\tau}
      \right)c_{\Sigma,\mathrm R}\lambda_P$ and $
      L
      :=
      2
      +
      \frac{R_{\widehat Q}}{\tau}$. By assumption and Theorem \ref{thm:strong_convexity_L_smoothness_objective}, $w \mapsto J_n(w,\pi^n)$ is $\kappa$-strongly convex and $L$-smooth for any $n \in \mathbb{N}$. Now recall that by definition, for any $n \geq 1$,
    \begin{equation}
        w^n
        =
        \mathcal P_{\mathfrak{B}(\mathrm R)}
        \left(
        w^{n-1}
        -
        h\nabla_w J_n(w^{n-1},\pi^n)
        \right).
    \end{equation}
    Since $w_*^n =  \argmin_{w \in \mathfrak{B}(\mathrm R)} J_n(w,\pi^n)$, the first-order optimality condition gives that for all $w\in\mathfrak{B}(\mathrm R)$
    \begin{equation}\label{eq:actor_projected_optimality}
        \left(\nabla_w J_n(w_*^n,\pi^n)\right)^{\top}
        (w-w_*^n)
        \geq
        0.
    \end{equation}
    Since $\mathfrak{B}(\mathrm R)$ is closed and convex, the projection characterisation gives $x=\mathcal P_{\mathfrak{B}(\mathrm R)}(y)$ if and only if $(y-x)^{\top}(w-x)\leq0$ for all $w\in\mathfrak{B}(\mathrm R)$.
    Applying this with $x=w_*^n$ and $y=w_*^n-h\nabla_w J_n(w_*^n,\pi^n)$,
  \eqref{eq:actor_projected_optimality} implies that
    \begin{equation}
        w_*^n
        =
        \mathcal P_{\mathfrak{B}(\mathrm R)}
        \left(
        w_*^n
        -
        h\nabla_w J_n(w_*^n,\pi^n)
        \right).
    \end{equation}
    Therefore, it holds that
    \begin{align}\label{eq:first_non_expansive_calc}
        |w^n-w_*^n|_2^2
        &=
        \left|
        \mathcal P_{\mathfrak{B}(\mathrm R)}
        \left(
        w^{n-1}
        -
        h\nabla_w J_n(w^{n-1},\pi^n)
        \right)
        -
        w_*^n
        \right|_2^2\\
        &=
        \left|
        \mathcal P_{\mathfrak{B}(\mathrm R)}
        \left(
        w^{n-1}
        -
        h\nabla_w J_n(w^{n-1},\pi^n)
        \right)
        -
        \mathcal P_{\mathfrak{B}(\mathrm R)}
        \left(
        w_*^n
        -
        h\nabla_w J_n(w_*^n,\pi^n)
        \right)
        \right|_2^2\\
        &\leq
        \left|
        w^{n-1}-w_*^n
        -
        h
        \left(
        \nabla_w J_n(w^{n-1},\pi^n)
        -
        \nabla_w J_n(w_*^n,\pi^n)
        \right)
        \right|_2^2\\
        &=
        |w^{n-1}-w_*^n|_2^2
        -
        2h
        \left(
        \nabla_w J_n(w^{n-1},\pi^n)
        -
        \nabla_w J_n(w_*^n,\pi^n)
        \right)^{\top}
        (w^{n-1}-w_*^n)\\
        &\qquad
        +
        h^2
        \left|
        \nabla_w J_n(w^{n-1},\pi^n)
        -
        \nabla_w J_n(w_*^n,\pi^n)
        \right|_2^2,
    \end{align}
  where in the first inequality we used the non-expansiveness of the projection operator. By \citet[Theorem~2.1.12]{nesterov2004introductory}, it holds that
    \begin{align}\label{eq:strong_smooth_cocoercive}
        &\left(
        \nabla_w J_n(w,\pi^n)-\nabla_w J_n(\bar w,\pi^n)
        \right)^{\top}
        (w-\bar w)\\
        &\qquad\geq
        \frac{\kappa L}{\kappa+L}|w-\bar w|_2^2
        +
        \frac{1}{\kappa+L}
        \left|
        \nabla_w J_n(w,\pi^n)-\nabla_w J_n(\bar w,\pi^n)
        \right|_2^2.
    \end{align}
    Applying \eqref{eq:strong_smooth_cocoercive} with $w=w^{n-1}$ and $\bar w=w_*^n$ and substituting into the
  right hand side of \eqref{eq:first_non_expansive_calc}, we have
    \begin{align}\label{eq:dropping_negative_terms}
        |w^n-w_*^n|_2^2
        &\leq
        |w^{n-1}-w_*^n|_2^2
        -
        \frac{2h\kappa L}{\kappa+L}
        |w^{n-1}-w_*^n|_2^2\\
        &\quad+\left(
        h^2
        -
        \frac{2h}{\kappa+L}
        \right)
        \left|
        \nabla_w J_n(w^{n-1},\pi^n)
        -
        \nabla_w J_n(w_*^n,\pi^n)
        \right|_2^2\\
        &=
        \left(
        1-\frac{2h\kappa L}{\kappa+L}
        \right)
        |w^{n-1}-w_*^n|_2^2\\
        &\quad+\left(
        h^2
        -
        \frac{2h}{\kappa+L}
        \right)
        \left|
        \nabla_w J_n(w^{n-1},\pi^n)
        -
        \nabla_w J_n(w_*^n,\pi^n)
        \right|_2^2.
    \end{align}
    Since $hL<1$ and $\kappa\leq L$, it holds that $\frac{2L h \kappa}
    {\kappa+L}\geq h \kappa$ and
    \begin{equation}
        h^2
        -
        \frac{2h}{\kappa+L}
        \leq
        \frac{h}{L}
        -
        \frac{2h}{\kappa+L}
        =
        h
        \frac{\kappa-L}{L(\kappa+L)}
        \leq
        0.
    \end{equation}
    Hence, after dropping the nonpositive gradient term in
  \eqref{eq:dropping_negative_terms} and using that $1-\frac{2L h \kappa}
    {\kappa+L} \leq 1 - h \kappa$, we arrive at
    \begin{align}
        |w^n-w_*^n|_2^2
        &\leq
        \left(
        1-\frac{2h\kappa L}{\kappa+L}
        \right)
        |w^{n-1}-w_*^n|_2^2\\
        &\leq
        (1-h\kappa)|w^{n-1}-w_*^n|_2^2.
    \end{align}
  Now adding and subtracting $w_*^{n-1}$ on the right hand side and using Young's inequality for any $\delta > 0$,
  it holds that
    \begin{align}
        |w^n-w_*^n|_2^2
        &\leq
        (1+\delta)(1-h\kappa)
        |w^{n-1}-w_*^{n-1}|_2^2+
        \left( 1+ \frac{1}{\delta}\right)\left(1-h \kappa \right)
        |w_*^{n}-w_*^{n-1}|_2^2.
    \end{align}
    By Lemma \ref{lemma:policy_drift}, for every $n\geq1$ we have
    \begin{align}\label{eq:choose_delta}
        |w^n-w_*^n|_2^2
        &\leq
        (1+\delta)(1-h\kappa)
        |w^{n-1}-w_*^{n-1}|_2^2+
        \lambda C_{\text{drift}}\left( 1+ \frac{1}{\delta}\right)\left(1-h \kappa \right).
    \end{align}
    Now choose $\delta:=\frac{h\kappa}{2(1-h\kappa)}$, which is well-defined because $0<h\kappa\leq hL<1$. Then \eqref{eq:choose_delta} becomes
  \begin{align}\label{eq:actor_tracking_recursion}
      |w^n-w_*^n|_2^2
        &\leq
        \left(
        1+
        \frac{h\kappa}{2(1-h\kappa)}
        \right)(1-h\kappa)|w^{n-1}-w_*^{n-1}|_2^2+
        \lambda C_{\text{drift}}\left(
        1+
        \frac{2(1-h\kappa)}{h\kappa}
        \right) \left(1-h \kappa \right)\\
        &\leq
        \left(
        1-\frac{h\kappa}{2}
        \right)
        |w^{n-1}-w_*^{n-1}|_2^2
        +
        \frac{
        2\lambda C_{\text{drift}}
        }{
        h\kappa
        }.
  \end{align}
  where in the last inequality we used that $\left(1+\frac{2(1-h\kappa)}{h\kappa}\right)(1-h\kappa)
      \leq \frac{2}{h\kappa}$. Iterating \eqref{eq:actor_tracking_recursion}, for any
    $k\in\mathbb N$ gives
    \begin{align}\label{eq:before_average}
        |w^k-w_*^k|_2^2
        &\leq
        \left(
        1-\frac{h\kappa}{2}
        \right)^k
        |w^0-w_*^0|_2^2+
        \frac{
        2\lambda C_{\text{drift}}
        }{
        h\kappa
        }
        \sum_{j=0}^{k-1}
        \left(
        1-\frac{h\kappa}{2}
        \right)^j\\
        &\leq
        \left(
        1-\frac{h\kappa}{2}
        \right)^k
        |w^0-w_*^0|_2^2
        +
        \frac{
        4\lambda C_{\text{drift}}
        }{
        h^2\kappa^2
        }.
    \end{align}
    To ease notation, let $C_{\mathrm{init}}
        :=
        \frac{2|w^0-w_*^0|_2^2}{\kappa}$ and $C_{\mathrm{track}}
        :=
        \frac{4C_{\text{drift}}}{\kappa^2}$.
    Averaging \eqref{eq:before_average} over $k=0,\ldots,n-1$, we obtain
    \begin{align}\label{eq:average_actor_parameter_tracking_simplified}
        \frac{1}{n}
        \sum_{k=0}^{n-1}
        |w^k-w_*^k|_2^2
        &\leq
        \frac{|w^0-w_*^0|_2^2}{n}
        \sum_{k=0}^{n-1}
        \left(
        1-\frac{h\kappa}{2}
        \right)^k
        +
        \frac{\lambda C_{\mathrm{track}}}{h^2}\\
        &\leq
        \frac{C_{\mathrm{init}}}{h n}
        +
        \frac{\lambda C_{\mathrm{track}}}{h^2}.
    \end{align}
    Now fix $k\in\{0,\ldots,n-1\}$. Since $w\mapsto J_k(w,\pi^k)$ is
    $L$-smooth on $\mathfrak{B}(\mathrm R)$ by Theorem \ref{thm:strong_convexity_L_smoothness_objective} for any $k \in \mathbb{N}$ we have    \begin{align}\label{eq:by_L_smoothness}
        J_k(w^k,\pi^k)-J_k(w_*^k,\pi^k)
        &\leq
        \left(\nabla_w J_k(w_*^k,\pi^k)\right)^{\top}
        (w^k-w_*^k)
        +
        \frac{L}{2}|w^k-w_*^k|_2^2.
    \end{align}
    By Lemma \ref{lem:uniform_actor_gradient_bound} for every $k\in\mathbb N$
    and every $w\in\mathfrak{B}(\mathrm R)$ it holds that
    \begin{equation}
        \left|
        \nabla_w J_k(w,\pi^k)
        \right|_2
        \leq 2\sqrt M+
        \sqrt{\frac{2}{\pi}}\frac{\widehat Q_{\max}}{\tau}\sum_{i=1}^M\frac1{\sigma_i} := C_{\text{grad}}
    \end{equation}
    Moreover, since $w^k,w_*^k\in\mathfrak{B}(\mathrm R)$, it holds that $|w^k-w_*^k|_2\leq2\mathrm R$, which in turn implies
  that $|w^k-w_*^k|_2^2\leq2\mathrm R|w^k-w_*^k|_2$. Using this bound in \eqref{eq:by_L_smoothness} we have
    \begin{align}\label{eq:before_square_root}
        J_k(w^k,\pi^k)-J_k(w_*^k,\pi^k)
        &\leq
        C_{\text{grad}}|w^k-w_*^k|_2
        +
        L\mathrm R|w^k-w_*^k|_2\\
        &=
        (C_{\text{grad}}+L\mathrm R)|w^k-w_*^k|_2.
    \end{align}
    Since $w_*^k$ minimises $w\mapsto J_k(w,\pi^k)$ over $\mathfrak{B}(\mathrm R)$ and $w^k\in\mathfrak{B}(\mathrm R)$, the
  left hand side of \eqref{eq:before_square_root} is nonnegative. Define
  $C_3:=(C_{\text{grad}}+L\mathrm R)^{\frac{1}{2}}$. Then taking the square root of \eqref{eq:before_square_root}, averaging and using
  Hölder's inequality, it holds that
    \begin{align}
        \frac{1}{n}
        \sum_{k=0}^{n-1}
        \left(
        J_k(w^k,\pi^k)-J_k(w_*^k,\pi^k)
        \right)^{\frac{1}{2}}
        &\leq
        \frac{C_3}{n}
        \sum_{k=0}^{n-1}
        |w^k-w_*^k|_2^{\frac{1}{2}}\\
        &\leq
        C_3
        \left(
        \frac{1}{n}
        \sum_{k=0}^{n-1}
        |w^k-w_*^k|_2^2
        \right)^{\frac{1}{4}}.
    \end{align}
    Substituting \eqref{eq:average_actor_parameter_tracking_simplified}, we obtain
    \begin{align}
        \frac{1}{n}
        \sum_{k=0}^{n-1}
        \left(
        J_k(w^k,\pi^k)-J_k(w_*^k,\pi^k)
        \right)^{\frac{1}{2}}
        &\leq
        C_3
        \left(
        \frac{C_{\mathrm{init}}}{h n}
        +
        \frac{\lambda C_{\mathrm{track}}}{h^2}
        \right)^{\frac{1}{4}}\\
        &\leq
        C_3\max\left\{C_{\mathrm{init}},C_{\mathrm{track}}\right\}^{\frac14}
        \left(
        \frac{1}{hn}
        +
        \frac{\lambda}{h^2}
        \right)^{\frac14}.
    \end{align}
  Thus the result follows with $C_1
      :=
      C_3\max\left\{C_{\mathrm{init}},C_{\mathrm{track}}\right\}^{\frac14}$.
  \end{proof}

\section{Proof of Theorem \ref{thm:tuned_convergence_rate}}
\begin{proof}
The proof is a direct calculation. Let $\vartheta:=\min\left\{\frac{1}{\tau},\frac{1}{\chi}\right\}$, $\lambda = \frac{\vartheta}{2N^{\frac{4}{5}}}$ and $0 < h < \frac{\tau}{2\tau + R_{\widehat{Q}}}$. Since $N\geq1$ we have $0<\tau\lambda\leq\frac12$ and $0<\chi\lambda\leq\frac12$. 
Therefore by Theorem \ref{thm:main_final_bound} and Lemma \ref{lemma:actor_tracking}, there exists $\mathrm C<\infty$, independent of $N$, such that
\begin{align}
&\min_{0\leq r\leq N-1}
V_\tau^{\pi_{w^r}}(\rho)-V_\tau^{\pi^*}(\rho)\\
&\quad\leq
\mathrm{C}\left(
\frac1{\lambda N}
+\lambda
+\left(\frac1{hN}+\frac{\lambda}{h^2}\right)^{\frac{1}{4}}
+\varepsilon_{\mathrm{approx}}(N)
\right)\\
&\quad=
\mathrm{C}\left(
\frac2\vartheta N^{-\frac{1}{5}}
+\frac\vartheta2N^{-\frac{4}{5}}
+N^{-\frac{1}{5}}\left(\frac1{hN^{\frac{1}{5}}}+\frac\vartheta{2h^2}\right)^{\frac{1}{4}}
+\varepsilon_{\mathrm{approx}}(N)
\right)\\
&\quad\leq
\mathrm{C}\left(
\left(
\frac2\vartheta
+\frac\vartheta2
+\left(\frac1h+\frac\vartheta{2h^2}\right)^{\frac{1}{4}}
\right)N^{-\frac{1}{5}}
+\varepsilon_{\mathrm{approx}}(N)
\right)\\
&\quad\leq
C\left(N^{-\frac{1}{5}}+\varepsilon_{\mathrm{approx}}(N)\right),
\end{align}
where
\begin{equation}
C
:=
\mathrm{C}\max\left\{
\frac2\vartheta
+\frac\vartheta2
+\left(\frac1h+\frac\vartheta{2h^2}\right)^{\frac{1}{4}},
1
\right\}.
\end{equation}
\end{proof}

\section{Proof of Corollary \ref{cor:gibbs_actor_tracking}}
\begin{proof}
Firstly recall that Theorem \ref{thm:strong_convexity_L_smoothness_objective} holds in the case where the target policy is $\pi^{n+1}_{G}$ for all $n \in \mathbb{N}$. Therefore $w \mapsto J_{n}(w,\pi^n_{G})$ is $(1-\gamma)\left(2-\frac{R_{\widehat Q}}{\tau}\right)c_{\Sigma,\mathrm R}\lambda_P$-strongly convex and $2+\frac{R_{\widehat Q}}{\tau}$-smooth for all $w \in \mathfrak{B}(\mathrm{R})$ and $n\geq1$. Moreover, it also directly holds that
\begin{equation}
    \left|\log\frac{d\pi_{\mathrm G}^k}{d\Lambda}\right|_{B_b(S\times A)}
    \leq\frac{2\widehat Q_{\max}}{\tau},
\end{equation}
and consequently by \eqref{eq:KL_difference_bound_by_logdens} we have
\begin{align}
  \operatorname{KL}(\pi_{w^n}(\cdot|s)|\pi_{\mathrm G}^n(\cdot|s))
  &\leq
  K_{\Sigma,\mathrm R} + \left| \log \frac{d\pi^n_G}{d\Lambda}\right|_{B_b(S\times A)}\\
  &\leq K_{\Sigma,\mathrm R} + \frac{2\widehat Q_{\max}}{\tau} \\
  &:=M_{\mathrm G}
\end{align}
Therefore, analogously to the proof of Lemma \ref{lemma:actor_tracking}, let
\begin{equation}
    D_{k}(w)
    :=J_{k+1}(w,\pi_{\mathrm G}^{k+1})-J_k(w,\pi_{\mathrm G}^k),
\end{equation}
then for all $k\geq1$ it holds that
\begin{align}
&\left|
\left(J_{k+1}(w,\pi_{\mathrm G}^{k+1})-J_{k+1}(w,\pi_{\mathrm G}^{k})\right)
-\left(J_{k+1}(w',\pi_{\mathrm G}^{k+1})-J_{k+1}(w',\pi_{\mathrm G}^{k})\right)
\right|\\
&\quad=
\frac1\tau\left|
\int_S\int_A
\left(\widehat Q^k(s,a)-\widehat Q^{k-1}(s,a)\right)
(\pi_w-\pi_{w'})(da|s)\rho_{k+1}(ds)
\right|\\
&\quad\leq\frac{4\widehat Q_{\max}}{\tau}.
\end{align}
Moreover, using $\rho_{k+1}-\rho_k=\chi\lambda(d_\rho^{\pi_{w^k}}-\rho_k)$ exactly as in \eqref{eq:replay_objective_drift_uniform} gives
\begin{align}
&\left|
\left(J_{k+1}(w,\pi_{\mathrm G}^{k})-J_k(w,\pi_{\mathrm G}^{k})\right)
-\left(J_{k+1}(w',\pi_{\mathrm G}^{k})-J_k(w',\pi_{\mathrm G}^{k})\right)
\right| \leq2\chi\lambda M_{\mathrm G}.
\end{align}
Therefore it also holds that
\begin{equation}
    |D_{k}(w)-D_{k}(w')|
    \leq\frac{4\widehat Q_{\max}}{\tau}+2\chi\lambda M_{\mathrm G}.
\end{equation}
Now by strong convexity and optimality, we have
\begin{align}
\frac{\kappa}{2}|w_{\mathrm G,*}^{k+1}-w_{\mathrm G,*}^k|_2^2
&\leq
J_{k+1}(w_{\mathrm G,*}^k,\pi_{\mathrm G}^{k+1})
-J_{k+1}(w_{\mathrm G,*}^{k+1},\pi_{\mathrm G}^{k+1})\\
&\leq
D_{k}(w_{\mathrm G,*}^k)
-D_{k}(w_{\mathrm G,*}^{k+1})\\
&\leq
\frac{4\widehat Q_{\max}}{\tau}+2\chi\lambda M_{\mathrm G}.
\end{align}
Hence after rearranging it holds that
\begin{align}
|w_{\mathrm G,*}^{k+1}-w_{\mathrm G,*}^k|_2^2
&\leq
C_{\mathrm{drift}}
\left(\frac1\tau+\chi\lambda\right),
\end{align}
where $C_{\mathrm{drift}}
    :=\frac1\kappa\max\left\{8\widehat Q_{\max},4M_{\mathrm G}\right\}$.
Now by \eqref{eq:strong_smooth_cocoercive} and \eqref{eq:dropping_negative_terms}, for all $k\geq1$ it holds that
\begin{align}
|w^{k+1}-w_{\mathrm G,*}^{k+1}|_2^2
&\leq
\left(1-\frac{h\kappa}{2}\right)
|w^k-w_{\mathrm G,*}^k|_2^2
+\frac{2C_{\mathrm{drift}}}{h\kappa}
\left(\frac1\tau+\chi\lambda\right).
\end{align}
Since $|w^1-w_{\mathrm G,*}^1|_2\leq2\mathrm R$, iterating and averaging yields
\begin{align}
\frac1n\sum_{k=1}^{n-1}|w^k-w_{\mathrm G,*}^k|_2^2
&\leq
\frac{8\mathrm R^2}{h\kappa n}
+\frac{4C_{\mathrm{drift}}}{h^2\kappa^2}
\left(\frac1\tau+\chi\lambda\right).
\end{align}
Moreover, by Lemma \ref{lem:uniform_actor_gradient_bound}, for every $w\in\mathfrak B(\mathrm R)$ and $k\geq1$ it holds that
\begin{equation}
|\nabla_wJ_k(w,\pi_{\mathrm G}^k)|_2
\leq
C_{\text{grad}}
:=
2\sqrt M+\sqrt{\frac2\pi}\frac{\widehat Q_{\max}}{\tau}\sum_{i=1}^M\frac1{\sigma_i}.
\end{equation}
Thus, by $L$-smoothness and H\"older's inequality,
\begin{align}
&\frac1n\sum_{k=1}^{n-1}
\left(J_k(w^k,\pi_{\mathrm G}^k)-J_k(w_{\mathrm G,*}^k,\pi_{\mathrm G}^k)\right)^{\frac{1}{2}}\\
&\quad\leq
(C_{\text{grad}}+L\mathrm R)^{\frac{1}{2}}
\left(
\frac1n\sum_{k=1}^{n-1}|w^k-w_{\mathrm G,*}^k|_2^2
\right)^{\frac{1}{4}}\\
&\quad\leq
C
\left(
\frac1{hn}
+\frac1{h^2}\left(\frac1\tau+\chi\lambda\right)
\right)^{\frac{1}{4}},
\end{align}
where
\begin{equation}
C
:=
(C_{\text{grad}}+L\mathrm R)^{\frac{1}{2}}
\max\left\{
\frac{8\mathrm R^2}{\kappa},
\frac{4C_{\mathrm{drift}}}{\kappa^2}
\right\}^{\frac{1}{4}}.
\end{equation}

\end{proof}

\section{Auxiliary results}
\subsection{Proof of Lemma \ref{lem:uniform_entropy_bound}}
\setcounter{lemma}{4}
\begin{lemma}[Entropy of a squashed Gaussian policy]
\label{lem:uniform_entropy_bound}
For every $s\in S$ and every $w\in\mathfrak B(\mathrm R)$, it holds that
\begin{align}\label{eq:uniform_entropy_identity}
\mathrm{H}(\pi_w(\cdot|s))
&=
\log\left(\frac{(2\pi)^{\frac{M}{2}}\sqrt{\det\Sigma}}{2^M}\right)+\frac M2
-2\int_{\mathbb R^M}\sum_{i=1}^M\log\cosh(u_i)q_w(du|s) \\
&\geq -2\sqrt M\,\mathrm R-2\sqrt{\frac2\pi}\sum_{i=1}^M\sigma_i
-\left|\log\left(\frac{(2\pi)^{\frac{M}{2}}\sqrt{\det\Sigma}}{2^M}\right)+\frac M2\right| \\
&=:-K_{\Sigma,\mathrm R}.
\end{align}
\end{lemma}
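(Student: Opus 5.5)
We will prove the identity first and then derive the lower bound from it. For the identity, we compute the density of $\pi_w(\cdot|s)=(\tanh)_\#q_w(\cdot|s)$ with respect to $\Lambda$ using the change of variables $a=\tanh u$, exactly as in the proof of Theorem \ref{thm:strong_convexity_L_smoothness_objective}. Since $\Lambda(da)=2^{-M}\Lambda_{\mathbb R^M}(da)$ and the Jacobian of $\tanh$ is $\det D(\tanh u)=\prod_i\cosh^{-2}u_i$, this gives
\begin{equation}
\frac{d\pi_w}{d\Lambda}(s,\tanh u)=2^M\frac{dq_w}{d\Lambda_{\mathbb R^M}}(s,u)\prod_{i=1}^M\cosh^2 u_i .
\end{equation}
Substituting into the definition of $\mathrm H$ in \eqref{eq:entropy_definition} and pushing the integral back to $\mathbb R^M$ via Definition \ref{def:pushforward-pullback} yields
\begin{equation}
\mathrm H(\pi_w(\cdot|s))=-M\log 2+\mathrm H(q_w(\cdot|s))-2\int_{\mathbb R^M}\sum_{i}\log\cosh u_i\,q_w(du|s).
\end{equation}
We then use the Gaussian entropy formula $\mathrm H(q_w(\cdot|s))=\log((2\pi)^{M/2}\sqrt{\det\Sigma})+\frac M2$. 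The push-forward identity is stated for bounded $f$, while $\log\frac{d\pi_w}{d\Lambda}$ is unbounded. To handle this we will apply it to nonnegative parts separately, or use truncation with monotone convergence. This is justified because $\log\cosh u_i\le |u_i|$ is integrable against the Gaussian, so every term is finite.

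For the lower bound, we use $0\le\log\cosh r\le |r|$ to get
\begin{equation}
2\int\sum_i\log\cosh u_i\,q_w(du|s)\le 2\sum_i\mathbb E|m_w(s)_i+\sigma_iZ_i|\le 2|m_w(s)|_1+2\sum_i\sigma_i\mathbb E|Z_i|,
\end{equation}
where $Z\sim q^1$. Next, $\mathbb E|Z_i|=\sqrt{2/\pi}$. For the mean term, $|m_w(s)|_1\le\sqrt M|m_w(s)|_2\le\sqrt M|x(s)|_{\op}|w|_2\le\sqrt M\mathrm R$. Finally, the constant term is bounded below by minus its absolute value. Combining these three bounds gives $-K_{\Sigma,\mathrm R}$.

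The only delicate step is the rigorous justification of the change of variables for the unbounded log-density. Everything else is elementary.
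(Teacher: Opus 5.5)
Your proposal is correct and follows essentially the same route as the paper. Both compute the density of $\pi_w$ via the change of variables $a=\tanh u$, reduce the entropy to the Gaussian entropy plus the $-2\int\sum_i\log\cosh u_i\,q_w(du|s)$ correction, and bound that correction using $\log\cosh r\le|r|$, $\mathbb E|Z_i|=\sqrt{2/\pi}$ and $|m_w(s)|_1\le\sqrt M\,\mathrm R$. Your added justification of the push-forward identity for the unbounded but integrable log-density, by splitting into nonnegative parts or truncating, covers a point the paper passes over silently.
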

\begin{proof}
Since the Jacobian of $u\mapsto\tanh u$ is $D(\tanh u)$, the change-of-variables formula gives, for every $a\in A$ with $u=\tanh^{-1}a$,
\begin{equation}\label{eq:uniform_actor_density}
\frac{d\pi_w}{d\Lambda}(s,a)
=
\frac{2^M}{(2\pi)^{\frac{M}{2}}\sqrt{\det\Sigma}}
\frac{1}{\det D(a)}
\exp\left(-\frac12(u-m_w(s))^\top\Sigma^{-1}(u-m_w(s))\right).
\end{equation}
Therefore, by Definition \ref{def:pushforward-pullback} it holds that
\begin{align}
&\mathrm{H}(\pi_w(\cdot|s)) \\
&=
-\int_{\mathbb R^M}
\Bigg(
M\log 2-\log\bigl((2\pi)^{\frac{M}{2}}\sqrt{\det\Sigma}\bigr)-\frac12(u-m_w(s))^\top\Sigma^{-1}(u-m_w(s))
\Bigg)q_w(du|s)\\
&\quad+\int_{\mathbb R^M}\sum_{i=1}^M\log(1-\tanh^2u_i)q_w(du|s)\\
&=
\log\left(\frac{(2\pi)^{\frac{M}{2}}\sqrt{\det\Sigma}}{2^M}\right)+\frac M2
-2\int_{\mathbb R^M}\sum_{i=1}^M\log\cosh(u_i)q_w(du|s),
\end{align}
where the second equality uses $1-\tanh^2u_i=\cosh^{-2}u_i$ and $\int_{\mathbb R^M}(u-m_w(s))^\top\Sigma^{-1}(u-m_w(s))q_w(du|s)=M$. Finally, $\log\cosh u_i\leq|u_i|$ and
\begin{align}
\int_{\mathbb R^M}|u|_1q_w(du|s)
&\leq
|m_w(s)|_1+\int_{\mathbb R^M}|u-m_w(s)|_1q_w(du|s)\\
&=
|m_w(s)|_1+\sqrt{\frac2\pi}\sum_{i=1}^M\sigma_i
\leq
\sqrt M\,\mathrm R+\sqrt{\frac2\pi}\sum_{i=1}^M\sigma_i,
\end{align}
where $|m_w(s)|_1\leq\sqrt M\,|x(s)^\top w|_2\leq\sqrt M\,\mathrm R$.
Substitution into \eqref{eq:uniform_entropy_identity} concludes the proof.
\end{proof}

\subsection{Proof of Theorem \ref{thm:main_value_bound}}
\begin{theorem}\label{thm:main_value_bound}
Let Assumption \ref{ass:Q_oracle} hold and suppose that $0<\tau\lambda<1$. Then there exists a non negative constant $C_{\mathrm{KL}} < \infty$ such that for all $n \in \mathbb{N}$ and $s \in S$ it holds that 
\begin{align}\label{eq:KL_continuity}
      \left|
      \mathrm{H}(\pi_{w^n}(\cdot|s))
      -
      \mathrm{H}(\pi^n(\cdot|s))
      \right| \leq
      C_{\mathrm{KL}}
      \left(
      \operatorname{KL}(\pi_{w^n}(\cdot|s)|\pi^n(\cdot|s))
      \right)^{\frac{1}{2}}.
  \end{align}
\end{theorem}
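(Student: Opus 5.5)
We would prove the bound by writing the entropy difference as a relative-entropy term plus a bounded log-density integrated against a difference of measures. For fixed $s$, abbreviate $\mu:=\pi_{w^n}(\cdot|s)$, $\nu:=\pi^n(\cdot|s)$, and $g:=\log\frac{d\nu}{d\Lambda}(s,\cdot)$. Since $\mathrm H(\mu)$ is finite by Lemma \ref{lem:uniform_entropy_bound} and $g$ is bounded, the standard identity $\int_A g\,d\mu=-\mathrm H(\mu)-\KL(\mu|\nu)$ is legitimate. Combined with $\mathrm H(\nu)=-\int_A g\,d\nu$, it gives
\begin{equation}
\mathrm H(\mu)-\mathrm H(\nu)=\int_A g\,(\nu-\mu)(da)-\KL(\mu|\nu).
\end{equation}

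The first ingredient is a uniform bound on $g$. By the closed form \eqref{eq:mirror_descent_closed_form} (iterated from the uniform $\pi^0$), $g=-F-\log\overline Z$, where $F:=\lambda\sum_{k<n}(1-\tau\lambda)^{n-1-k}\widehat Q^k(s,\cdot)$. Since $\lambda\sum_k(1-\tau\lambda)^{j}\le 1/\tau$, we have $|F|\le \widehat Q_{\max}/\tau$. Because $\Lambda(A)=1$, $\log\overline Z$ also lies in $[-\widehat Q_{\max}/\tau,\widehat Q_{\max}/\tau]$. Hence $|g|_{B_b}\le 2\widehat Q_{\max}/\tau=C_{\log}$, uniformly in $n$ and $s$; this is the bound \eqref{eq:uniform_target_log_density_bound} used elsewhere. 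This is where $0<\tau\lambda<1$ enters.

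With this, $\bigl|\int_A g\,(\nu-\mu)(da)\bigr|\le C_{\log}|\mu-\nu|_{\mathcal M(A)}\le C_{\log}\sqrt{2\KL(\mu|\nu)}$ by Pinsker's inequality. This leaves the term $\KL(\mu|\nu)$, which is linear rather than square-root, and converting it is the only real obstacle. We would use a uniform upper bound on the KL. Because $\KL\ge0$ and $-\int_A g\,d\mu\le C_{\log}$, Lemma \ref{lem:uniform_entropy_bound} gives
\begin{equation}
\KL(\mu|\nu)=-\mathrm H(\mu)-\int_A g\,d\mu\le K_{\Sigma,\mathrm R}+C_{\log}=:M_J,
\end{equation}
uniformly over $w\in\mathfrak B(\mathrm R)$ (this is also the bound \eqref{eq:KL_difference_bound_by_logdens} quoted later). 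Then $\KL(\mu|\nu)=\KL^{1/2}\KL^{1/2}\le M_J^{1/2}\KL(\mu|\nu)^{1/2}$.

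Collecting the two estimates, the entropy difference is bounded by $C_{\mathrm{KL}}\KL(\mu|\nu)^{1/2}$ with
\begin{equation}
C_{\mathrm{KL}}:=\sqrt2\,C_{\log}+\sqrt{K_{\Sigma,\mathrm R}+C_{\log}}.
\end{equation}
This constant is independent of $n$ and $s$, and only $w^n\in\mathfrak B(\mathrm R)$ is used, which the projection step of Algorithm \ref{algo:SAC_fixed_replay} guarantees. One should check integrability carefully. The log-density of $\mu$ is unbounded near $\partial A$, but $\mathrm H(\mu)$ is finite (Lemma \ref{lem:uniform_entropy_bound}) and $g$ is bounded, so every integral above is finite and the manipulations are justified.
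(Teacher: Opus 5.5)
Your proposal is correct and follows the paper's proof essentially line by line. You use the same identity $\mathrm H(\pi_{w^n})-\mathrm H(\pi^n)=-\KL(\pi_{w^n}|\pi^n)-\int_A\log\frac{d\pi^n}{d\Lambda}\,(\pi_{w^n}-\pi^n)(da)$, Pinsker on the second term, the uniform bound $\KL\le K_{\Sigma,\mathrm R}+2\widehat Q_{\max}/\tau$ to turn the linear KL term into a square root, and the closed form of $\pi^n$ to bound the log-density, arriving at the same constant $C_{\mathrm{KL}}=\sqrt{K_{\Sigma,\mathrm R}+2\widehat Q_{\max}/\tau}+2\sqrt2\,\widehat Q_{\max}/\tau$.
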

\begin{proof}
 Firstly observe that, by definition of the entropy, for all $s \in S$ and for all $n \geq 0$ we have
  \begin{align}
       &\mathrm{H}(\pi_{w^n}(\cdot|s))
       -
       \mathrm{H}(\pi^n(\cdot|s))\\
       &=
       -
       \int_{A} \log \frac{d\pi_{w^n}}{d\Lambda}(s,a)\pi_{w^n}(da|s)
       +
       \int_{A}\log\frac{d\pi^n}{d\Lambda}(s,a) \pi^n(da|s)\\
       &=
       -
       \int_{A}
       \log \left(
       \frac{d\pi_{w^n}}{d\pi^n}(s,a)
       \frac{d\pi^n}{d\Lambda}(s,a)
       \right)
       \pi_{w^n}(da|s)
       +
       \int_{A}
       \log\frac{d\pi^n}{d\Lambda}(s,a)
       \pi^n(da|s)\\
       &=
       -
       \operatorname{KL}(\pi_{w^n}(\cdot|s)|\pi^n(\cdot|s))
       -
       \int_{A}
       \log \frac{d\pi^n}{d\Lambda}(s,a)
       \left(\pi_{w^n} - \pi^n \right)(da|s).
  \end{align}
  Therefore, by the triangle inequality, it holds that
  \begin{align}\label{eq:triangle_KL}
       &\left|
       \mathrm{H}(\pi_{w^n}(\cdot|s))
       -
       \mathrm{H}(\pi^n(\cdot|s))
       \right|\\
       &\leq
       \operatorname{KL}(\pi_{w^n}(\cdot|s)|\pi^n(\cdot|s))
       +
       \left|
       \int_{A}
       \log \frac{d\pi^n}{d\Lambda}(s,a)
       \left(\pi_{w^n} - \pi^n \right)(da|s)
       \right|.
  \end{align}
Furthermore recall the standard identity
  \begin{align}
      \operatorname{KL}(\pi_{w^n}(\cdot|s)|\pi^n(\cdot|s))
      &=
      \int_{A}\log \frac{d\pi_{w^n}}{d\pi^n}(s,a)\pi_{w^n}(da|s)\\
      &=
      \int_{A}\log \frac{d\pi_{w^n}}{d\Lambda}(s,a)\pi_{w^n}(da|s)
      -
      \int_{A}\log \frac{d\pi^n}{d\Lambda}(s,a)\pi_{w^n}(da|s)\\
      &\leq -\mathrm{H}(\pi_{w^n}(\cdot|s)) + \left| \log \frac{d\pi^n}{d\Lambda}\right|_{B_b(S\times A)}.
  \end{align}
  Since $\pi_{w^n} \in \Pi_\Sigma$ for all $n \geq 0$, Lemma \ref{lem:uniform_entropy_bound} provides the following bound
  \begin{equation}
  -\mathrm{H}(\pi_{w^n}(\cdot|s))
  \leq K_{\Sigma,\mathrm R}.
  \end{equation}
  Therefore, it holds that
  \begin{align}\label{eq:KL_difference_bound_by_logdens}
      \operatorname{KL}(\pi_{w^n}(\cdot|s)|\pi^n(\cdot|s))^{\frac{1}{2}}
      &\leq
      \left(K_{\Sigma,\mathrm R} + \left| \log \frac{d\pi^n}{d\Lambda}\right|_{B_b(S\times A)}\right)^{\frac{1}{2}}.
  \end{align}Multiplying both sides by $\left(
      \operatorname{KL}(\pi_{w^n}(\cdot|s)|\pi^n(\cdot|s))
      \right)^{\frac{1}{2}}
      \geq 0$ we have
\begin{align}\label{eq:square_root_upper_bound}
    \operatorname{KL}(\pi_{w^n}(\cdot|s)|\pi^n(\cdot|s))
      &\leq
      \left(K_{\Sigma,\mathrm R} + \left| \log \frac{d\pi^n}{d\Lambda}\right|_{B_b(S\times A)}\right)^{\frac{1}{2}}\operatorname{KL}(\pi_{w^n}(\cdot|s)|\pi^n(\cdot|s))^{\frac{1}{2}}.
\end{align}
Using Pinsker's inequality we bound the second term of \eqref{eq:triangle_KL} by
  \begin{align}\label{eq:pinsker_used}
       \left|
       \int_{A}
       \log \frac{d\pi^n}{d\Lambda}(s,a)
       \left(\pi_{w^n} - \pi^n \right)(da|s)
       \right|
       &\leq
       \left|\log \frac{d\pi^n}{d\Lambda}\right|_{B_b(S\times A)}
       \left|\pi_{w^n}(\cdot|s)-\pi^n(\cdot|s)\right|_{\mathcal M(A)}\\
       &\leq \sqrt{2}
         \left|\log \frac{d\pi^n}{d\Lambda}\right|_{B_b(S\times A)}\left(
       \operatorname{KL}(\pi_{w^n}(\cdot|s)|\pi^n(\cdot|s))
       \right)^{\frac{1}{2}}.
  \end{align}
  Substituting \eqref{eq:square_root_upper_bound} and \eqref{eq:pinsker_used} into \eqref{eq:triangle_KL}, we arrive at
  \begin{align}\label{eq:KL_continuity_preliminary}
      &\left|
      \mathrm{H}(\pi_{w^n}(\cdot|s))
      -
      \mathrm{H}(\pi^n(\cdot|s))
      \right|\\
      &\qquad\leq
      \left(
      \left(K_{\Sigma,\mathrm R} + \left| \log \frac{d\pi^n}{d\Lambda}\right|_{B_b(S\times A)}\right)^{\frac{1}{2}}
      +
      \sqrt{2}\left| \log \frac{d\pi^n}{d\Lambda}\right|_{B_b(S\times A)}
      \right)
      \left(
      \operatorname{KL}(\pi_{w^n}(\cdot|s)|\pi^n(\cdot|s))
      \right)^{\frac{1}{2}}.
  \end{align}
We now prove that the log densities are uniformly bounded. To that end, by iterating \eqref{eq:mirror_descent} from $\pi^0=\Lambda$ we have
  \begin{align}\label{eq:mirror_descent_closed_form}
      \frac{d\pi^{n+1}}{d\Lambda}(s,a)
      &=
      \frac{1}{ \overline Z_{n+1}(s)}
      \exp\left(
      -\lambda\sum_{k=1}^{n+1}
      (1-\tau\lambda)^{n+1-k}
      \widehat{Q}^{k-1}(s,a)
      \right)
  \end{align} with $ \overline Z_{n+1}(s)=
      \int_A
      \exp\left(
      -\lambda\sum_{k=1}^{n+1}
      (1-\tau\lambda)^{n+1-k}
      \widehat Q^{k-1}(s,a)
      \right)\Lambda(da)$. Therefore, by assumption \ref{ass:Q_oracle} we have $\left| \widehat{Q}^n \right|_{B_b(S\times A)} \leq \widehat Q_{\max}$ thus it also holds that
\begin{align}\label{eq:uniform_target_log_density_bound}
    \left|\log \frac{d\pi^n}{d\Lambda}(s,a)\right|
    &=
    \left|
    -\log \overline Z_n(s)
    -
    \lambda\sum_{k=1}^{n}
    (1-\tau\lambda)^{n-k}
    \widehat Q^{k-1}(s,a)
    \right|\\
    &\leq
    \left|\log \overline Z_n(s)\right|
    +
    \left|
    \lambda\sum_{k=1}^{n}
    (1-\tau\lambda)^{n-k}
    \widehat Q^{k-1}(s,a)
    \right|\\
    &\leq \frac{2\widehat Q_{\max}}{\tau} 
\end{align}
Therefore, substituting this bound into \eqref{eq:KL_continuity_preliminary} gives
  \begin{align}
      &\left|
      \mathrm{H}(\pi_{w^n}(\cdot|s))
      -
      \mathrm{H}(\pi^n(\cdot|s))
      \right|\\
      &\leq
      \left(
      \left(K_{\Sigma,\mathrm R}+\frac{2\widehat Q_{\max}}{\tau}\right)^{\frac{1}{2}}
      +\sqrt2\left(\frac{2\widehat Q_{\max}}{\tau}\right)
      \right)
      \operatorname{KL}(\pi_{w^n}(\cdot|s)|\pi^n(\cdot|s))^{\frac{1}{2}}\\
      &=C_{\mathrm{KL}}
      \operatorname{KL}(\pi_{w^n}(\cdot|s)|\pi^n(\cdot|s))^{\frac{1}{2}},
  \end{align}
with
\begin{equation}
    C_{\mathrm{KL}}:=\left(
      \left(K_{\Sigma,\mathrm R}+\frac{2\widehat Q_{\max}}{\tau}\right)^{\frac{1}{2}}
      +\sqrt2\left(\frac{2\widehat Q_{\max}}{\tau}\right)
      \right).
\end{equation}
\end{proof}

\subsection{Proof of Lemma \ref{lemma:occupancy_L2_bounds}}
\setcounter{lemma}{6}
\begin{lemma}\label{lemma:occupancy_L2_bounds}
      Let $\rho \in \mathcal{P}(S)$ and suppose that there exists a constant $A_2<\infty$ such that, for every $s\in S$ and $a\in A$,
  \begin{equation}
      \left|
      \frac{dP}{d\rho}(s,a,\cdot)
      \right|_{L^2(\rho)}
      \leq
      A_2.
  \end{equation}
  Then for any $\pi \in \mathcal{P}(A|S)$ it holds that
      \begin{equation}
          \left|
         \frac{d d_{\rho}^{\pi}}{d\rho}
         \right|_{L^2(\rho)}
         \leq (1-\gamma)+\gamma A_2
      \end{equation}
  \end{lemma}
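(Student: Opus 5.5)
The plan is to write the occupancy measure as a mixture and bound its density term by term. By \eqref{eq:occupancy_s} and \eqref{eq:occupancy_defn},
\begin{equation}
d_\rho^\pi(ds') = (1-\gamma)\rho(ds') + \gamma\int_S\int_A P(ds'|s,a)\,\nu(ds,da),
\end{equation}
where $\nu(ds,da):=\pi(da|s)\,\mu(ds)$ and $\mu:=(1-\gamma)\sum_{n\geq0}\gamma^n\rho P_\pi^n=d_\rho^\pi$ is a probability measure on $S$. This identity is the standard one-step decomposition $d_\rho^\pi=(1-\gamma)\rho+\gamma\, d_\rho^\pi P_\pi$, obtained by shifting the index in the series for $d^\pi$. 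Since $\nu$ is a probability measure on $S\times A$, the second term is a mixture of the kernels $P(\cdot|s,a)$.

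The next step is to show that this mixture is absolutely continuous with respect to $\rho$ and to identify its density. By hypothesis each $P(\cdot|s,a)\ll\rho$ with density $p_{s,a}:=\frac{dP}{d\rho}(s,a,\cdot)$. Assuming joint measurability of $(s,a,s')\mapsto p_{s,a}(s')$, which I take as implicit in the hypothesis, Tonelli gives that
\begin{equation}
g(s'):=\int_{S\times A}p_{s,a}(s')\,\nu(ds,da)
\end{equation}
is a density of $\int P(\cdot|s,a)\,\nu(ds,da)$ with respect to $\rho$. Hence $d_\rho^\pi\ll\rho$ with density $(1-\gamma)+\gamma g$.

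It then remains to bound the norm. By the triangle inequality in $L^2(\rho)$ and $|1|_{L^2(\rho)}=1$,
\begin{equation}
\left|\frac{dd_\rho^\pi}{d\rho}\right|_{L^2(\rho)}\leq (1-\gamma)+\gamma|g|_{L^2(\rho)}.
\end{equation}
For $|g|_{L^2(\rho)}$ I will use Minkowski's integral inequality, or equivalently Jensen applied to the convex map $f\mapsto|f|_{L^2(\rho)}$ under the probability measure $\nu$:
\begin{equation}
|g|_{L^2(\rho)}\leq\int|p_{s,a}|_{L^2(\rho)}\,\nu(ds,da)\leq A_2.
\end{equation}
This gives the claimed bound.

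The only delicate point is the measurability and Tonelli step that justifies pulling the density through the mixture. The argument requires a jointly measurable version of $\frac{dP}{d\rho}$. I expect this is available because $S$ is Polish, so a jointly measurable Radon--Nikodym density of the kernel exists, for example via the Doob martingale construction. The rest is routine.
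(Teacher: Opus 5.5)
Your proof is correct, and it organises the argument differently from the paper. The paper works term by term with the series $d_\rho^\pi=(1-\gamma)\sum_{n\geq0}\gamma^n\int_S P_\pi^n(\cdot|s)\rho(ds)$. It first identifies $\frac{dP_\pi}{d\rho}(s,\cdot)=\int_A\frac{dP}{d\rho}(s,a,\cdot)\pi(da|s)$ and bounds its $L^2(\rho)$-norm by $A_2$ via Minkowski. A second Minkowski step then shows that every $n$-step marginal $\int_S P_\pi^n(\cdot|s)\rho(ds)$, $n\geq1$, has a $\rho$-density of $L^2(\rho)$-norm at most $A_2$. Finally it sums the series by applying the triangle inequality to partial sums and passing to the limit.

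You instead use the one-step identity $d_\rho^\pi=(1-\gamma)\rho+\gamma\,d_\rho^\pi P_\pi$. This collapses the whole tail $n\geq1$ into a single mixture of the kernels $P(\cdot|s,a)$ under the probability measure $\pi(da|s)\,d_\rho^\pi(ds)$ on $S\times A$. One application of Minkowski over $S\times A$ then suffices. You also avoid the limit step, namely identifying the $L^2$-limit of the partial-sum densities with $\frac{d d_\rho^\pi}{d\rho}$, which the paper passes over quickly.

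The core fact is the same in both proofs: a mixture of the $P(\cdot|s,a)$ under any probability measure has a $\rho$-density of $L^2(\rho)$-norm at most $A_2$. Your route is shorter and shows that only this fact and the convex-combination structure of $d_\rho^\pi$ matter. The paper's route gives the by-product that each $n$-step marginal satisfies the bound individually.

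Your caveat about needing a jointly measurable version of $(s,a,s')\mapsto\frac{dP}{d\rho}(s,a,s')$ is well placed. The paper uses this silently when it integrates the density in $a$ and then in $s$. As you say, such a version exists because $\mathcal B(S)$ is countably generated.
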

  \begin{proof}
  Firstly we seek to identify the density of $P_{\pi}$ with respect to $\rho$ for any $\pi \in \mathcal{P}(A|S)$.
    To that end, for any $B \in \mathcal{B}(S)$ and any $s\in S$ we have
    \begin{align}
        P_{\pi}(B|s)
        &=
        \int_{A}P(B|s,a)\pi(da|s) \\
        &=
        \int_{A}
        \int_{B}
        \frac{d P}{d\rho}(s,a,s')
        \rho(ds')
        \pi(da|s) \\
        &=
        \int_B
        \left(
        \int_A
        \frac{dP}{d\rho}(s,a,s')
        \pi(da|s)
        \right)
        \rho(ds').
    \end{align}
    Since this holds for all $B \in \mathcal{B}(S)$, we have
    \begin{equation}
        \frac{dP_{\pi}}{d\rho}(s,s')
        =
        \int_A
        \frac{dP}{d\rho}(s,a,s')
        \pi(da|s).
    \end{equation}
    Therefore, by Minkowski's integral inequality it holds that for all $s\in S$,
    \begin{align}\label{eq:bound_on_L2_P_pi}
        \left|
        \frac{dP_{\pi}}{d\rho}(s,\cdot)
        \right|_{L^2(\rho)}
        &=
        \left(
        \int_S
        \left|
        \int_A
        \frac{dP}{d\rho}(s,a,s')
        \pi(da|s)
        \right|^2
        \rho(ds')
        \right)^{\frac{1}{2}}\\
        &\leq
        \int_A
        \left(
        \int_S
        \left|
        \frac{dP}{d\rho}(s,a,s')
        \right|^2
        \rho(ds')
        \right)^{\frac{1}{2}}
        \pi(da|s)\\
        &\leq
        A_2.
    \end{align}

    We now prove a bound that holds for any initial state distribution $\rho\in\mathcal P(S)$. Fix $n\geq1$. We wish to
  identify the density of $\int_S P_\pi^n(\cdot|s)\rho(ds)$ with respect to $\rho$. To that end, observe that for
  any $B\in\mathcal B(S)$,
    \begin{align}
        \int_S P_\pi^n(B|s)\rho(ds)
        &=
        \int_S
        \int_S
        P_\pi(B|s')
        P_\pi^{n-1}(ds'|s)
        \rho(ds)\\
        &=
        \int_S
        P_\pi(B|s')
        \left(
        \int_S
        P_\pi^{n-1}(ds'|s)
        \rho(ds)
        \right) \\
        &=
        \int_S
        \int_B
        \frac{dP_\pi}{d\rho}(s',s'')
        \rho(ds'')
        \left(
        \int_S
        P_\pi^{n-1}(ds'|s)
        \rho(ds)
        \right)\\
        &=
        \int_B
        \left(
        \int_S
        \frac{dP_\pi}{d\rho}(s',s'')
        \left(
        \int_S
        P_\pi^{n-1}(ds'|s)
        \rho(ds)
        \right)
        \right)
        \rho(ds'').
    \end{align}
    Since this holds for all $B\in\mathcal B(S)$, it follows that
    \begin{equation}\label{eq:any_eta_density}
        \frac{d\left(\int_S P_\pi^n(\cdot|s)\rho(ds)\right)}{d\rho}(s'')
        =
        \int_S
        \frac{dP_\pi}{d\rho}(s',s'')
        \left(
        \int_S
        P_\pi^{n-1}(ds'|s)
        \rho(ds)
        \right).
    \end{equation}
    Consequently, by Minkowski's integral inequality and \eqref{eq:bound_on_L2_P_pi}, it holds that
    \begin{align}\label{eq:any_eta_L2_bound}
        \left|
        \frac{d\left(\int_S P_\pi^n(\cdot|s)\rho(ds)\right)}{d\rho}
        \right|_{L^2(\rho)}
        &=
        \left(
        \int_S
        \left|
        \int_S
        \frac{dP_\pi}{d\rho}(s',s'')
        \left(
        \int_S
        P_\pi^{n-1}(ds'|s)
        \rho(ds)
        \right)
        \right|^2
        \rho(ds'')
        \right)^{\frac{1}{2}}\\
        &\leq
        \int_S
        \left|
        \frac{dP_\pi}{d\rho}(s',\cdot)
        \right|_{L^2(\rho)}
        \left(
        \int_S
        P_\pi^{n-1}(ds'|s)
        \rho(ds)
        \right)\\
        &\leq
        A_2
        \int_S
        \left(
        \int_S
        P_\pi^{n-1}(ds'|s)
        \rho(ds)
        \right)\\
        &=
        A_2,
    \end{align}
    where in the final equality we used that $\int_S P_\pi^{n-1}(ds'|s)\rho(ds)$ is a probability measure on $S$. Now recall that by definition, we have
    \begin{align}
        d_{\rho}^{\pi}(ds')
        &=
        (1-\gamma)
        \sum_{n=0}^{\infty}
        \gamma^n
        \int_S P_\pi^n(ds'|s)\rho(ds),
    \end{align}
    with $P_\pi^0(ds'|s)=\delta_s(ds')$. Hence applying the triangle inequality in $L^2(\rho)$ to the partial sums and then taking the limit, we obtain
    \begin{align}
        \left|
        \frac{d d_{\rho}^{\pi}}{d\rho}
        \right|_{L^2(\rho)}
        &\leq
        (1-\gamma)
        \left|
        \frac{d\rho}{d\rho}
        \right|_{L^2(\rho)}
        +
        (1-\gamma)
        \sum_{n=1}^{\infty}
        \gamma^n
        \left|
        \frac{d\left(\int_S P_\pi^n(\cdot|s)\rho(ds)\right)}{d\rho}
        \right|_{L^2(\rho)}\\
        &\leq
        (1-\gamma)
        +
        (1-\gamma)
        \sum_{n=1}^{\infty}
        \gamma^n A_2\\
        &=
        (1-\gamma)+\gamma A_2.
    \end{align}
  \end{proof}

\subsection{Proof of Lemma \ref{lem:uniform_actor_gradient_bound}}
\begin{lemma}[Uniform boundedness of the actor gradient]
      \label{lem:uniform_actor_gradient_bound}
      Let Assumption \ref{ass:Q_oracle} hold. Then for all $n \in \mathbb{N}$ and all $w \in \mathfrak{B}(\mathrm R)$, it holds that 
      \begin{equation}
          \left|\nabla_wJ_{n+1}(w,\pi^{n+1})\right|_2
          \leq
          2\sqrt M+
          \sqrt{\frac{2}{\pi}}
          \frac{\widehat Q_{\max}}{\tau}\sum_{i=1}^M\frac{1}{\sigma_i}.
      \end{equation}
      Moreover, for any $n \in \mathbb{N}$, the same bounds holds for $ \left|\nabla_wJ_{n+1}(w,\pi^{n+1}_{G})\right|_2$.
    \end{lemma}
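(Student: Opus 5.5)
We start from the decomposition \eqref{eq:uniform_J_decomp} obtained in the proof of Theorem~\ref{thm:strong_convexity_L_smoothness_objective}:
\[
J_{n+1}(w,\pi^{n+1})=\int_S\int_{\mathbb R^M}\widetilde F_n(s,u)\,q_w(du|s)\,\rho_{n+1}(ds)+C_n,
\]
where $\widetilde F_n(s,u)=2\sum_{i=1}^M\log\cosh u_i+F_n(s,u)$. Since $C_n$ does not depend on $w$, only the first term contributes to the gradient.

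For each state we reparametrise by $u=m_w(s)+\Sigma^{\frac12}z$ with $z\sim q^1$. This gives
\[
\nabla_w\int\widetilde F_n(s,u)\,q_w(du|s)=x(s)\,\nabla_m\Big(\int\widetilde F_n(s,m+\Sigma^{\frac12}z)\,q^1(dz)\Big)\Big|_{m=m_w(s)}.
\]
Since $|x(s)|_{\op}\le1$, it suffices to bound the Euclidean norm of the $m$-gradient. We treat the two pieces of $\widetilde F_n$ differently.

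For the $\log\cosh$ part we differentiate under the integral. The $i$-th component is $2\int\tanh(m_i+\sigma_iz_i)\,q^1(dz)$, which has modulus at most $2$. The vector of these components therefore has Euclidean norm at most $2\sqrt M$.

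The $F_n$ part requires more care, and this is the step I expect to be the main obstacle. The difficulty is that $\nabla_u F_n$ involves $\nabla_a\widehat Q^k$, which Assumption~\ref{ass:Q_oracle} does not bound uniformly: $\widehat Q^k(s,\cdot)$ is only $C^2$ on the open set $A$, and the chain-rule factor $1-\tanh^2$ does not rescue this. Instead we move the derivative onto the Gaussian density via Stein's identity (Gaussian integration by parts in $m$):
\[
\partial_{m_i}\int F_n(s,m+\Sigma^{\frac12}z)\,q^1(dz)=\frac1{\sigma_i}\int F_n(s,m+\Sigma^{\frac12}z)\,z_i\,q^1(dz).
\]
This identity holds because $F_n$ is bounded and measurable, so the Gaussian smoothing is differentiable in $m$ and dominated convergence applies. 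Moreover $|F_n|_{B_b}\le\lambda\sum_k(1-\tau\lambda)^{n-k}\widehat Q_{\max}\le\widehat Q_{\max}/\tau$, using $0<\tau\lambda<1$, and $\int|z_i|\,q^1(dz)=\sqrt{2/\pi}$. Each component is therefore at most $\sqrt{2/\pi}\,\widehat Q_{\max}/(\tau\sigma_i)$. Bounding the Euclidean norm by the $\ell^1$ norm gives $\sqrt{2/\pi}\,\frac{\widehat Q_{\max}}{\tau}\sum_i\sigma_i^{-1}$.

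Combining the two pieces by the triangle inequality and integrating against the probability measure $\rho_{n+1}$ yields the stated bound; the interchange of gradient and $\rho_{n+1}$-integral is justified because the state-wise bound is uniform. For the Gibbs target the same computation applies with $F_n=\frac1\tau\widehat Q^n(s,\tanh u)$, which is again bounded by $\widehat Q_{\max}/\tau$, so the identical constant results.
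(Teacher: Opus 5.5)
Your proposal is correct and follows essentially the paper's route: the $\log\cosh$ (entropy) part is differentiated pathwise to give the $2\sqrt M$ bound, and the $F_n$ part is differentiated through the Gaussian density. Your Stein/integration-by-parts identity is exactly the paper's score-function formula $-\int F_n\,x(s)\Sigma^{-1}(u-m_w(s))\,q_w(du|s)$ written in the $z$ variable, and it is combined, as in the paper, with $|F_n|\le \widehat Q_{\max}/\tau$ and the Gaussian first absolute moment. The only cosmetic difference is that you start from the decomposition \eqref{eq:uniform_J_decomp} rather than splitting $J_{n+1}$ into entropy and cross-entropy terms; the two agree up to $w$-independent constants.
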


  \begin{proof}
  Firstly observe that for every $n\in\mathbb N$ and every $w\in\mathfrak{B}(\mathrm R)$, we can write
    \begin{align}\label{eq:two_terms}
        J_{n+1}(w,\pi^{n+1})
        &= \int_{S} \KL(\pi_w(\cdot|s)|\pi^{n+1}(\cdot|s))\rho_{n+1}(ds)\\
        &=\int_{S} \int_{A} \log \frac{d\pi_{w}}{d\pi^{n+1}}(s,a)\pi_w(da|s)\rho_{n+1}(ds) \\
        &= \int_{S} \int_{A} \log\left(\frac{d\pi_{w}}{d\Lambda}(s,a)\frac{d\Lambda}{d\pi^{n+1}}(s,a)\right)\pi_w(da|s)\rho_{n+1}(ds)\\
        &=
        -\int_S
        \mathrm{H}(\pi_w(\cdot|s))
        \rho_{n+1}(ds)-
        \int_S
        \int_A
        \log\frac{d\pi^{n+1}}{d\Lambda}(s,a)
        \pi_w(da|s)
        \rho_{n+1}(ds).
    \end{align}
    We now bound the gradients of these two terms separately. For the first term, using Lemma \ref{lem:uniform_entropy_bound}, for every $s\in S$ we have
    \begin{align}
        -\mathrm{H}(\pi_w(\cdot|s))
        &=
        2\int_{\mathbb R^M}\sum_{i=1}^M\log\cosh(u_i)q_w(du|s)
        -\log\left(\frac{(2\pi)^{\frac{M}{2}}\sqrt{\det\Sigma}}{2^M}\right)-\frac M2\\
        &=
        2\int_{\mathbb R^M}\sum_{i=1}^M\log\cosh(m_w(s)_i+\sigma_i z_i)q^1(dz)
        -\log\left(\frac{(2\pi)^{\frac{M}{2}}\sqrt{\det\Sigma}}{2^M}\right)-\frac M2.
    \end{align}
    Therefore, differentiating under the integral, it follows that
    \begin{equation}
        \nabla_w
        \mathrm{H}(\pi_w(\cdot|s))
        =
        -2\int_{\mathbb R^M}
        x(s)\tanh(m_w(s)+\Sigma^{\frac{1}{2}}z)q^1(dz).
    \end{equation}
    Hence, using $|\tanh u|_2\leq\sqrt M$ and $|x(s)|_{\op}\leq1$ for all $s\in S$, it holds that
    \begin{equation}\label{eq:first_actor_gradient_bound}
        \left|\nabla_w \mathrm{H}(\pi_w(\cdot|s))\right|_2
        \leq
        2\sqrt M.
    \end{equation}

    Before proceeding with the second term, let us simplify the notation with
    \begin{equation}
        F_n(s,u)
        :=
        \lambda\sum_{k=0}^n(1-\tau\lambda)^{n-k}\widehat Q^k(s,\tanh u),
    \end{equation}
    and thus recall that
    \begin{equation}
    \frac{d\pi^{n+1}}{d\Lambda}(s,\tanh u)
    =
    \frac{1}{\overline Z_{n+1}(s)}\exp\left(-F_n(s,u)\right).
    \end{equation}
    Let $\Lambda_{\mathbb R^M}$ denote Lebesgue measure on $\mathbb R^M$. By Definition \ref{def:pushforward-pullback} it holds that
  \begin{align}
      &\int_A
      \log\frac{d\pi^{n+1}}{d\Lambda}(s,a)\pi_w(da|s)
      \\
      &=  \int_A
      \log\frac{d\pi^{n+1}}{d\Lambda}(s,a)(\tanh)_{\#}q_w(da|s)\\
      &=\int_{\mathbb R^M}
      \log\frac{d\pi^{n+1}}{d\Lambda}(s,\tanh u)q_w(du|s) \\
      &=\int_{\mathbb R^M}
      \left(-F_n(s,u)-\log\overline Z_{n+1}(s)\right) q_w(du|s) \\
      &\begin{aligned}
      &=\frac{1}{(2\pi)^{\frac M2}\sqrt{\det\Sigma}}
      \int_{\mathbb R^M}\left(-F_n(s,u)-\log\overline Z_{n+1}(s)\right)\\
      &\qquad\qquad
      \exp\left(-\frac12(u-m_w(s))^\top\Sigma^{-1}(u-m_w(s))\right)
      \Lambda_{\mathbb R^M}(du).
      \end{aligned}
  \end{align}
Therefore differentiation under the integral also gives 
  \begin{align}
      &\nabla_w
      \int_A
      \log\frac{d\pi^{n+1}}{d\Lambda}(s,a)\pi_w(da|s)
      =
      -\int_{\mathbb R^M}
      F_n(s,u)
      x(s)\Sigma^{-1}(u-m_w(s))
      q_w(du|s).
  \end{align}
  Here the term containing $\log\overline Z_{n+1}(s)$ vanishes because $\int_{\mathbb R^M}q_w(du|s)=1$ for every $w\in\mathfrak B(\mathrm R)$.
  Hence for all $s \in S$ it holds that
    \begin{align}
        \left|
        \nabla_w
        \int_A
        \log\frac{d\pi^{n+1}}{d\Lambda}(s,a)\pi_w(da|s)
        \right|_2
        &\leq
        |F_n|_{B_b(S\times\mathbb R^M)}
        \int_{\mathbb R^M}
        |\Sigma^{-1}(u-m_w(s))|_2
        q_w(du|s) \\
        &\leq
        \frac{\widehat Q_{\max}}{\tau}
        \int_{\mathbb R^M}
        |\Sigma^{-1}(u-m_w(s))|_2
        q_w(du|s),
    \end{align}
   Since $q_w(\cdot|s)$ is Gaussian with mean $m_w(s)$ and covariance $\Sigma$, the Gaussian first moment \citep[Ch.~13]{johnson1994continuous} yields
  \begin{align}
      \int_{\mathbb R^M}
      |\Sigma^{-1}(u-m_w(s))|_2q_w(du|s)
      &\leq
      \sum_{i=1}^M\frac{1}{\sigma_i^2}
      \int_{\mathbb R^M}|u_i-m_w(s)_i|q_w(du|s)\\
      &=
      \sqrt{\frac{2}{\pi}}\sum_{i=1}^M\frac{1}{\sigma_i}.
  \end{align}
  Hence it holds that
  \begin{equation}\label{eq:log_density_gradient_bound}
      \left|
      \nabla_w
      \int_A
      \log\frac{d\pi^{n+1}}{d\Lambda}(s,a)\pi_w(da|s)
      \right|_2
      \leq
      \sqrt{\frac{2}{\pi}}\frac{\widehat Q_{\max}}{\tau}\sum_{i=1}^M\frac{1}{\sigma_i}.
  \end{equation}

Combining \eqref{eq:first_actor_gradient_bound} and \eqref{eq:log_density_gradient_bound}, and using that $\rho_{n+1}$ is a probability measure, gives
      \begin{align}
          \left|\nabla_wJ_{n+1}(w,\pi^{n+1})\right|_2
          &\leq
          2\sqrt M+
          \sqrt{\frac{2}{\pi}}\frac{\widehat Q_{\max}}{\tau}\sum_{i=1}^M\frac{1}{\sigma_i}.
      \end{align}
      To bound $ \left|\nabla_wJ_{n+1}(w,\pi^{n+1}_{G})\right|_2$, the same arguments hold with $F_n(s,u) = \frac{\widehat{Q}^n(s, \tanh u)}{\tau} \leq \frac{\widehat{Q}_{\max}}{\tau}$.
  \end{proof}
\setcounter{lemma}{9}

\section{Implementation details}\label{sec:implementation_details}
In this section we give an overview on the environment used in Figures \ref{fig:critic_curvature_performance} and \ref{fig:actor_tracking_lambda}. In order to validate the theory, we restrict to an MDP in which the critics can be solved exactly. To that end, we require that for all $s \in S$, $c(s,\cdot) : A \mapsto \mathbb{R}$ is quadratic and the transition kernel is affine in the action variable. To ease notation, let $g :S \mapsto \mathbb{R}^M$ be $g(s) = f(s)v$ with $f(s) = -\frac25+\frac45\frac{s}{|S|-1}$ and $v = (1,-\frac12,1,-\frac12,\ldots)\in\mathbb R^M$. Denoting $\bar{a} = \frac{1}{M}\sum_{i=1}^{M}a_i$ with $a = (a_1,a_2,\dots,a_M) \in (-1,1)^M$ and $s^+:=\min(s+1,|S|-1)$, $s^-:=\max(s-1,0)$, the cost function and transition dynamics are defined as
\begin{equation}
c(s,a)=\frac{1}{10}\left|a-g(s)\right|_2^2+\frac{3}{100(M-1)}\sum_{i<j}(a_i-a_j)^2+\frac{3}{10}\frac{s}{|S|-1},
\end{equation}
\begin{equation}
    P(s'|s,a) =
\frac{1}{10|S|}
+\frac{9}{10}
\left(
\frac{1+\bar a}{2}\delta_{s^+}(s')
+\frac{1-\bar a}{2}\delta_{s^-}(s')
\right)
\end{equation}
Given that the state space is tabular, we parametrise the actor using the canonical basis function on $\mathbb{R}^{|S|}$, that is $x(s) = e_s \otimes I_{M}$ so that $m_w(s)=x(s)^\top w$, $|x(s)|_{\op}=1$ which in turn implies that $\lambda_{P} = \frac{1}{|S|}$. We take $\Sigma = 0.15^2 I_{M}$ and define the projection radius in Algorithm \ref{algo:SAC_fixed_replay} as $\mathrm{R} = \sqrt{|S|M}$.

In Figure \ref{fig:critic_curvature_performance}, to compare the effect on Algorithm \ref{algo:SAC_fixed_replay} in the regimes where the condition of Theorem \ref{thm:strong_convexity_L_smoothness_objective} is satisfied and not, we firstly learn the exact critic through a linear solve (Theorem \ref{thm:dynamics_programming}), then add perturbations such that both resulting perturbed critics have the same approximation error $\varepsilon_{\mathrm{crit}}$ but different curvatures measures through $R_{\widehat{Q}}$. Concretely, we construct two sequences of critics such that
\begin{equation}
\widehat Q^n(s,a)
=
Q^{\pi_{w^n}}_\tau(s,a)
+\varepsilon_{\mathrm{crit}}\sum_{j=1}^{4}\alpha_j\cos\left(\omega_j^\top a\right)
\end{equation}
with amplitudes $\alpha=(0.55,0.20,0.15,0.10)$, $\sum_{j=1}^4\alpha_j=1$, and such that $\omega_j=\frac{r_j}{\sqrt M}\mathbf 1_M$. Then we construct two instances of $r \in \mathbb{R}^{4}$, with $r=(0.6,0.8,1,1.2)$ and $r=(6,7,8,9)$
giving $\frac{\mathrm{R}_{\widehat{Q}}}{2}<\tau$ and $\frac{\mathrm{R}_{\widehat{Q}}}{2}>\tau$ respectively, while $\left|\widehat Q^n-Q^{\pi_{w^n}}_\tau\right|_{B_b(S\times A)}=\varepsilon_{\mathrm{crit}}$ in both.

In Figure \ref{fig:actor_tracking_lambda}, we use the same MDP with varying dimensions but with exact critics ($\varepsilon_{\mathrm{crit}} = 0$) in order to isolate the effect of the proximal step-size/ penalty $\lambda$ on the actor tracking error. Finally, in all instances, we set the initial state distribution to $\rho(s) = \frac{1}{|S|}$, the entropy temperature to $\tau = 1$, discount factor $\gamma = 0.9$, the actor step size to $h= 0.01$ and the exponential moving average weight in the replay buffer dynamics to $\chi = 1$.
\end{document}